\def\eqref#1{equation~\ref{#1}}
\def\1{\bm{1}}
\DeclareMathAlphabet{\mathsfit}{\encodingdefault}{\sfdefault}{m}{sl}
\SetMathAlphabet{\mathsfit}{bold}{\encodingdefault}{\sfdefault}{bx}{n}
\newtheorem{definition}{Definition}
\newtheorem{theorem}{Theorem}
\newtheorem{lemma}{Lemma}
\newtheorem{proposition}{Proposition}
\newcommand{\pseudoiid}{\textsc{Pseudo-iid}}
\newcommand{\iid}{\textsc{iid}}
\title{Beyond \iid\ weights: sparse and low-rank deep Neural Networks are also Gaussian Processes}
\author{Thiziri Nait Saada, Alireza Naderi \& Jared Tanner \\
Mathematical Institute\\
University of Oxford\\
\texttt{\{naitsaadat, naderi, tanner\}@maths.ox.ac.uk} \\
}
\begin{document}

\maketitle

\begin{abstract}

The infinitely wide neural network has proven a useful and manageable mathematical model that enables the understanding of many phenomena appearing in deep learning. One example is the convergence of random deep networks to Gaussian Processes that allows a rigorous analysis of the way the choice of activation function and network weights impacts the training dynamics. In this paper, we extend the seminal proof of \cite{Matthews_2018} to a larger class of initial weight distributions (which we call \pseudoiid), including the established cases of \iid\ and orthogonal weights, as well as the emerging low-rank and structured sparse settings celebrated for their computational speed-up benefits. We show that fully connected and convolutional networks initialized with \pseudoiid\ distributions are all effectively equivalent up to their variance. Using our results, one can identify the Edge-of-Chaos for a broader class of neural networks and tune them at criticality in order to enhance their training. Moreover, they enable the posterior distribution of Bayesian Neural Networks to be tractable across these various initialization schemes.


\end{abstract}

\section{Introduction}\label{sec:intro}

Deep neural networks are often studied at random initialization, where in the limit of infinite width, they have been shown to generate intermediate entries which approach Gaussian Processes.
Seemingly this was first studied for one-layer networks in \cite{neal_1995} when the weight matrices have identically and independently distributed (\iid) Gaussian entries, and became a popular model for deep networks following the seminal results for deep fully connected networks in \cite{Lee_2017} and \cite{Matthews_2018}. Specifically, the latter formulated a proof strategy that quickly became a cornerstone in the field, paving the way for various extensions of the Gaussian Process limit (see Section \ref{sec:related_work}). The resulting Gaussian Process has proven to be of practical interest for at least two reasons. First, it emerges as a mathematically motivated choice of prior in the context of Bayesian Neural Networks. Secondly, it helps the analysis of exploding and vanishing gradient phenomena as done in \cite{schoenholz2017deep} and \cite{pennington2018emergence}, amongst other network properties.

In this paper, we extend the proof of the Gaussian Process from \cite{Matthews_2018} to a larger class of initial weight distributions (which we call \pseudoiid). \pseudoiid\ distributions include the already well-known cases of \iid\ and orthogonal weights. Moreover, we explore other important settings that conform to our more general conditions (e.g. structured sparse and low-rank), yet for which the Gaussian Process limit could not be derived previously due to their violation of the \iid\ assumption. 

\paragraph{Why studying low-rank and structured sparse networks at initialization?} In recent years, deep learning models have significantly increased in size, while concurrently smaller AI chips have been preferred. 
 The most widely studied approaches for bridging the gap between these two aims are to reduce the number of parameters by either pruning the network to be sparse or using low-rank factors. 
The \textit{lottery ticket hypothesis} \cite {frankle2019lottery} has given empirical evidence of the existence of pruned sub-networks (i.e. \textit{winning tickets}) achieving test accuracy that is comparable and even superior to their original network. Unfortunately most pruning methods produce sparsity patterns that are unstructured, thereby limiting potential hardware accelerations (see \cite{hoefler2021sparsity}). Remarkably, \cite{chen2022coarsening} revealed some \textit{winning tickets} with structured sparsity, that can be exploited by efficient hardware accelerators on GPUs and FPGAs to reduce greatly their in-memory access, storage, and computational burden (\cite{zhu2020efficient}). 
Likewise, the emergence of low-rank structures within deep networks during training is observed where the low-rank factors are aligned with learned classes \cite{han2022neural}. These factors are shown to improve the efficiency and accuracy of networks, similarly to the lottery ticket hypothesis \cite{8035076, Yang_2020_CVPR_Workshops,yang2020learning, 9746451,price2023improved, NEURIPS2021_081be9fd, berlyand2023enhancing}. If such computationally efficient \textit{winning tickets} exist, why would one want to train a dense (or full-rank) network at a high computational price only to later reduce it to be sparse (or low-rank)? This question has given rise to single-shot approaches, often referred to as Pruning at Initialization (PaI) such as SNIP \cite{lee2019snip}, SynFlow \cite{tanaka2020pruning}, GraSP \cite{wang2020picking}, NTT \cite{liu2020finding}. Consequently, the present work delves into the properties of such parsimonious networks at initialization, within the larger framework of \pseudoiid\ regimes.


\subsection{Related work}\label{sec:related_work}


To the best of our knowledge, the Gaussian Process behaviour in the infinite width regime was first established by \cite{neal_1995} in the case of  one-layer fully connected networks when the weights are \iid\ sampled from standard distributions. The result has then been extended in \cite{Matthews_2018} to deep fully connected networks, where the depth is fixed, the weights are distributed as \iid\ Gaussians and the widths of the hidden layers grow jointly to infinity. Jointly scaling the network width substantially distinguishes their method of proof from the approach taken in \cite{Lee_2017}, where the authors considered a sequential limit analysis through layers. Indeed, analyzing the limiting distribution at one layer when the previous ones have already converged \cite{Lee_2017} significantly differs from examining it when the preceding layers are jointly converging to their limits \cite{Matthews_2018}. The latter approach now serves as a foundational machinery in the infinite width analysis from which numerous subsequent works followed. This paper is one of them.

Since this Gaussian Process behaviour has been established, two main themes of research have further been developed. The first consists of the extension of such results for more general and complex architectures such as convolutional networks with many channels \cite{novak2020bayesian}, \cite{garrigaalonso_2019} or any modern architectures composed of fully connected, convolutional or residual connections, as summarized in \cite{yang2021_tp1}, using the Tensor Program terminology. The second line of work concerns the generalization of this Gaussian Process behaviour to other possible weight distributions, such as orthogonal weights in \cite{huang_2021} or, alternatively, any \iid\ weights with finite moments as derived in \cite{hanin_2021}. Note that the orthogonal case does not fit into the latter as entries are exchangeable but not independent. Our contribution fits into this line of research, where we relax the independence requirement of the weight matrix entries and instead consider \pseudoiid\ distributions made of uncorrelated and exchangeable random variables. This broader class of distributions, Definition\ \ref{pseudoiid}, enables us to present a unified proof that strictly generalizes the approaches taken so far and encompasses all of them, for two types of architectures, namely, fully connected and convolutional networks.

\subsection{Organization of the paper}
In Section \ref{FCN}, we focus on fully connected neural networks, formally stating the \pseudoiid\ regime and its associated Gaussian Process (GP) in Theorem \ref{thm:main_FCN}, whose rigorous proof is provided in Appendix \ref{app:proof_fcn}. Our \pseudoiid\ regime unifies the previously studied settings of \iid\ and orthogonal weights, while also allowing for novel initialization schemes conform to the Gaussian Process limit, such as low-rank and structured sparse weights. We expand the definition of \pseudoiid\ distributions to convolutional kernels and extend the resulting Gaussian Process limit to convolutional neural networks (CNNs) in Section \ref{CNN}. In Section \ref{Ex}, we provide examples of \pseudoiid\ distributions in practice, supporting our theoretical results with numerical simulations. Moreover, we allude to two important consequences of the GP limit in \ref{sec:eoc}, namely the analysis of stable initialization of deep networks on the so-called Edge-of-Chaos (EoC), as well as tractability for the posterior distribution of Bayesian Neural Networks, in our more expansive \pseudoiid\ regime. Lastly, in Section \ref{conc}, we review our main contributions and put forward some further research directions.

\section{Gaussian Process behaviour in the \pseudoiid\ regime} \label{FCN}

We consider an untrained fully connected neural network with width $N_{\ell}$ at layer $\ell \in \{1, \cdots, L+1\}$. Its weights $W^{(\ell)} \in \mathbb{R}^{N_{\ell} \times N_{\ell - 1}}$ and biases $b^{(\ell)} \in \mathbb{R}^{N_{\ell}}$ at layer $\ell$ are sampled from centered probability distributions, respectively $\mu^{(\ell)}_W$ and $\mu^{(\ell)}_b$. Starting with such a network, with nonlinear activation $\phi : \mathbb{R} \to \mathbb{R}$, the propagation of any input data vector $z^{(0)}\coloneqq x \in \mathcal{X} \subseteq \mathbb{R}^{N_0}$ through the network is given by the following equations, 
\begin{align}\label{eq:h}
    h_i^{(\ell)} (x) = \sum\limits_{j=1}^{N_{\ell - 1}} W^{(\ell)}_{ij} z^{(\ell-1)}_j(x) + b^{(\ell)}_i, \qquad z^{(\ell)}_j(x) = \phi(h^{(\ell)}_j(x)),
\end{align}
where $h^{(\ell)}(x) \in \mathbb{R}^{N_{\ell}}$ is referred to as the preactivation vector at layer $\ell$, or feature maps. 


In this work, we consider all biases to be drawn \iid\ from $\mathcal{N}(0, \sigma_b^2)$ in both fully-connected and convolutional cases. Gaussian biases are in fact needed to ensure the gaussianity of the feature maps. Our focus will thus be restricted to the weight matrices and tensors that result in a Gaussian Process. \pseudoiid\ distributions (formally introduced in Def.\ \ref{pseudoiid}) only need to meet some moment conditions along with an exchangeability requirement that we define below.

\begin{definition}[Exchangeability] \label{exch}
    Let $X_1, \cdots, X_n$ be scalar, vector or matrix-valued random variables. We say $(X_i)_{i=1}^{n}$ are exchangeable if their joint distribution is invariant under permutations, i.e. $(X_1, \cdots, X_n) \overset{d}{=} (X_{\sigma(1)}, \cdots, X_{\sigma(n)})$ for all permutations $\sigma: [n] \rightarrow [n]$. A random matrix is called row- (column-) exchangeable if its rows (columns) are exchangeable random vectors, respectively.
\end{definition}

A row-exchangeable and column-exchangeable weight matrix $W \in \mathbb{R}^{m \times n}$ is not in general entry-wise exchangeable, which means its distribution is not typically invariant under arbitrary permutations of its entries; particularly, 
out of $(mn)!$ possible permutations of the entries, $W$ only needs to be invariant under $m!n!$ of them --- an exponentially smaller number. We can now define the family of \pseudoiid\ distributions.


\begin{definition}[\pseudoiid]\label{pseudoiid}
    Let $m,n$ be two integers and $a\in \mathbb{R}^n$ be any fixed vector. We will say that the random matrix $W=(W_{ij}) \in \mathbb{R}^{m \times n}$ is in the \pseudoiid\ distribution with parameter $\sigma^2$ if
    \begin{enumerate}[label=(\roman*)]
        \item the matrix is row-exchangeable and column-exchangeable,
        \item its entries are centered, uncorrelated, with variance $\mathbb{E}(W_{ij}^2) = \frac{\sigma^2}{n}$,
        \item $\mathbb{E} \big| \sum_{j=1}^{n} a_j W_{ij} \big| 
^8 = K \lVert \mathbf{a} \rVert_2^8 n^{-4}$ for some constant $K$,
        \item and $\lim_{n \to \infty} \frac{n^2}{\sigma^4} \mathbb{E}(W_{i_{a},j}W_{i_{b},j}W_{i_{c},j'}W_{i_{d},j'}) = \delta_{i_a, i_b}\delta_{i_c, i_d}$, for all $j \neq j'$.
    \end{enumerate}
    When $W^{(1)}$ has \iid\ Gaussian entries and the other weight matrices $W^{(\ell)}, \: 2 \leq \ell \leq L+1$, of a neural network (see \eqref{eq:h}) are drawn from a \pseudoiid\ distribution, we will say that the network is under the \pseudoiid\ regime.
\end{definition}

The \pseudoiid\ conditions (i)-(iv) are included to serve the following purposes: Conditions (i) and (ii) allow for non-\iid\ initializations such as orthogonal weights while being sufficient for a CLT-type argument to hold. The variance scaling prevents the preactivations in \eqref{eq:h_simultaneous_limit} from blowing up as the network's size grows. Condition (iii) requires the moments of the possibly dependent weights to behave like independent ones, while condition (iv) constrains their cross-correlations to vanish fast enough. Together they maintain that in the limit, these variables interact as if they were independent, and this is all our proof needs. The importance of condition (iii) is 
 expanded upon in Appendix \ref{app:moments}. It is worth noting that because the first layer's weight matrix has only one dimension scaling up with $n$, the tension between the entries' dependencies and the network size cannot be resolved, thus we take these weights to be \iid\ 
 \footnote{In fact, it is enough for $W^{(1)}$ to have \iid\ rows. We assume Gaussian \iid\ entries for simplicity, yet this can be further relaxed at the cost of a more complicated formula for the first covariance kernel (see \eqref{eq:first_layer_kernel_fcn}). 
 }. Whether some of the conditions are redundant still remains an open question.

Throughout this paper, we consider a specific set of activation functions that satisfy the so-called linear envelope property, Definition \ref{def:activation_function}, satisfied by most activation functions used in practice (ReLu, Softmax, Tanh, HTanh, etc.).
\begin{definition}\label{def:activation_function}(Linear envelope property) A function $\phi:\mathbb{R}\to \mathbb{R}$ is said to satisfy the linear envelope property if there exist $c,M \geq 0$ such that, for any $x\in \mathbb{R}$,
\begin{equation*}
    |\phi(x)| \leq c + M |x|.
\end{equation*}
\end{definition}




\subsection{The \pseudoiid\ regime for fully connected networks}\label{sec:pseudo_fcn}

Our proofs of \pseudoiid\ networks converging to Gaussian Processes are done in the more sophisticated simultaneous width growth limit as pioneered by \cite{Matthews_2018}.  For a review of the literature on deep networks with sequential vs.\ simultaneous scaling, see Section \ref{sec:related_work}.  One way of characterizing such a simultaneous convergence over all layers is to consider that all widths $N_{\ell}$ are increasing functions of one parameter, let us say $n$, such that, as $n$ grows, all layers' widths increase: $\forall \ell \in \{1,\cdots, L\}, N_{\ell} \coloneqq N_{\ell}[n]$. We emphasize this dependence on $n$ by appending an index $X[n]$ to the random variables $X$ when $n$ is finite and denote by $X[*]$ its limiting random variable, corresponding to $n\to \infty$. The input dimension $N_0$ and the final output layer dimension $N_{L+1}$ are finite thus they do not scale with $n$. Moreover, the input data are assumed to come from a countably infinite input space $\mathcal{X}$ (see \ref{A1}). Equation~\ref{eq:h} can thus be rewritten, for any $x\in \mathcal{X}$, as
\begin{align}\label{eq:h_simultaneous_limit}
    h_i^{(\ell)} (x)[n] = \sum\limits_{j=1}^{N_{\ell - 1}[n]} W^{(\ell)}_{ij} z^{(\ell-1)}_j(x)[n] + b^{(\ell)}_i, \qquad z^{(\ell)}_j(x)[n] = \phi(h^{(\ell)}_j(x)[n]),
\end{align}   
and the associated Gaussian Process limit is given in Theorem \ref{thm:main_FCN}, which we can now state. 

\begin{theorem}[GP limit for fully connected \pseudoiid\ networks]\label{thm:main_FCN}
     Suppose a fully connected neural network as in \eqref{eq:h_simultaneous_limit} is under the \pseudoiid\ regime with parameter $\sigma_W^2$ and the activation satisfies the linear envelope property Def.\  \ref{def:activation_function}. Let $\mathcal{X}$ be a countably-infinite set of inputs. Then, for every layer $2 \leq \ell \leq L+1$, the sequence of random fields $(i, x) \in [N_{\ell}] \times \mathcal{X} \mapsto h_i^{(\ell)}(x)[n] \in \mathbb{R}^{N_{\ell}}$ converges in distribution to a centered Gaussian Process $(i, x) \in [N_{\ell}] \times \mathcal{X} \mapsto h_i^{(\ell)}(x)[*] \in \mathbb{R}^{N_{\ell}}$, whose covariance function is given by
    \begin{equation}\label{eq:recursion_Gaussian}
        \mathbb{E}\Big[h^{(\ell)}_i(x)[*] \cdot h^{(\ell)}_j(x^\prime)[*]\Big] =  \delta_{i,j} K^{(\ell)}(x,x^\prime),
    \end{equation}
    where
    \begin{equation}
        K^{(\ell)}(x, x') = \begin{cases}
            \sigma_b^2 + \sigma_W^2 \mathbb{E}_{(u,v) \sim \mathcal{N}(\mathbf{0}, K^{(\ell-1)}(x, x^\prime))} [\phi(u)\phi(v)], & \ell \geq 2, \\
            \sigma_b^2 + \frac{\sigma_W^2}{N_0} \langle x, x' \rangle, & \ell=1.
        \end{cases}
    \end{equation}
\end{theorem}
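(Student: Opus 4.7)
The plan is to proceed by induction on the layer index $\ell$, reducing convergence of the random field $h^{(\ell)}[n]$ to convergence of its finite-dimensional marginals (valid since $[N_\ell]\times\gX$ is countable) and, within each inductive step, using the Cram\'er--Wold device together with an exchangeable-sequence CLT. The base case $\ell = 1$ is immediate: since $W^{(1)}$ has \iid\ Gaussian entries, $N_0$ is fixed, and $b^{(1)}$ is \iid\ Gaussian, the field $h^{(1)}(\cdot)[n]$ is already an exact Gaussian Process for every $n$, with the stated kernel. For the inductive step, fix $m \in \mathbb{N}$, indices $(i_k, x_k)_{k=1}^m$ in $[N_\ell] \times \gX$, and scalars $\alpha_k$, and consider the linear combination
\begin{equation*}
T[n] \;=\; \sum_{k=1}^m \alpha_k h_{i_k}^{(\ell)}(x_k)[n] \;=\; \sum_{j=1}^{N_{\ell-1}[n]} \gamma_j[n] \;+\; \sum_{k=1}^m \alpha_k b_{i_k}^{(\ell)}, \qquad \gamma_j[n] := \sum_{k=1}^m \alpha_k W_{i_k j}^{(\ell)} \phi\!\big(h_j^{(\ell-1)}(x_k)[n]\big).
\end{equation*}
The crucial structural observation is that $(\gamma_j[n])_j$ is an exchangeable sequence: column-exchangeability of $W^{(\ell)}$ permutes the weight factor, row-exchangeability of $W^{(\ell-1)}$ makes $h_j^{(\ell-1)}$ exchangeable in $j$, and the independence of $W^{(\ell)}$ from all preceding weights and biases renders these two permutations compatible.

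\textbf{Applying the exchangeable CLT and identifying the covariance.} At this point the argument parallels \cite{Matthews_2018}: apply a Blum--Chernoff--Rosenblatt--Teicher-type CLT for exchangeable sequences to $\sum_j \gamma_j[n]$ to obtain a centered Gaussian limit. The three pseudo-iid conditions (ii)--(iv) are precisely tailored to supply the required hypotheses: (ii) gives the variance scaling and, together with independence of $W^{(\ell)}$ from $h^{(\ell-1)}$, makes the cross terms $\mathbb{E}[\gamma_j \gamma_{j'}]$ with $j \neq j'$ vanish exactly (since $\mathbb{E}[W_{ij} W_{i'j'}] = \tfrac{\sigma_W^2}{n} \delta_{ii'} \delta_{jj'}$); (iii) bounds the eighth moment of linear combinations $\sum_j a_j W_{ij}^{(\ell)}$ and hence, conditionally on $h^{(\ell-1)}[n]$, supplies a Lyapunov-type bound on $\sum_j \gamma_j[n]$; (iv) forces the leading-order contribution to $\mathbb{E}[(\sum_j \gamma_j[n])^4]$, which comes from index configurations with two pairs of equal $j$'s, to match the Gaussian fourth-moment identity $3\,\mathrm{Var}^2$. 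Combining with the induction hypothesis (which identifies the limiting joint law of $(\phi(h_1^{(\ell-1)}(x_k)),\, \phi(h_1^{(\ell-1)}(x_{k'})))$ through the Gaussian expectation $\mathbb{E}_{(u,v) \sim \gN(\mathbf{0},\, K^{(\ell-1)}(x_k, x_{k'}))}[\phi(u)\phi(v)]$) and the independent Gaussian bias contribution $\sigma_b^2 \sum_{k,k'} \alpha_k \alpha_{k'} \delta_{i_k, i_{k'}}$, the limiting variance of $T[n]$ equals $\sum_{k,k'} \alpha_k \alpha_{k'} \delta_{i_k, i_{k'}} K^{(\ell)}(x_k, x_{k'})$, matching the statement of the theorem.

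\textbf{Main obstacle.} The principal difficulty lies in making this conditional CLT rigorous. The summands $\gamma_j[n]$ are jointly driven by the exchangeable (but not independent) weights $W_{\cdot j}^{(\ell)}$ and by the preceding preactivations $h_j^{(\ell-1)}[n]$, which themselves only converge \emph{in distribution} to a Gaussian. Passing $\phi$ through the induction hypothesis demands uniform integrability of $\phi(h_j^{(\ell-1)}[n])^p$, which the linear envelope property furnishes provided one propagates uniform moment bounds on $h_j^{(\ell-1)}[n]$ across layers as an auxiliary induction. Equally delicate is verifying that the moment conditions (iii)--(iv) genuinely suffice when weights are multiplied by $h$-dependent factors; the independence of $W^{(\ell)}$ from $h^{(\ell-1)}$ is crucial here, allowing each mixed expectation to factor into a purely weight-side quantity (controlled by (iii)--(iv)) and a purely feature-side quantity (controlled by the induction hypothesis and the linear envelope). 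Once this conditional CLT step is in hand, the covariance identification and the extension from finite marginals to the full field are essentially bookkeeping.
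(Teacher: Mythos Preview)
Your proposal is correct and follows essentially the same approach as the paper: reduction to finite-dimensional marginals, Cram\'er--Wold, the Blum--Chernoff--Rosenblatt exchangeable CLT (stated there as Lemma~10 of \cite{Matthews_2018}), exchangeability of the summands via row-/column-exchangeability of $W^{(\ell)}$ and $W^{(\ell-1)}$, and induction through layers with uniform integrability propagated via the linear envelope and condition~(iii). One small recalibration: in the paper, condition~(iv) is not used to match the fourth moment of the full sum to $3\,\mathrm{Var}^2$, but rather to verify the specific hypothesis $\lim_n \mathbb{E}[\gamma_1^2\gamma_2^2]=\sigma_*^4$ of the exchangeable CLT---this is where the four-cross product $\mathbb{E}[W_{i_a,1}W_{i_b,1}W_{i_c,2}W_{i_d,2}]$ enters directly, while condition~(iii) handles the Lyapunov-type bound via the fourth moment of a single $\gamma_j$.
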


\subsection{The \pseudoiid\ regime for Convolution Neural Networks} \label{CNN}

We consider a CNN with $C_{\ell}$ number of channels at layer $\ell \in \{1, \cdots, L+1 \}$ and two-dimensional convolutional filters $\mathbf{U}_{i,j}^{(\ell)} \in \mathbb{R}^{k \times k}$ mapping the input channel $j \in \{1, \cdots, C_{\ell-1}\}$ to the output channel $i \in \{1, \cdots, C_{\ell}\}$. The input signal $\mathbf{X}$ (also two-dimensional) has $C_{0}$ channels $\mathbf{x}_1, \cdots, \mathbf{x}_{C_{0}}$ and its propagation through the network is given by
\begin{align}\label{eq:cnn}
    \mathbf{h}_i^{(\ell)} (\mathbf{X})[n] = \begin{dcases}
        b^{(1)}_i \mathbf{1} + \sum_{j=1}^{C_{0}} \mathbf{U}^{(1)}_{i,j} \star \mathbf{x}_j, & \ell = 1, \\
        b^{(\ell)}_i \mathbf{1} + \sum_{j=1}^{C_{\ell-1}} \mathbf{U}^{(\ell)}_{i,j} \star 
        \mathbf{z}_j^{(\ell-1)} (\mathbf{X})[n], & \ell \geq 2,
    \end{dcases} \qquad \mathbf{z}_i^{(\ell)} (\mathbf{X})[n]=\phi(\mathbf{h}_i^{(\ell)} (\mathbf{X}))[n].
\end{align}
In \eqref{eq:cnn}, the symbol $\mathbf{1}$ should be understood as having the same size as the convolution output, the non-linearity $\phi(\cdot)$ is applied entry-wise, and we emphasize on the simultaneous scaling with $n$ by appending $[n]$ to the feature maps $\mathbf{h}_i^{(\ell)} (\mathbf{X})[n]$. We denote spatial (multi-) indices by boldface Greek letters $\boldsymbol{\mu}$, $\boldsymbol{\nu}$, etc., that are ordered pairs of integers taking values in the range of the size of the array. For example, if $\mathbf{X}$ is an RGB ($C_{0} = 3$) image of $H \times D$ pixels, $i=2$, and $\boldsymbol{\mu} = (\alpha, \beta)$, then $X_{i, \boldsymbol{\mu}}$ returns the green intensity of the $(\alpha, \beta)$ pixel. Moreover, we define $\llbracket \boldsymbol{\mu} \rrbracket$ to be the patch centered at the pixel $\boldsymbol{\mu}$ covered by the filter, e.g. if $\boldsymbol{\mu} = (\alpha, \beta)$ and the filter covers $k \times k = (2k_0+1) \times (2k_0+1)$ pixels, then $\llbracket \boldsymbol{\mu} \rrbracket = \{(\alpha', \beta') \: | \: \alpha - k_0 \leq \alpha' \leq \alpha + k_0, \; \beta - k_0 \leq \beta' \leq \beta + k_0 \}$, with the usual convention of zero-padding for the out-of-range indices.  Sufficient conditions for \pseudoiid\ CNNs to converge to a Gaussian Process in the simultaneous scaling limit are given in Definition \ref{pseudoiid_cnn}.

\begin{definition}[\pseudoiid\ for CNNs] \label{pseudoiid_cnn}
    Consider a CNN with random filters and biases $\{ \mathbf{U}_{i,j}^{(\ell)} \}$ and $\{b_i^{(\ell)}\}$ as in \eqref{eq:cnn}. It is said to be in the \pseudoiid\ regime with parameter $\sigma^2$ if $\mathbf{U}^{(1)}$ has \iid\ $\mathcal{N}(0, \frac{\sigma^2}{C_0})$ entries and, for $2 \leq \ell \leq L+1$, such that for any fixed vector $\mathbf{a}\coloneqq (a_{j, \boldsymbol{\nu}})_{j,\boldsymbol{\nu}}$,
    \begin{enumerate}[label=(\roman*)]
        \item the convolutional kernel $\mathbf{U}^{(\ell)} \in \mathbb{R}^{C_{\ell}\times C_{\ell-1} \times k \times k}$ is row-exchangeable and column-exchangeable, that is its distribution is invariant under permutations of the first and second indices,
        \item the filters' entries are centered, uncorrelated, with variance $\mathbb{E}[(\mathbf{U}_{i,j, \boldsymbol{\mu}}^{(\ell)})^2] = {\sigma^2}/{C_{\ell-1}}$,
        \item $\mathbb{E} \big| \sum_{j=1}^{C_{\ell -1}} \sum_{\boldsymbol{\nu}} a_{j, \boldsymbol{\nu}} \mathbf{U}_{i,j, \boldsymbol{\nu}}^{(\ell)} \big| ^8 = K \lVert \mathbf{a} \rVert_2^8 (C_{\ell-1})^{-4}$ for some constant $K$,
        \item and $\lim_{n \to \infty} \frac{C_{\ell-1}[n]^2}{\sigma^4} \mathbb{E}\big(\mathbf{U}^{(\ell)}_{i_{a},j, \boldsymbol{\mu}_a}\mathbf{U}^{(\ell)}_{i_{b},j, \boldsymbol{\mu}_b}\mathbf{U}^{(\ell)}_{i_{c},j', \boldsymbol{\mu}_c}\mathbf{U}^{(\ell)}_{i_{d},j', \boldsymbol{\mu}_d}\big) = \delta_{i_a, i_b}\delta_{i_c, i_d}\delta_{\boldsymbol{\mu}_a, \boldsymbol{\mu}_b}\delta_{\boldsymbol{\mu}_c, \boldsymbol{\mu}_d}$, for all $j \neq j'$.
    \end{enumerate}
\end{definition}

Similar to the fully connected case, the filters' entries can now exhibit some interdependencies as long as their cross-correlations vanish at a fast enough rate dictated by conditions (iii)-(iv).
\begin{theorem}[GP limit for CNN \pseudoiid\ networks] \label{thm:CNN}
    Suppose a CNN as in \eqref{eq:cnn} is under the \pseudoiid\ regime with parameter $\sigma_W^2$  and the activation satisfies the linear envelope property Def.\  \ref{def:activation_function}. Let $\mathcal{X}$ be a countably-infinite set of inputs and $\boldsymbol{\mu} \in \mathcal{I}$ denote a spatial (multi-) index. Then, for every layer $1 \leq \ell \leq L+1$, the sequence of random fields $(i, \mathbf{X}, \boldsymbol{\mu}) \in [C_{\ell}] \times \mathcal{X} \times \mathcal{I} \mapsto h_{i, \boldsymbol{\mu}}^{(\ell)}(\mathbf{X})[n]$ converges in distribution to a centered Gaussian Process $(i, \mathbf{X}, \boldsymbol{\mu}) \in [C_{\ell}] \times \mathcal{X} \times \mathcal{I} \mapsto h_{i, \boldsymbol{\mu}}^{(\ell)}(\mathbf{X})[*]$, whose covariance function is given by
    \begin{equation}\label{eq:cov_cnn}
        \mathbb{E}\Big[h_{i, \boldsymbol{\mu}}^{(\ell)}(\mathbf{X})[*] \cdot h_{j, \boldsymbol{\mu}'}^{(\ell)}(\mathbf{X}')[*]\Big] = \delta_{i,j} \Big( \sigma_b^2 + \sigma_W^2 \sum_{\boldsymbol{\nu} \in \llbracket \boldsymbol{\mu} \rrbracket \cap \llbracket \boldsymbol{\mu}' \rrbracket} K_{\boldsymbol{\nu}}^{(\ell)} (\mathbf{X}, \mathbf{X}')\Big),
    \end{equation}
    where
    \begin{equation}
        K_{\boldsymbol{\nu}}^{(\ell)}(\mathbf{X},\mathbf{X}') = \begin{cases}
         \mathbb{E}_{(u,v) \sim \mathcal{N}(\mathbf{0}, K_{\boldsymbol{\nu}}^{(\ell-1)}(\mathbf{X}, \mathbf{X}'))} [\phi(u)\phi(v)], & \ell \geq 2, \\
         \frac{1}{C_{0}} \sum_{i=1}^{C_{0}} X_{i,\boldsymbol{\nu}} X'_{i, \boldsymbol{\nu}}, & \ell=1.
        \end{cases}
    \end{equation}
    
\end{theorem}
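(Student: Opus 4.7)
The plan is to transport the four-stage blueprint of the fully connected proof to the convolutional setting. First, I would equip the countably-infinite index set $[C_\ell] \times \mathcal{X} \times \mathcal{I}$ with the Matthews metric of \cite{Matthews_2018} so that convergence of the random field $(i,\mathbf{X},\boldsymbol{\mu}) \mapsto h_{i,\boldsymbol{\mu}}^{(\ell)}(\mathbf{X})[n]$ in distribution reduces to convergence of every finite-dimensional marginal $(h_{i,\boldsymbol{\mu}}^{(\ell)}(\mathbf{X})[n])_{(i,\mathbf{X},\boldsymbol{\mu}) \in \mathcal{L}}$ with $|\mathcal{L}| < \infty$. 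A Cram\'er--Wold reduction then lets me treat only arbitrary scalar linear projections $\mathcal{T}^{(\ell)}(\alpha,\mathcal{L})[n] \coloneqq \sum_{(i,\mathbf{X},\boldsymbol{\mu})\in\mathcal{L}} \alpha_{(i,\mathbf{X},\boldsymbol{\mu})}(h_{i,\boldsymbol{\mu}}^{(\ell)}(\mathbf{X})[n] - b_i^{(\ell)})$ and show they are asymptotically Gaussian with the correct variance.

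The CNN-specific step is to reorganize each projection as a sum over input channels:
\begin{equation*}
\mathcal{T}^{(\ell)}(\alpha,\mathcal{L})[n] = \sum_{j=1}^{C_{\ell-1}[n]} \gamma_j^{(\ell)}(\alpha,\mathcal{L})[n], \qquad \gamma_j^{(\ell)} \coloneqq \sum_{(i,\mathbf{X},\boldsymbol{\mu})} \alpha_{(i,\mathbf{X},\boldsymbol{\mu})} \sum_{\boldsymbol{\nu} \in \llbracket\boldsymbol{\mu}\rrbracket} \mathbf{U}^{(\ell)}_{i,j,\boldsymbol{\nu}-\boldsymbol{\mu}}\, z^{(\ell-1)}_{j,\boldsymbol{\nu}}(\mathbf{X})[n].
\end{equation*}
Condition (i) of Definition \ref{pseudoiid_cnn} (column-exchangeability of the kernel tensor) makes $(\gamma_j^{(\ell)})_{j=1}^{C_{\ell-1}[n]}$ an exchangeable sequence, so I would invoke the exchangeable Central Limit Theorem of \cite{Blum_1956} in place of the classical CLT, exactly as for Theorem~\ref{thm:main_FCN}, to get a Gaussian limit.

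I then proceed by induction on $\ell$, the base case $\ell=1$ being immediate since $\mathbf{U}^{(1)}$ has \iid\ Gaussian entries. For the inductive step, I compute $\mathbb{E}[(\mathcal{T}^{(\ell)})^2] = \sum_{j,j'}\mathbb{E}[\gamma_j\gamma_{j'}]$. The diagonal terms ($j=j'$) collapse, by the uncorrelation condition (ii), to a sum over a single filter offset; the inductive hypothesis implies that the preactivations $(\mathbf{h}^{(\ell-1)}_{j,\boldsymbol{\nu}})_{\boldsymbol{\nu}}$ are jointly Gaussian and independent of $\mathbf{U}^{(\ell)}$, so by Gaussian integration this contribution converges to $\sigma_W^2 \sum_{\boldsymbol{\nu}\in\llbracket\boldsymbol{\mu}\rrbracket\cap\llbracket\boldsymbol{\mu}'\rrbracket} K_{\boldsymbol{\nu}}^{(\ell)}(\mathbf{X},\mathbf{X}')$, as required by \eqref{eq:cov_cnn}. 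The off-diagonal terms ($j\neq j'$), governed by the fourth-order condition (iv), vanish at the right rate and leave no residual contribution. Condition (iii), together with the linear-envelope bound on $\phi$, furnishes the uniform moment control required by Blum's theorem.

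The hard part is to handle cleanly and simultaneously the two new sources of dependence that do not appear in the fully connected case: the weight sharing intrinsic to convolution (the same filter entry feeds multiple output spatial locations) and the relaxed inter-entry correlations of the \pseudoiid\ class. The fourth-order condition (iv) is tailored for precisely this purpose: the factor $C_{\ell-1}[n]^{-2}$ matches the $O(C_{\ell-1}[n]^2)$ off-diagonal $(j,j')$ pairs, while the extra Kronecker deltas $\delta_{\boldsymbol{\mu}_a,\boldsymbol{\mu}_b}\delta_{\boldsymbol{\mu}_c,\boldsymbol{\mu}_d}$ kill exactly the cross-spatial mixed moments that would otherwise contaminate the $\llbracket\boldsymbol{\mu}\rrbracket\cap\llbracket\boldsymbol{\mu}'\rrbracket$ structure in the limit. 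Showing that the inductively assumed joint Gaussianity of $(z^{(\ell-1)}_{j,\boldsymbol{\nu}})_{j,\boldsymbol{\nu}}$ passes through the convolution in the presence of these non-trivial filter dependencies, and that the Lindeberg-type moment hypothesis of the exchangeable CLT can be verified uniformly in $n$, will be the most technical portion of the argument.
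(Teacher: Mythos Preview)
Your blueprint matches the paper's proof: reduce to finite marginals via the Matthews metric, apply Cram\'er--Wold, write the projection as an exchangeable sum over input channels $j$, invoke the Blum exchangeable CLT, and propagate by induction with uniform integrability (the paper's Proposition~\ref{proposition:uniform_integrability_CNN}) used to pass limits through expectations. Two points need sharpening.

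First, column-exchangeability of $\mathbf{U}^{(\ell)}$ alone does not make $(\gamma_j^{(\ell)})_j$ exchangeable: $\gamma_j^{(\ell)}$ also depends on $j$ through $z^{(\ell-1)}_{j,\boldsymbol{\nu}}(\mathbf{X})[n]$, so you must simultaneously use the \emph{row}-exchangeability of $\mathbf{U}^{(\ell-1)}$ (and the \iid\ biases $b^{(\ell-1)}$) to permute the previous layer's output channels; the paper makes this two-layer argument explicit. Second, you have misplaced condition (iv). In $\mathbb{E}[(\mathcal{T}^{(\ell)})^2]=\sum_{j,j'}\mathbb{E}[\gamma_j\gamma_{j'}]$ the off-diagonal terms $j\neq j'$ vanish \emph{exactly} by the second-order uncorrelation of condition (ii) --- there is no ``$O(C_{\ell-1}^2)$ pairs versus $C_{\ell-1}^{-2}$ rate'' balance to perform here. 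Condition (iv) enters instead when checking hypothesis (b) of the Blum CLT, i.e.\ $\lim_n\mathbb{E}[\gamma_1^2\gamma_2^2]=\sigma_*^4$, which is a genuine fourth-order computation in the filter entries $\mathbb{E}[\mathbf{U}^{(\ell)}_{i_a,1,\boldsymbol{\mu}_a}\mathbf{U}^{(\ell)}_{i_b,1,\boldsymbol{\mu}_b}\mathbf{U}^{(\ell)}_{i_c,2,\boldsymbol{\mu}_c}\mathbf{U}^{(\ell)}_{i_d,2,\boldsymbol{\mu}_d}]$; the extra spatial Kronecker deltas $\delta_{\boldsymbol{\mu}_a,\boldsymbol{\mu}_b}\delta_{\boldsymbol{\mu}_c,\boldsymbol{\mu}_d}$ are needed \emph{there}, not in the covariance identification \eqref{eq:cov_cnn}.
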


These equations resemble those derived in the fully connected case, apart from the introduction of additional terms accounting for pixels within patch $\boldsymbol{\mu}$. These terms vanish when the filter kernel size is reduced to $k=1$. Our proof of Theorem \ref{thm:CNN} can be found in Appendix \ref{app:proof_cnn}.

\section{\pseudoiid\ in practice}\label{Ex}


In this section, we illustrate some important \pseudoiid\ distributions through examples. We further show that the Gaussian behaviour can serve, among others, two purposes of practical significance (see section \ref{sec:eoc}). First, we elucidate the analysis of signal propagation in neural networks. Secondly, this Gaussian Process limit simplifies the initially complicated task of choosing a prior over a large set of distributions a Bayesian Neural Network could realize at initialization.

\subsection{Examples of \pseudoiid\ distributions} \label{examples}

We elaborate here on some typical initialization schemes and their belonging to the \pseudoiid\ class. To the best of our knowledge, rigorous proofs of the Gaussian Process convergence in the literature are restricted to the \iid\ cases only (see Section \ref{sec:related_work}). An important non-\iid\ case that also results in a Gaussian Process is the random orthogonal initialization, partially derived in \cite{huang_2021} for fully connected networks. Nonetheless, we regret the authors remained elusive on the treatment of the first layer, an aspect which we have since then enhanced. Our proposed \pseudoiid\ regime encompasses both \iid\ and orthogonal cases (see Appendix \ref{old-examples}) but also allows for a broader class of weight distributions such as random low-rank or structured sparse matrices, for which the Gaussian Process limit has not been established before, despite their practical significance (see Section \ref{sec:intro}).


\paragraph{Low-rank weights.} Low-rank structures are widely recognized for speeding up matrix multiplications and can be used to reduce memory requirements, see \ref{sec:intro}. Whilst such structures inevitably impose dependencies between the weight matrix entries $A \in \mathbb{R}^{m \times n}$, thus breaking the \iid\ assumption, \cite{Nait_2023} introduced a low-rank framework that falls within our \pseudoiid\ regime. Let $C \coloneqq[ C_1, \cdots, C_r ] \in \mathbb{R}^{m \times r}$ be a uniformly drawn orthonormal basis for a random $r$-dimensional subspace. Suppose $P=(P_{ij})\in \mathbb{R}^{r \times n} $ has \iid\ entries $P_{ij} \overset{\mathrm{iid}}{\sim} \mathcal{D}$. Setting $A\coloneqq CP$, the columns of $A$ are spanned by those of $C$. The row and column exchangeability of $A$ follows immediately from that of $C$ and $P$, and the moment conditions (iii) and (iv) are controlled by the choice of distribution $\mathcal{D}$. Direct computation of the four-cross product that appears in condition (iv) gives us
\begin{equation*}
    \mathbb{E}(A_{i_{a},1}A_{i_{b},1}A_{i_{c},2}A_{i_{d},2}) = s^2 \sum_{1 \leq k, k' \leq r} \mathbb{E} \Big[ C_{i_a, k} C_{i_b, k} C_{i_c, k'} C_{i_d, k'} \Big],
\end{equation*}
where $s \coloneqq \mathbb{E} (P_{1,1}^2)$. Using the expression in Lemma 3 of \cite{huang_2021} we can calculate the above expectation and deduce condition (iv) when $r$ is linearly proportional to $m$.

\paragraph{Structured sparse weights.} Block-wise pruned networks have recently been under extensive study for their efficient hardware implementation \cite{dao2022_sparse_monarch, dao2022_sparse_pixelated}. Once the sparsifying mask is fixed, we may apply random row and column permutations on the weight matrices without compromising the accuracy or the computational benefit. Let $A = (A_{ij}) \in \mathbb{R}^{m \times n}$ have \iid\ entries $A_{ij} \overset{\mathrm{iid}}{\sim} \mathcal{D}$ and $B \in \mathbb{R}^{m \times n}$ be the binary block-sparse mask. Let $\Tilde{A} = P_m (A \odot B) P_n$, where $P_m$ and $P_n$ are random permutation matrices of size $m \times m$ and $n \times n$ respectively, and $\odot$ represents entry-wise multiplication. Then, by construction, $\Tilde{A}$ is row- and column-exchangeable and, for suitable choices of underlying distribution $\mathcal{D}$, it satisfies the moment conditions of Definition \ref{pseudoiid}. Appendix \ref{app:sparse} contains some illustrations for reference. 

\paragraph{Orthogonal CNN filters.} Unlike the fully connected case, it is not obvious how to define the orthogonality of a convolutional layer and, once defined, how to randomly generate such layers for initialization. \cite{Xiao_2018} defines an orthogonal convolutional kernel $\mathbf{U} \in \mathbb{R}^{c_{out} \times c_{in} \times k \times k}$ made of $c_{out}$ filters of size $k \times k$ through the energy preserving property $\lVert \mathbf{U} \star \mathbf{X} \rVert_2 = \lVert \mathbf{X} \rVert_2$, for any signal $\mathbf{X}$ with $c_{in}$ input channels. \cite{wang2020orthogonal} requires the matricized version of the kernel to be orthogonal, while \cite{qi2020deep} gives a more stringent definition imposing isometry, i.e. $$\sum_{i=1}^{c_{out}} \mathbf{U}_{i,j} \star \mathbf{U}_{i,j'} = \begin{cases}
    \boldsymbol{\delta}, & j=j', \\
    0, & \mathrm{otherwise,}
\end{cases}$$ 
where $\boldsymbol{\delta}$ is 1 at $(0,0)$, and $0$ elsewhere. Another definition in \cite{huang_2021} calls for orthogonality of ``spatial'' slices $\mathbf{U}_{\boldsymbol{\mu}} \in \mathbb{R}^{c_{out} \times c_{in}}$, for all positions $\boldsymbol{\mu}$.

We take a different approach than \cite{wang2020orthogonal} for matricizing the tensor convolution operator, setting the stride to 1 and padding to 0: reshape the kernel $\mathbf{U}$ into a matrix $\mathbf{\Tilde{U}} \in \mathbb{R}^{c_{out} \times k^2 c_{in}}$ and unfold the signal $\mathbf{X}$ into $\mathbf{\tilde{X}} \in \mathbb{R}^{k^2 c_{in} \times d}$, where $d$ is the number of patches depending on the sizes of the signal and the filter. This allows $\mathbf{\Tilde{U}}$ to be an arbitrary unstructured matrix rather than the doubly block-Toeplitz matrix considered in \cite{wang2020orthogonal}, as shown in Figure \ref{fig:convolution}.  Matricizing the tensor convolution operator imposes the structure on the signal $\mathbf{\tilde{X}}$ rather than the filter $\mathbf{\Tilde{U}}$. Orthogonal (i.e. energy-preserving) kernels $\mathbf{\Tilde{U}}$ can then be drawn uniformly random with orthogonal columns, such that
\begin{equation} \label{cnn_ortho}
    \mathbf{\Tilde{U}}^\top \mathbf{\Tilde{U}} = \frac{1}{k^2} I
\end{equation}
and then reshaped into the original tensor kernel $\mathbf{U}$. Note that this construction is only possible when $\mathbf{\Tilde{U}}$ is a tall matrix with trivial null space, that is when $c_{out} \geq k^2 c_{in}$, otherwise the transpose might be considered. We emphasize that \eqref{cnn_ortho} is a sufficient (and not necessary) condition for $\mathbf{U}$ to be energy-preserving, since $\mathbf{\tilde{X}}$, by construction, belongs to a very specific structure set $T \subseteq \mathbb{R}^{k^2 c_{in} \times d}$, and, therefore, $\mathbf{\Tilde{U}}$ needs to preserve norm only on $T$ (so not everywhere). Therefore, we do not claim the generated orthogonal convolutional kernel $\mathbf{U}$ is ``uniformly distributed'' over the set of all such kernels.

\begin{figure}
    \centering
    \begin{tikzpicture}[scale=0.65, blend mode=multiply]
    
    
    \begin{scope}[shift={(3.5,0)},scale=0.45]	
		\draw[step=1cm] (-1,-1) grid (1,1);
		\fill[red!40!gray,opacity=0.7] (-1,-1) rectangle ++(2,2);
    	\node at (-0.5,0.5) {1};
		\node at (0.5,0.5) {2};
		\node at (-0.5,-0.5) {3};
	  	\node at (0.5,-0.5) {4};
		\node at (0,1.6) {Filter};
		\node at (-2.4,0) {$\mathbf{U} = $ };
	\end{scope}
	
	\begin{scope}[shift={(6,0)},scale=0.5]
		\draw[step=1cm] (-1,-1) grid (1,1);
		\fill[cyan!40!gray,opacity=0.7] (-1,-1) rectangle ++(2,2);
    	\node at (-0.5,0.5) {A};
		\node at (0.5,0.5) {B};
		\node at (-0.5,-0.5) {C};
	  	\node at (0.5,-0.5) {D};
		\node at (0,1.6) {Signal};
		\node at (-2.4,0) {$\mathbf{X} = $ };
  	\end{scope}
	
	\begin{scope}[shift={(3.5,-6)},scale=0.5]
		\draw[step=1cm] (0,0) grid (4,9);
		\node at (0.5,8.5) {4};
		\fill[red!40!gray,opacity=0.7] (0,8) rectangle ++(1,1);
		
		\node at (0.5,7.5) {3};
		\node at (1.5,7.5) {4};
		\fill[red!40!gray,opacity=0.7] (0,7) rectangle ++(1,1);
		\fill[red!40!gray,opacity=0.7] (1,7) rectangle ++(1,1);
		
		\node at (1.5,6.5) {3};
		\fill[red!40!gray,opacity=0.7] (1,6) rectangle ++(1,1);
		
		\node at (0.5,5.5) {2};
		\node at (2.5,5.5) {4};
		\fill[red!40!gray,opacity=0.7] (0,5) rectangle ++(1,1);
		\fill[red!40!gray,opacity=0.7] (2,5) rectangle ++(1,1);
		
		\node at (0.5,4.5) {1};
		\node at (1.5,4.5) {2};
		\node at (2.5,4.5) {3};
		\node at (3.5,4.5) {4};
		\fill[red!40!gray,opacity=0.7] (0,4) rectangle ++(1,1);
		\fill[red!40!gray,opacity=0.7] (1,4) rectangle ++(1,1);
		\fill[red!40!gray,opacity=0.7] (2,4) rectangle ++(1,1);
		\fill[red!40!gray,opacity=0.7] (3,4) rectangle ++(1,1);
		
		\node at (1.5,3.5) {1};
		\node at (3.5,3.5) {3};
		\fill[red!40!gray,opacity=0.7] (1,3) rectangle ++(1,1);
		\fill[red!40!gray,opacity=0.7] (3,3) rectangle ++(1,1);
		
		\node at (2.5,2.5) {2};
		\fill[red!40!gray,opacity=0.7] (2,2) rectangle ++(1,1);
		
		\node at (2.5,1.5) {1};
		\node at (3.5,1.5) {2};
		\fill[red!40!gray,opacity=0.7] (2,1) rectangle ++(1,1);
		\fill[red!40!gray,opacity=0.7] (3,1) rectangle ++(1,1);
		
		\node at (3.5,0.5) {1};
		\fill[red!40!gray,opacity=0.7] (3,0) rectangle ++(1,1);
      	\node at (-2,3) {$\mathbf{\Tilde{U}}$};

	\end{scope}
	
	\begin{scope}[shift={(6,-3.5)},scale=0.5]
		\draw[step=1cm] (0,0) grid (1,4);
		\fill[cyan!40!gray,opacity=0.7] (0,0) rectangle ++(1,4);
		\node at (0.5,3.5) {A};
		\node at (0.5,2.5) {B};
		\node at (0.5,1.5) {C};
	  	\node at (0.5,0.5) {D};
    	\node at (0.5,-2) {$\mathbf{\tilde{X}}$};
	\end{scope}
	
	
	\begin{scope}[shift={(7.75,-5.25)}]
      	\node at (1.5,3.7) {$\mathbf{U} \star \mathbf{X}$};
		\node at (-0.7,1.5) {$=$};
		\node at (3.7,1.5) {$=$};
		\begin{scope}[shift={(0.5,2.5)},scale=0.7]
			\draw[step=0.5cm] (-0.5,-0.5) grid (0.5,0.5);
			\draw[gray,step=0.5cm] (-1,0) grid (0,1);
			\fill[red!40!gray,opacity=0.7] (-1,0) rectangle ++(1,1);
			\fill[cyan!40!gray,opacity=0.7] (0,0) rectangle ++(0.5,0.5);
			\fill[cyan!40!gray,opacity=0.7] (-0.5,-0.5) rectangle ++(0.5,0.5);
			\fill[cyan!40!gray,opacity=0.7] (0,-0.5) rectangle ++(0.5,0.5);
		\end{scope}
		
		\begin{scope}[shift={(1.5,2.5)},scale=0.7]
			\draw[step=0.5cm] (-0.5,-0.5) grid (0.5,0.5);
			\draw[gray,step=0.5cm] (-0.5,0) grid (0.5,1);
			\fill[red!40!gray,opacity=0.7] (-0.5,0) rectangle ++(1,1);
			\fill[cyan!40!gray,opacity=0.7] (-0.5,-0.5) rectangle ++(0.5,0.5);
			\fill[cyan!40!gray,opacity=0.7] (0,-0.5) rectangle ++(0.5,0.5);
		\end{scope}
		
		\begin{scope}[shift={(2.5,2.5)},scale=0.7]
			\draw[step=0.5cm] (-0.5,-0.5) grid (0.5,0.5);
			\draw[gray,step=0.5cm] (0,0) grid (1,1);
			\fill[red!40!gray,opacity=0.7] (0,0) rectangle ++(1,1);
			\fill[cyan!40!gray,opacity=0.7] (-0.5,0) rectangle ++(0.5,0.5);
			\fill[cyan!40!gray,opacity=0.7] (-0.5,-0.5) rectangle ++(0.5,0.5);
			\fill[cyan!40!gray,opacity=0.7] (0,-0.5) rectangle ++(0.5,0.5);
		\end{scope}
		
		\begin{scope}[shift={(0.5,1.5)},scale=0.7]
			\draw[step=0.5cm] (-0.5,-0.5) grid (0.5,0.5);
			\draw[gray,step=0.5cm] (-1,-0.5) grid (0,0.5);
			\fill[red!40!gray,opacity=0.7] (-1,-0.5) rectangle ++(1,1);
			\fill[cyan!40!gray,opacity=0.7] (0,0) rectangle ++(0.5,0.5);
			\fill[cyan!40!gray,opacity=0.7] (0,-0.5) rectangle ++(0.5,0.5);
		\end{scope}
		
		\begin{scope}[shift={(1.5,1.5)},scale=0.7]
			\draw[step=0.5cm] (-0.5,-0.5) grid (0.5,0.5);
			\draw[gray,step=0.5cm] (-0.5,-0.5) grid (0.5,0.5);
			\fill[red!40!gray,opacity=0.7] (-0.5,-0.5) rectangle ++(1,1);
		\end{scope}
		
		\begin{scope}[shift={(2.5,1.5)},scale=0.7]
			\draw[step=0.5cm] (-0.5,-0.5) grid (0.5,0.5);
			\draw[gray,step=0.5cm] (0,-0.5) grid (1,0.5);
			\fill[red!40!gray,opacity=0.7] (0,-0.5) rectangle ++(1,1);
			\fill[cyan!40!gray,opacity=0.7] (-0.5,0) rectangle ++(0.5,0.5);
			\fill[cyan!40!gray,opacity=0.7] (-0.5,-0.5) rectangle ++(0.5,0.5);
		\end{scope}

		\begin{scope}[shift={(0.5,0.5)},scale=0.7]
			\draw[step=0.5cm] (-0.5,-0.5) grid (0.5,0.5);
			\draw[gray,step=0.5cm] (-1,-1) grid (0,0);
			\fill[red!40!gray,opacity=0.7] (-1,-1) rectangle ++(1,1);
			\fill[cyan!40!gray,opacity=0.7] (0,0) rectangle ++(0.5,0.5);
			\fill[cyan!40!gray,opacity=0.7] (-0.5,0) rectangle ++(0.5,0.5);
			\fill[cyan!40!gray,opacity=0.7] (0,-0.5) rectangle ++(0.5,0.5);
		\end{scope}
		
		\begin{scope}[shift={(1.5,0.5)},scale=0.7]
			\draw[step=0.5cm] (-0.5,-0.5) grid (0.5,0.5);
			\draw[gray,step=0.5cm] (-0.5,-1) grid (0.5,0);
			\fill[red!40!gray,opacity=0.7] (-0.5,-1) rectangle ++(1,1);
			\fill[cyan!40!gray,opacity=0.7] (-0.5,0) rectangle ++(0.5,0.5);
			\fill[cyan!40!gray,opacity=0.7] (0,0) rectangle ++(0.5,0.5);
		\end{scope}
		
		\begin{scope}[shift={(2.5,0.5)},scale=0.7]
			\draw[step=0.5cm] (-0.5,-0.5) grid (0.5,0.5);
			\draw[gray,step=0.5cm] (0,-1) grid (1,0);
			\fill[red!40!gray,opacity=0.7] (0,-1) rectangle ++(1,1);
			\fill[cyan!40!gray,opacity=0.7] (-0.5,0) rectangle ++(0.5,0.5);
			\fill[cyan!40!gray,opacity=0.7] (-0.5,-0.5) rectangle ++(0.5,0.5);
			\fill[cyan!40!gray,opacity=0.7] (0,0) rectangle ++(0.5,0.5);
		\end{scope}
	\end{scope}
		
	\begin{scope}[shift={(-0.5,0)}]
	\begin{scope}[shift={(13,0)},scale=0.5]	
		\draw[step=1cm] (-1,-1) grid (1,1);
		\fill[red!40!gray,opacity=0.7] (-1,-1) rectangle ++(2,2);
    	\node at (-0.5,0.5) {1};
		\node at (0.5,0.5) {2};
		\node at (-0.5,-0.5) {3};
	  	\node at (0.5,-0.5) {4};
		\node at (0,1.6) {Filter};
		\node at (-2.4,0) {$\mathbf{U} = $};
	\end{scope}
	
	\begin{scope}[shift={(15.5,0)},scale=0.5]
		\draw[step=1cm] (-1,-1) grid (1,1);
		\fill[cyan!40!gray,opacity=0.7] (-1,-1) rectangle ++(2,2);
    	\node at (-0.5,0.5) {A};
		\node at (0.5,0.5) {B};
		\node at (-0.5,-0.5) {C};
	  	\node at (0.5,-0.5) {D};
		\node at (0,1.6) {Signal};
		\node at (-2.4,0) {$\mathbf{X} = $};
  	\end{scope}
	
	\begin{scope}[shift={(12.5,-2)},scale=0.5]
		\draw[step=1cm] (0,0) grid (4,1);
		\fill[red!40!gray,opacity=0.7] (0,0) rectangle ++(4,1);
		\node at (0.5,0.5) {1};
		\node at (1.5,0.5) {2};
		\node at (2.5,0.5) {3};
	  	\node at (3.5,0.5) {4};
            \node at (2.3,-4.7) {$\mathbf{\Tilde{U}}$};

	\end{scope}
	
	\begin{scope}[shift={(15,-3.5)},scale=0.5]
		\draw[step=1cm] (0,0) grid (9,4);
		\node at (0.5,0.5) {A};
		\fill[cyan!40!gray,opacity=0.7] (0,0) rectangle ++(1,1);
		
		\node at (1.5,1.5) {A};
		\node at (1.5,0.5) {B};
		\fill[cyan!40!gray,opacity=0.7] (1,1) rectangle ++(1,1);
		\fill[cyan!40!gray,opacity=0.7] (1,0) rectangle ++(1,1);
		
		\node at (2.5,1.5) {B};
		\fill[cyan!40!gray,opacity=0.7] (2,1) rectangle ++(1,1);
		
		\node at (3.5,2.5) {A};
		\node at (3.5,0.5) {C};
		\fill[cyan!40!gray,opacity=0.7] (3,2) rectangle ++(1,1);
		\fill[cyan!40!gray,opacity=0.7] (3,0) rectangle ++(1,1);
		
		\node at (4.5,3.5) {A};
		\node at (4.5,2.5) {B};
		\node at (4.5,1.5) {C};
		\node at (4.5,0.5) {D};
		\fill[cyan!40!gray,opacity=0.7] (4,3) rectangle ++(1,1);
		\fill[cyan!40!gray,opacity=0.7] (4,2) rectangle ++(1,1);
		\fill[cyan!40!gray,opacity=0.7] (4,1) rectangle ++(1,1);
		\fill[cyan!40!gray,opacity=0.7] (4,0) rectangle ++(1,1);
		
		\node at (5.5,3.5) {B};
		\node at (5.5,1.5) {D};
		\fill[cyan!40!gray,opacity=0.7] (5,3) rectangle ++(1,1);
		\fill[cyan!40!gray,opacity=0.7] (5,1) rectangle ++(1,1);
		
		\node at (6.5,2.5) {C};
		\fill[cyan!40!gray,opacity=0.7] (6,2) rectangle ++(1,1);
		
		\node at (7.5,3.5) {C};
		\node at (7.5,2.5) {D};
		\fill[cyan!40!gray,opacity=0.7] (7,3) rectangle ++(1,1);
		\fill[cyan!40!gray,opacity=0.7] (7,2) rectangle ++(1,1);
		
		\node at (8.5,3.5) {D};
		\fill[cyan!40!gray,opacity=0.7] (8,3) rectangle ++(1,1);
        \node at (4,-1.7) {$\mathbf{\tilde{X}}$};

	\end{scope}
	\end{scope}
\end{tikzpicture}
\caption{There are various ways to compute convolutions $\mathbf{U} \star \mathbf{X}$ between a tensor filter $\mathbf{U}$ and a 2D signal $\mathbf{X}$ (in the middle) from 
 matrix multiplications. We illustrate the approach taken in \cite{garrigaalonso_2019} on the left, where the reshaping procedure is applied to the filter, whilst the method we followed, shown on the right, consists of reshaping the signal instead in order to define special structures on the CNN filters such as orthogonality, sparsity and low-rank.}\label{fig:convolution}
\end{figure}
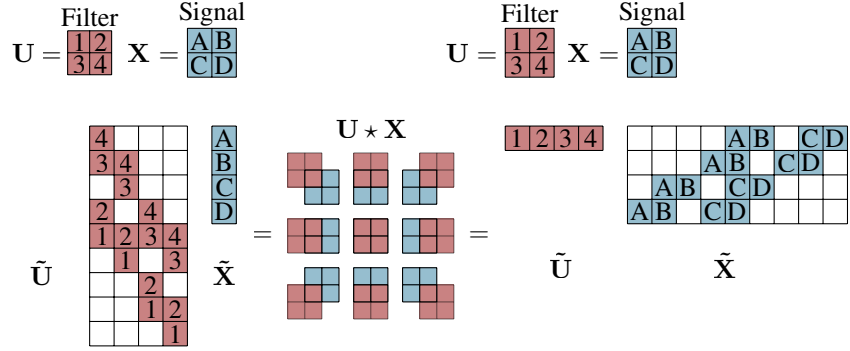

Now let us check that what we defined to be orthogonal filters in the previous section verify the conditions of Definition\ \ref{pseudoiid_cnn}. Each filter $\mathbf{U}_{i,j}$ is flattened as $\mathbf{\Tilde{U}}_{i,j} \in \mathbb{R}^{1 \times k^2}$ and forms part of a row of $\mathbf{\Tilde{U}}$ as shown below:
\begin{equation}
    \mathbf{\Tilde{U}} =
    \begin{bmatrix} \vspace{8pt}
        & \cellcolor{blue!20} \mathbf{\Tilde{U}}_{1,1} & & \cellcolor{blue!20} \mathbf{\Tilde{U}}_{1,2} & \cdots & \cellcolor{blue!20} \mathbf{\Tilde{U}}_{1,c_{in}} & \\
        & \cellcolor{blue!20} \mathbf{\Tilde{U}}_{2,1} & & \cellcolor{blue!20} \mathbf{\Tilde{U}}_{2,2} & \cdots & \cellcolor{blue!20} \mathbf{\Tilde{U}}_{2,c_{in}} & \\
        & \vdots & & \vdots & \ddots & \vdots & \\
        & \cellcolor{blue!20} \mathbf{\Tilde{U}}_{c_{out},1} & & \cellcolor{blue!20} \mathbf{\Tilde{U}}_{c_{out},2} & \cdots & \cellcolor{blue!20} \mathbf{\Tilde{U}}_{c_{out},c_{in}} &
    \end{bmatrix}.
\end{equation}
Applying permutations on the indices $i$ and $j$ translates to permuting rows and ``column blocks'' of the orthogonal matrix $\mathbf{\Tilde{U}}$, which does not affect the joint distribution of its entries. Hence, the kernel's distribution is unaffected, and therefore $\mathbf{U}$ is row- and column-exchangeable. The moment conditions are both straightforward to check as
$    \mathbf{U}_{i, j, \boldsymbol{\mu}} = \mathbf{\Tilde{U}}_{i, (j-1)k^2+\mu}, \: \mu \in \{1, \cdots, k^2\},$
where $\mu$ is the counting number of the pixel $\boldsymbol{\mu}$. To check condition (iv), note that $\mathbb{E}\big(\mathbf{U}^{(\ell)}_{i_{a},1, \boldsymbol{\mu}_a}\mathbf{U}^{(\ell)}_{i_{b},1, \boldsymbol{\mu}_b}\mathbf{U}^{(\ell)}_{i_{c},2, \boldsymbol{\mu}_c}\mathbf{U}^{(\ell)}_{i_{d},2, \boldsymbol{\mu}_d}\big) = \mathbb{E} \big(\mathbf{\Tilde{U}}_{i_a,\mu_a} \mathbf{\Tilde{U}}_{i_b,\mu_b} \mathbf{\Tilde{U}}_{i_c,k^2+\mu_c} \mathbf{\Tilde{U}}_{i_d,k^2+\mu_d} \big)$, that is a four-cross product of the entries of an orthogonal matrix, whose expectation is explicitly known to be $\frac{C_{\ell}+1}{(C_{\ell}-1)C_{\ell}(C_{\ell}+2)} \delta_{i_a, i_b}\delta_{i_c, i_d}\delta_{\mu_a, \mu_b}\delta_{\mu_c, \mu_d}$ \cite[Lemma 3]{huang_2021}.

\subsection{Simulations of the Gaussian Processes in Theorem \ref{thm:main_FCN} for fully connected networks with \pseudoiid\ weights}

Theorem \ref{thm:main_FCN} establishes that random fully connected \pseudoiid\ networks converge to Gaussian Processes in the infinite width limit. Here we conduct numerical simulations which validate this for modest dimensions of width $N_{\ell}=n$ for $n=3$, $30$, and $300$. Figure\ \ref{fig:histograms_gaussianity} shows histograms of the empirical distribution of the preactivations when the weights are drawn from various \pseudoiid\ distributions.\footnote{Experiments conducted for Figures\ \ref{fig:histograms_gaussianity}-\ref{fig:independence_plots} used a fully connected network with  activation $\phi(x)=\tanh(x)$, weight variance $\sigma_w=2$ and without bias.  Dropout used probability $1/2$ of setting an entry to zero, low-rank used rank $\lceil n/2\rceil$, and block-sparsity used randomly permuted block-diagonal matrices with block-size $\lceil n/5\rceil$. The code to reproduce all these figures can be found at \url{https://shorturl.at/gNOQ0}.} Even at $n=30$ there is excellent agreement of the histograms with the limiting variance predicted by our Theorem \ref{thm:main_FCN}.  These histograms in Figure\ \ref{fig:histograms_gaussianity} are further quantified with Q-Q plots in Appendix \ref{sec:further_plots}.

Figure\ \ref{fig:joint_distributiion_dependence} illustrates the rate with which two independent inputs $x_a$ and $x_b$ generate feature maps corresponding to uncorrelated Gaussian Processes, when passing through a network initialized with \pseudoiid\ weights. The experiment is done by examining the joint empirical distribution of $h^{(\ell)}_i(x_a)[n]$ and $h^{(\ell)}_i(x_b)[n]$ at the same neuron index $i$ and layer $\ell$ through the same network.  The level curves depict the theoretical joint Gaussian distribution given by the covariance kernel in Theorem \ref{thm:main_FCN}. Interestingly, we observe the fastest convergence to the Gaussian Process in the orthogonal initialization. The other \pseudoiid\ distributions considered exhibit good agreement at $n=30$ which improves at $n=300$. Some more experiments for fully connected networks and CNNs initialized with orthogonal filters can be found in respectively, Appendix \ref{sec:further_plots} and \ref{app:plots_cnn_orthogonal}.

\begin{figure}
    \centering
    \begin{tabular}{ m{6em} m{0.15\textwidth} m{0.15\textwidth} m{0.15\textwidth}}
         & \centering width $= 3$ & \centering width $= 30$ & \parbox{0.15\textwidth}{\centering
  width $ = 300$} \\ 
        \iid\ Uniform &
            \includegraphics[width=0.15\textwidth,trim={27 17 0 0},clip]{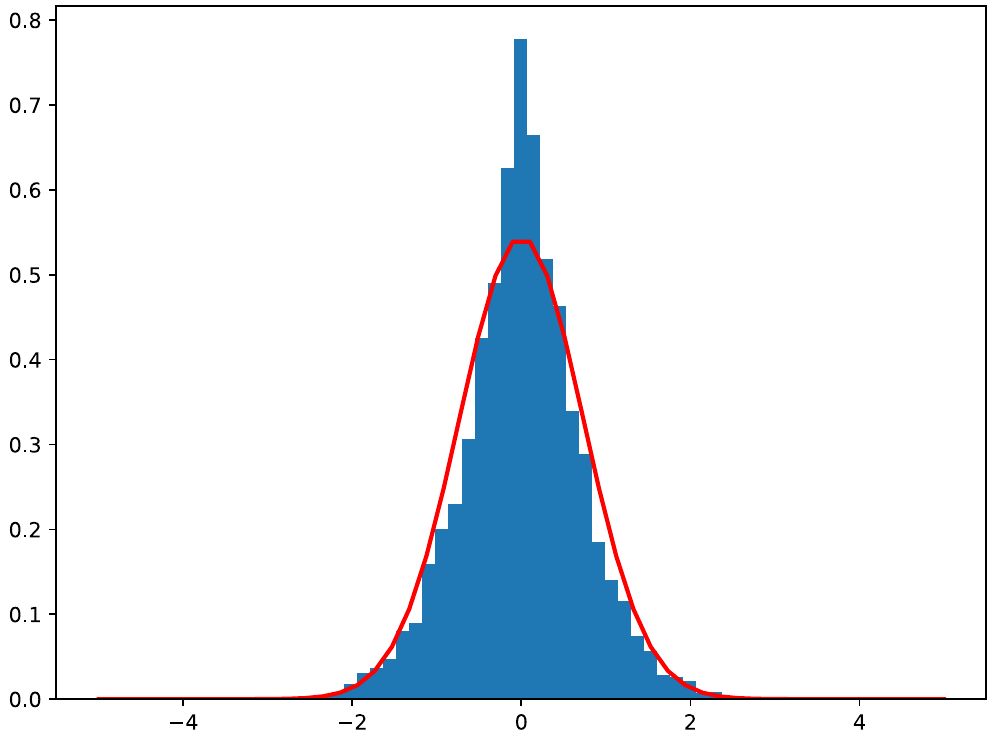} &
            \includegraphics[width=0.15\textwidth,trim={31 20 0 0},clip]{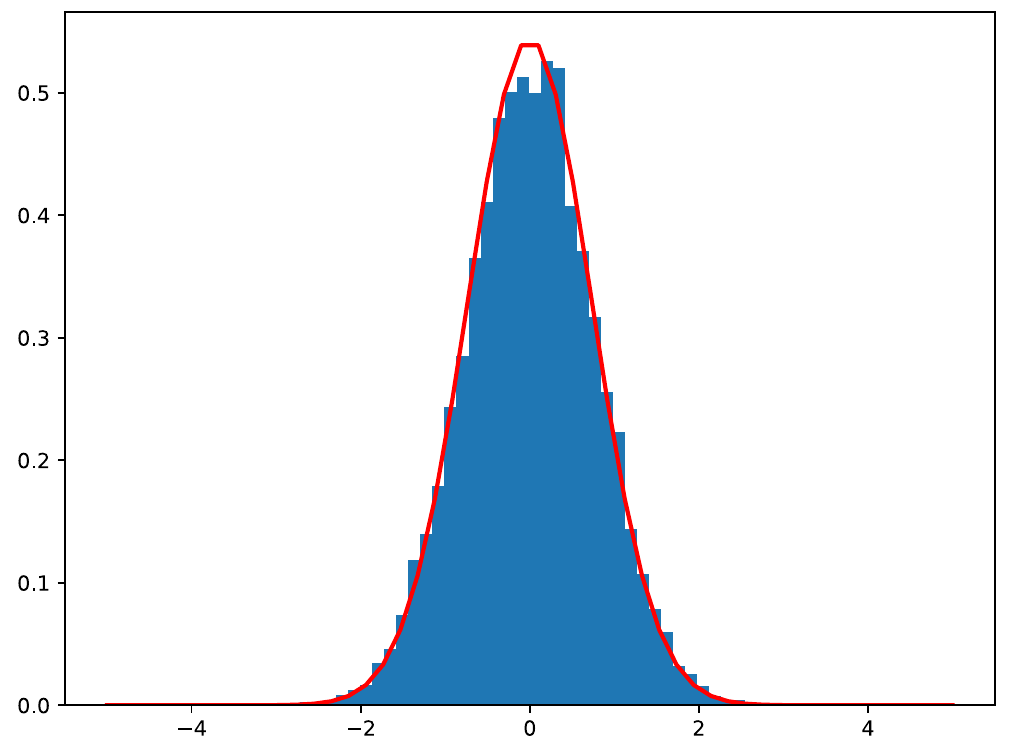} &
            \includegraphics[width=0.15\textwidth,trim={27 18 0 0},clip]{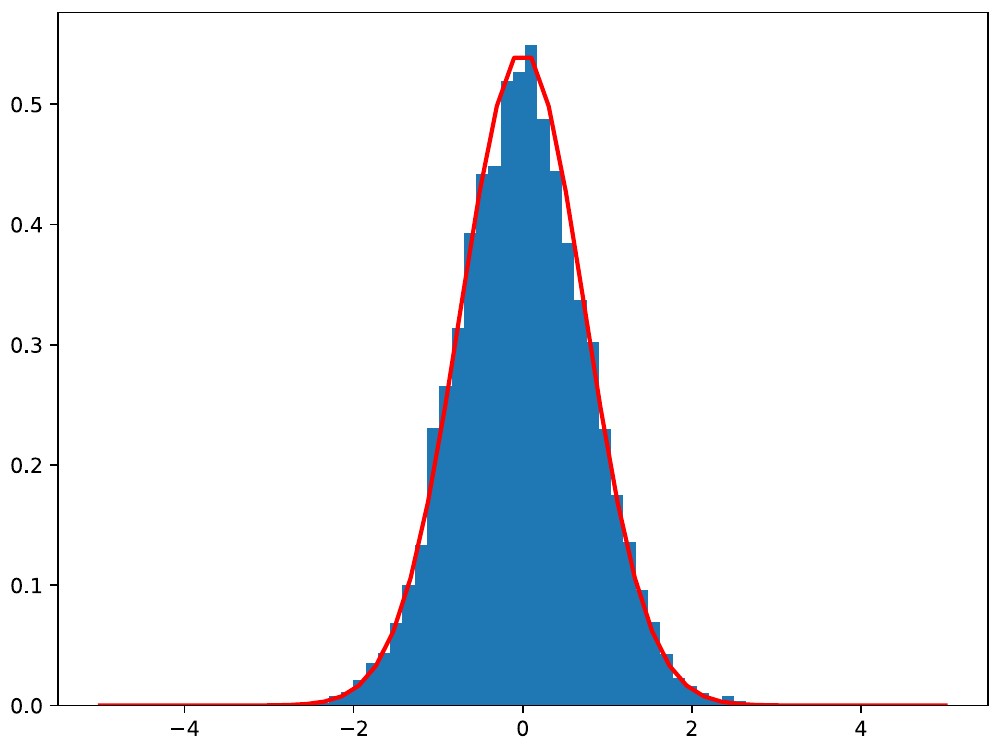}\\
        \iid\ Gaussian with dropout &
            \includegraphics[width=0.15\textwidth,trim={30 19 0 0},clip]{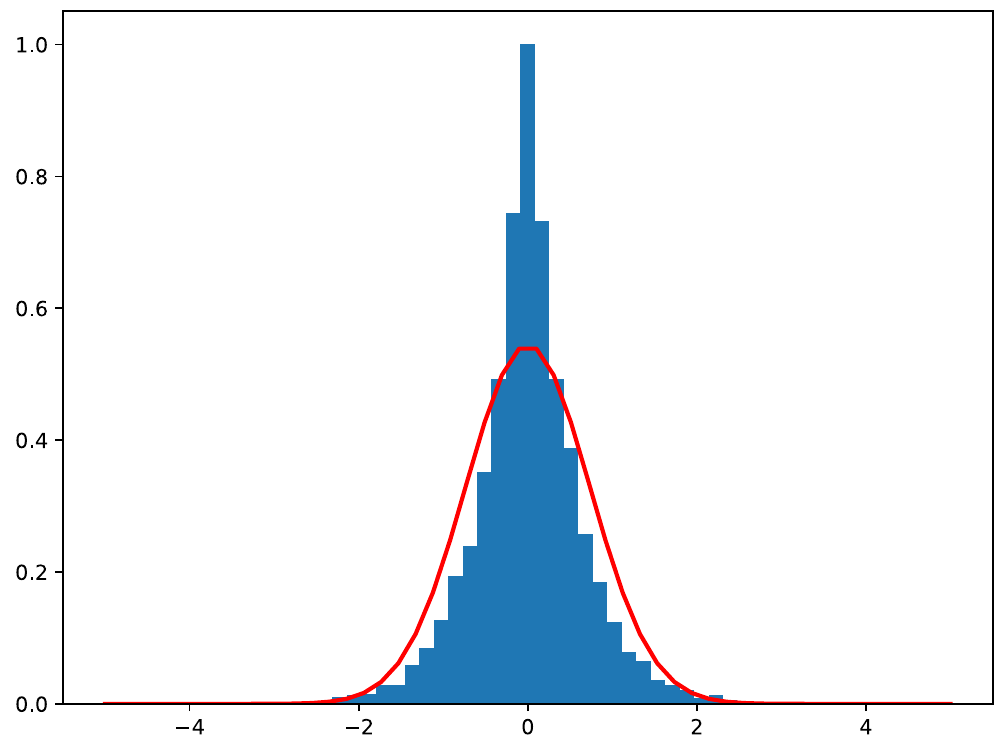} & 
            \includegraphics[width=0.15\textwidth,trim={29 17 0 0},clip]{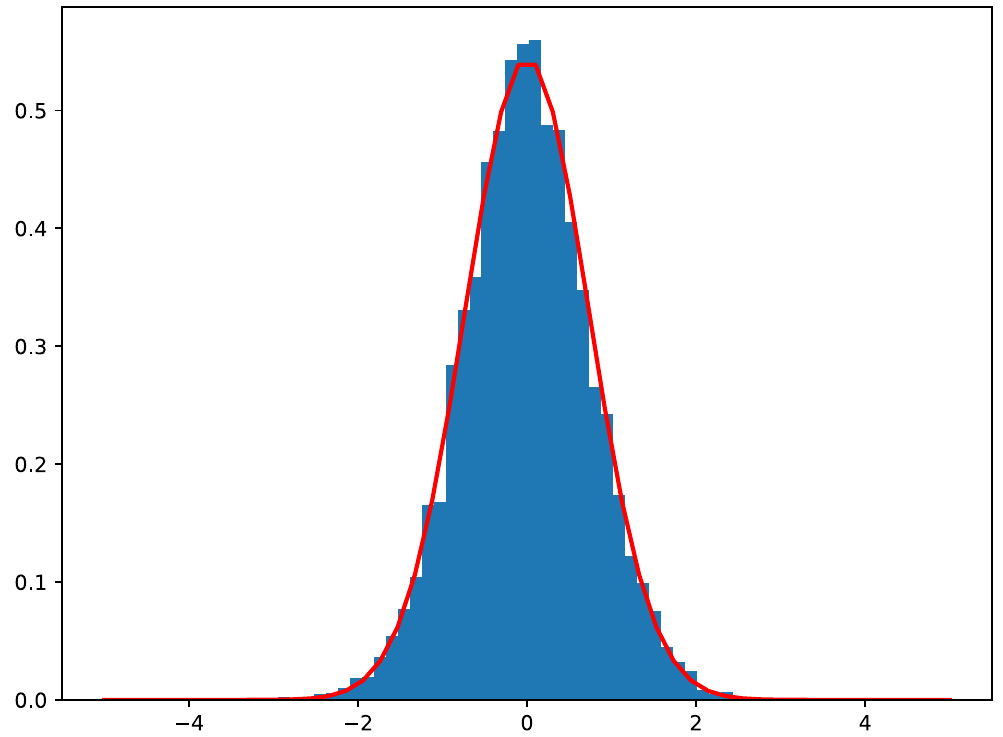} &
            \includegraphics[width=0.15\textwidth,trim={25 20 0 0},clip]{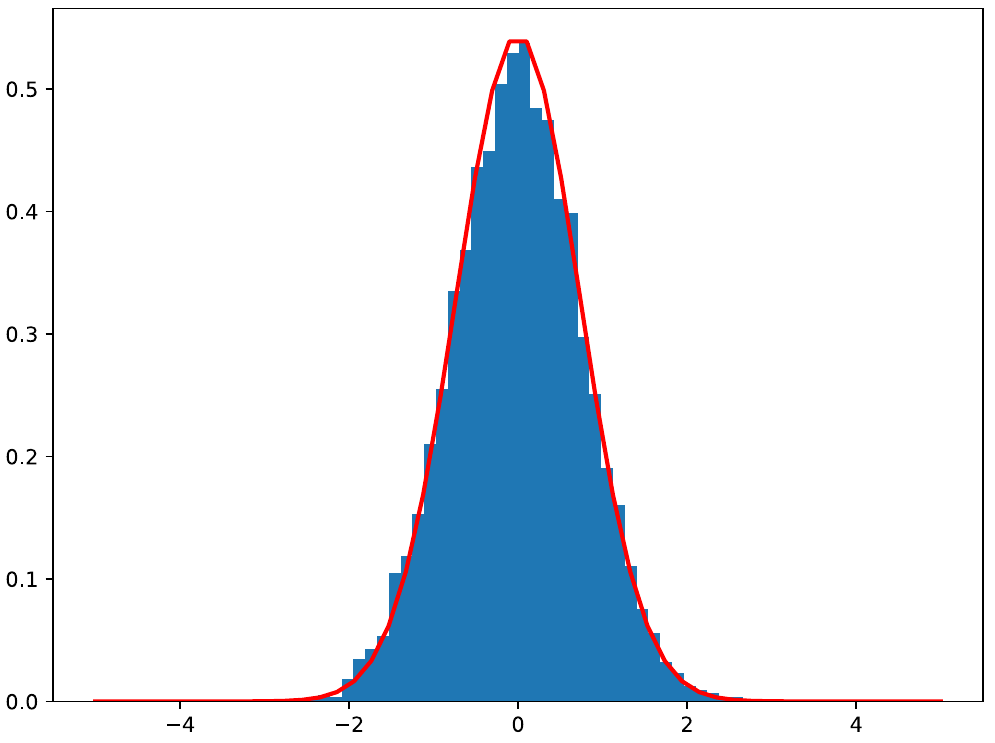} \\
        \pseudoiid\ Gaussian low-rank &
            \includegraphics[width=0.15\textwidth,trim={34 18 0 0},clip]{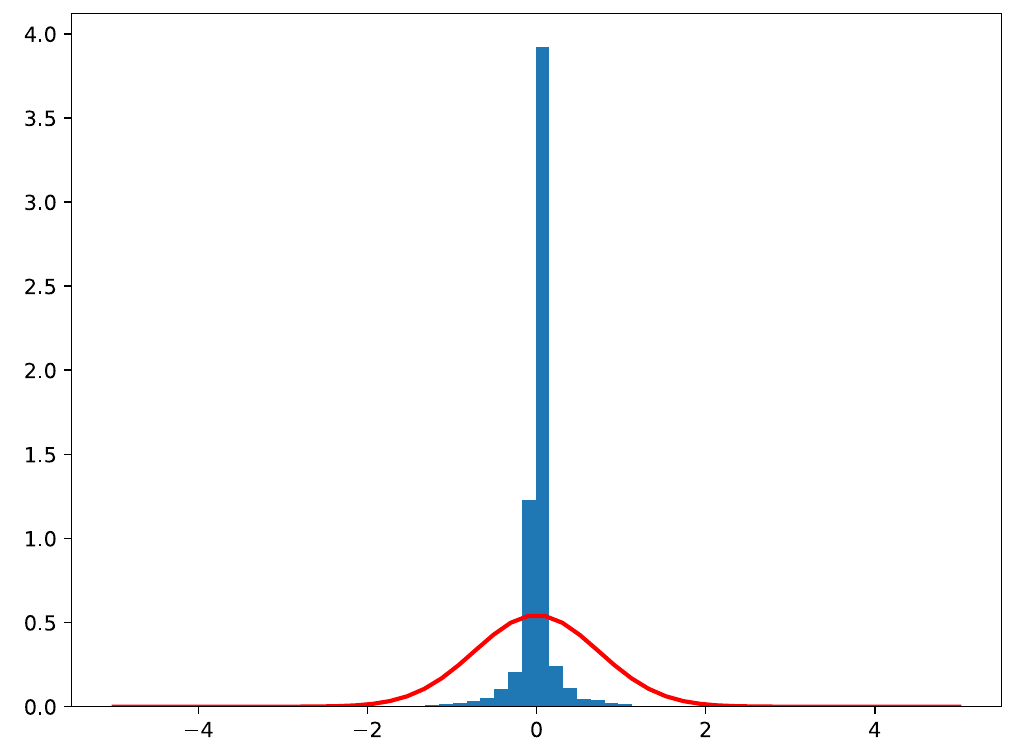}&
            \includegraphics[width=0.15\textwidth,trim={34 18 0 0},clip]{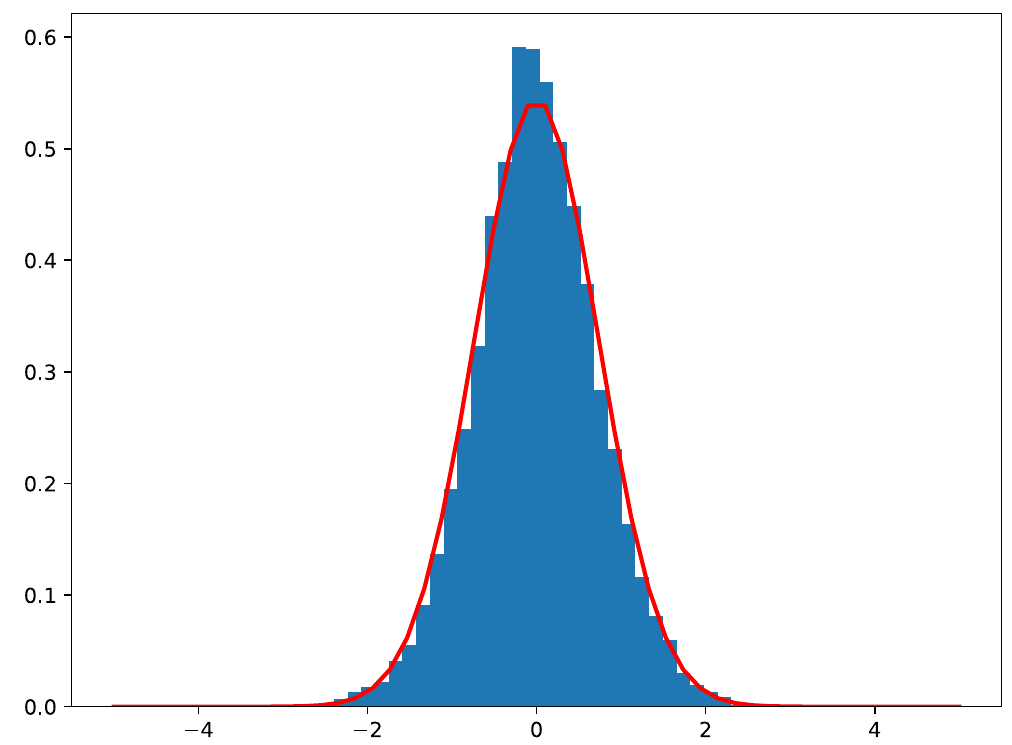}&
            \includegraphics[width=0.15\textwidth,trim={28.5 19 0 0},clip]{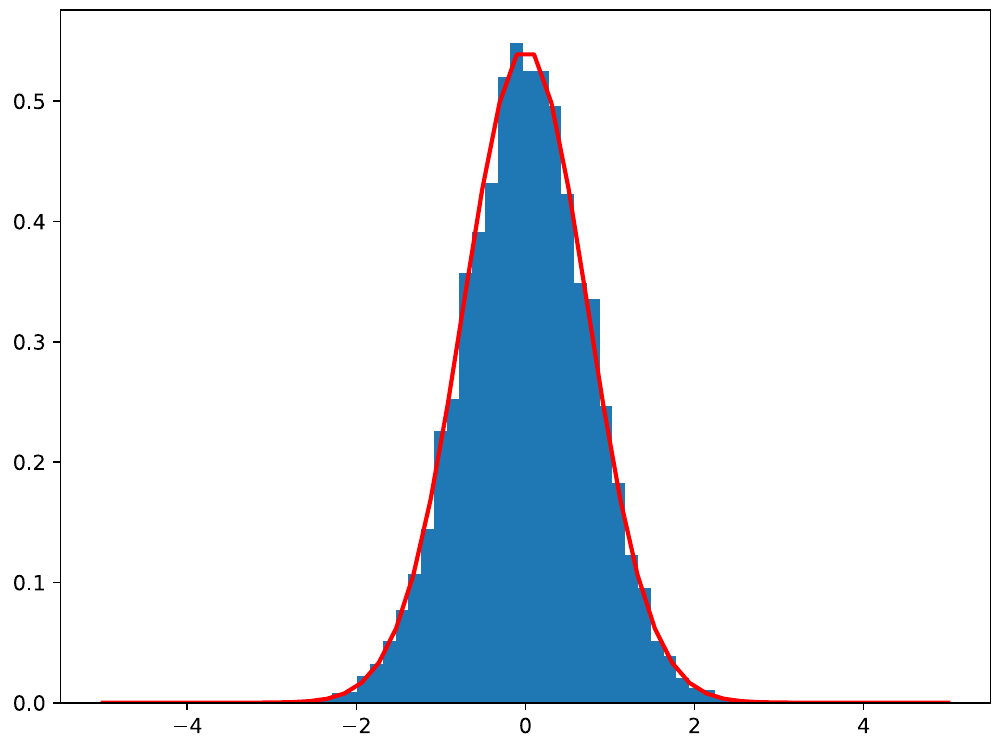} \\
        \pseudoiid\ Gaussian structured sparse &
            \includegraphics[width=0.15\textwidth,trim={26.5 26 0 0},clip]{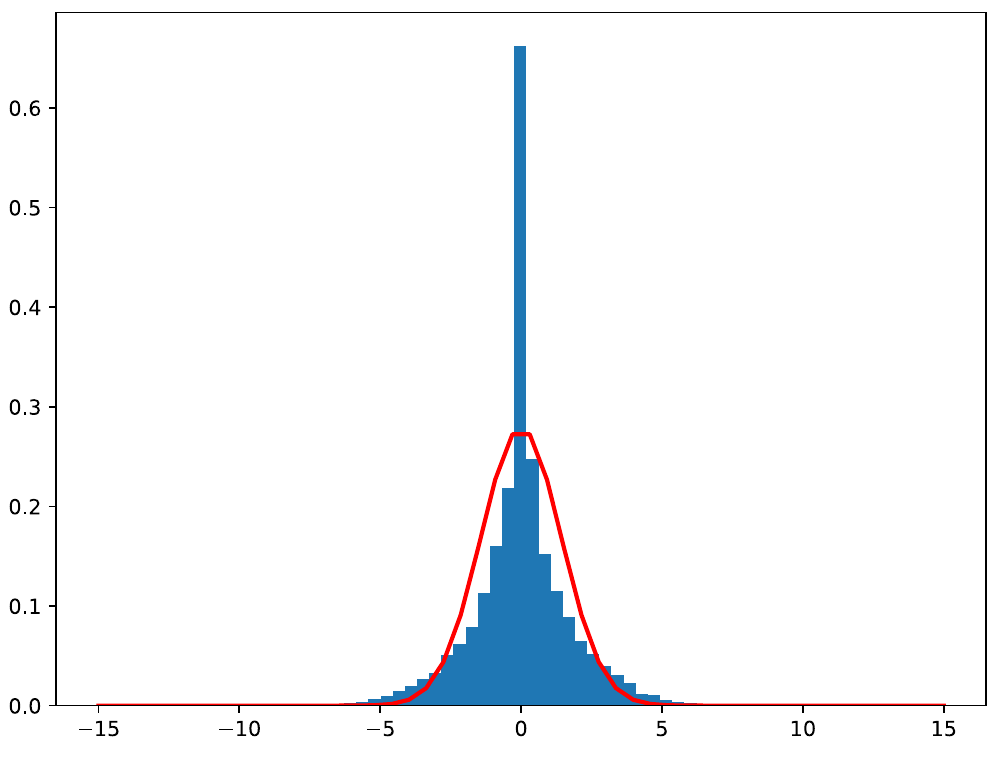}&
            \includegraphics[width=0.15\textwidth,trim={35 20 0 0},clip]{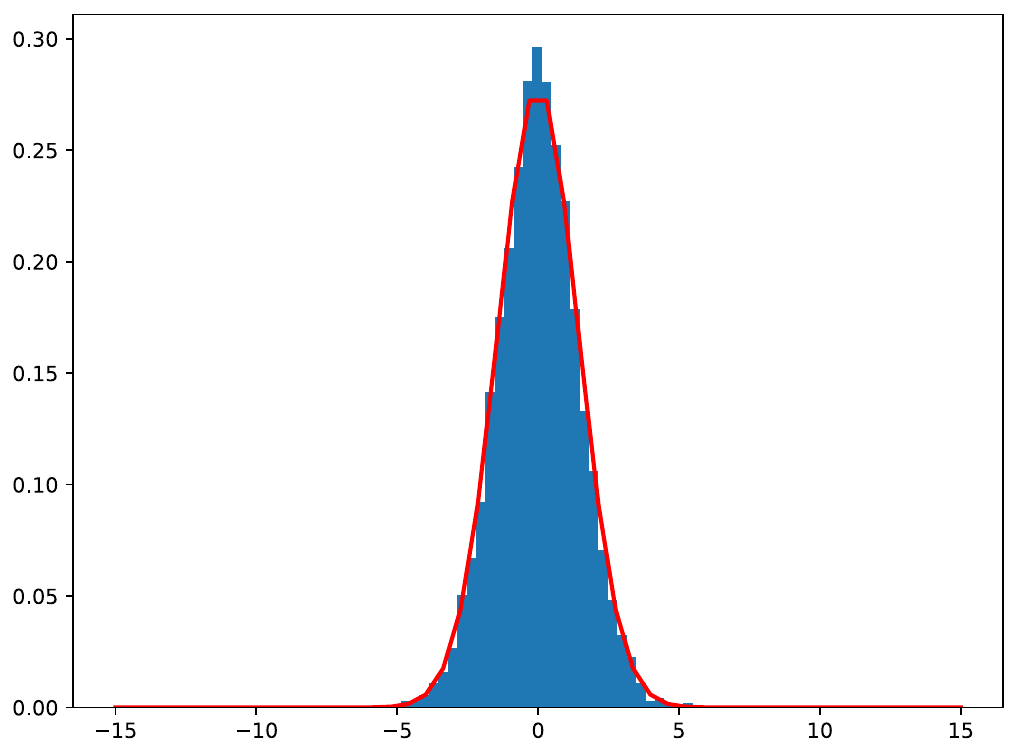}&
            \includegraphics[width=0.15\textwidth,trim={32.5 20 0 0},clip]{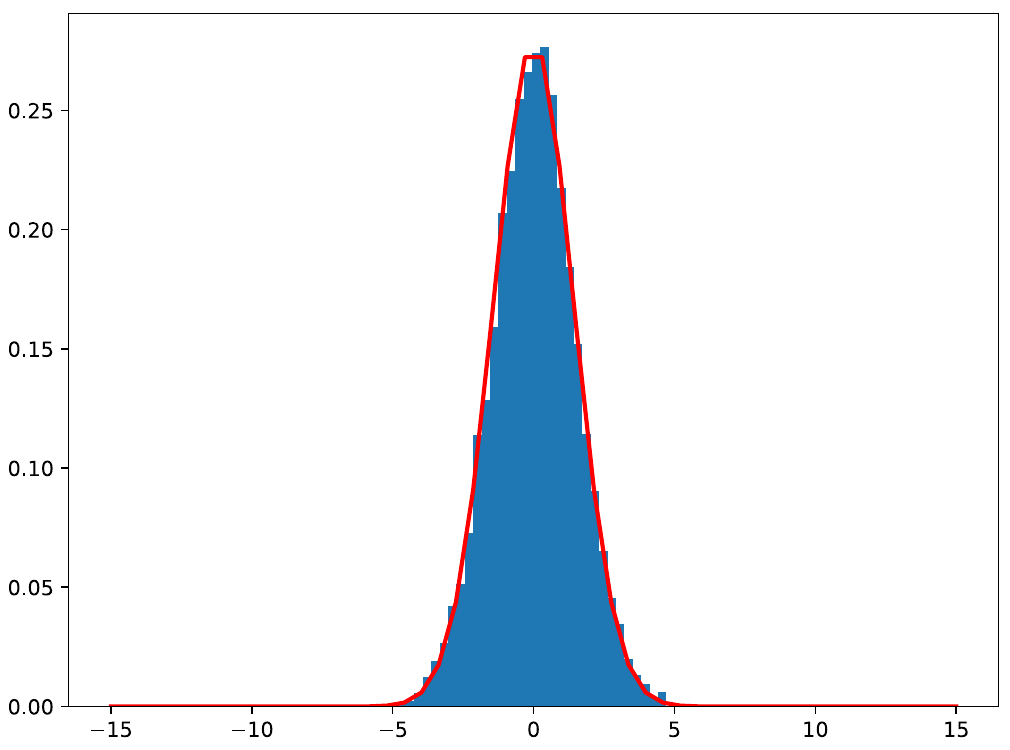}
    \end{tabular}
    \caption{For different instances of the \pseudoiid\ regime, in the limit, the preactivation given in the first neuron at the fifth layer tends to a Gausssian whose moments are given by Theorem \ref{thm:main_FCN}. The experiments were conducted $10000$ times on a random $7$-layer deep fully connected network with input data sampled from $\mathbb{S}^8$.}
    \label{fig:histograms_gaussianity}
\end{figure}

\begin{figure}[t]
    \centering
    \begin{tabular}{ m{6em} m{0.15\textwidth} m{0.15\textwidth} m{0.15\textwidth}}
         & \centering width $= 3$ & \centering width $= 30$ & \parbox{0.15\textwidth}{\centering
  width $ = 300$} \\ 
          \iid\ Uniform with dropout &
            \includegraphics[width=0.15\textwidth,trim={71 47 55 30},clip]{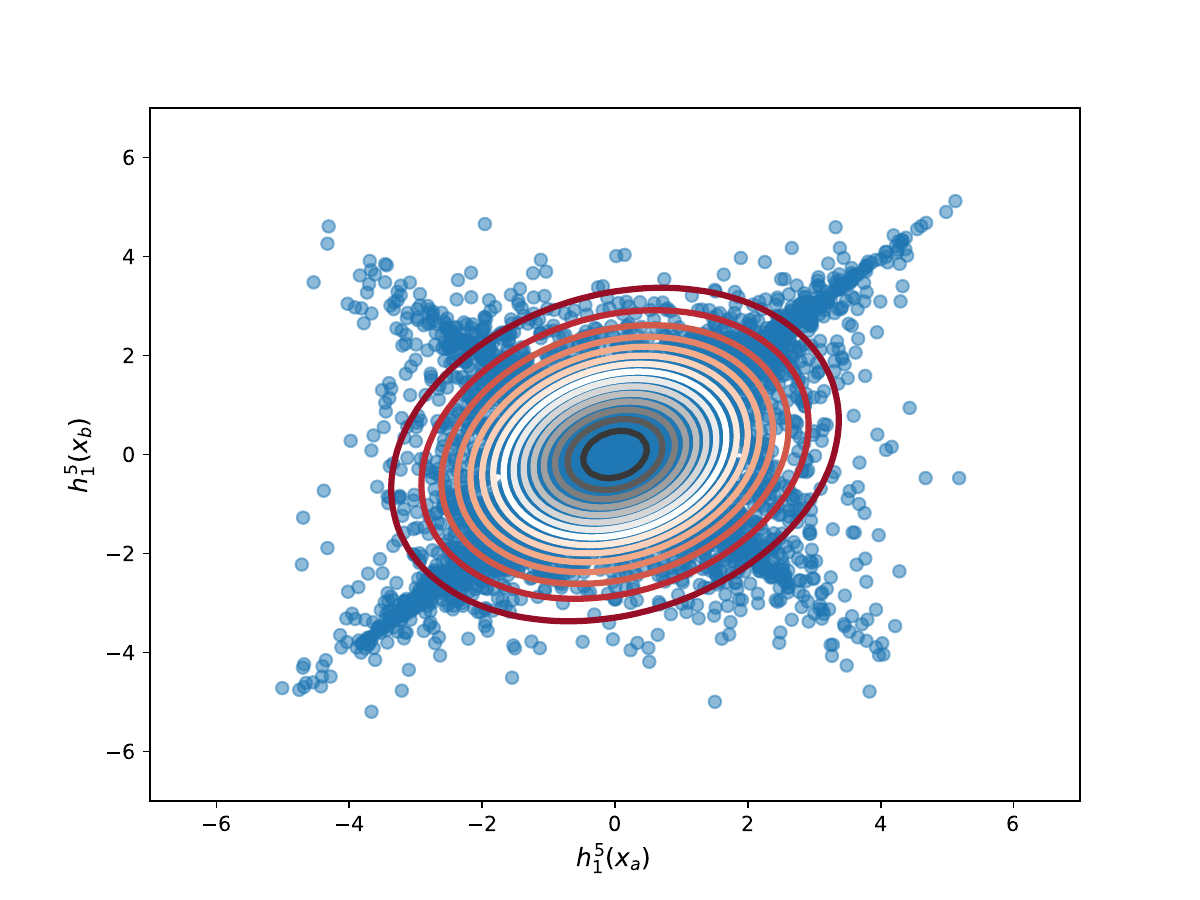} &
            \includegraphics[width=0.15\textwidth,trim={71 47 55 30},clip]{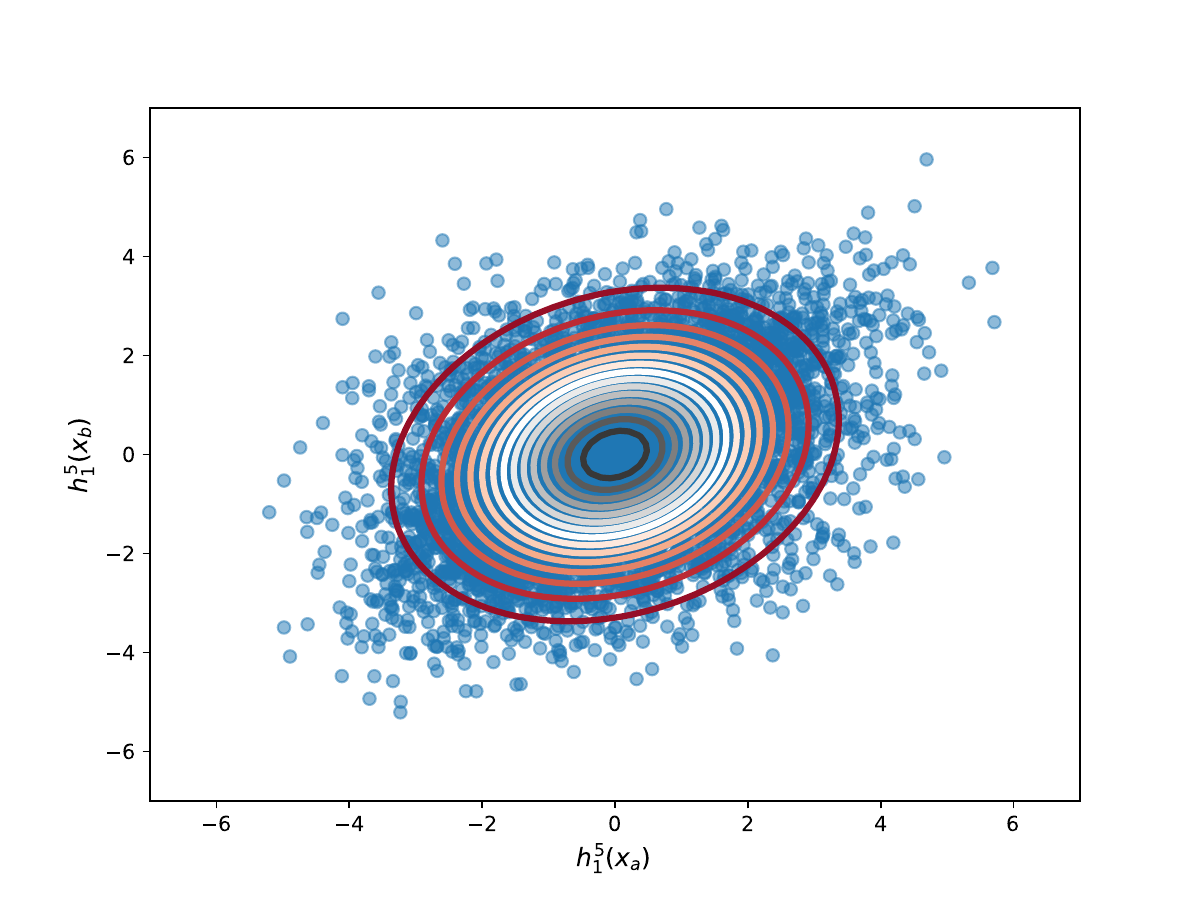} &
            \includegraphics[width=0.15\textwidth,trim={71 47 55 30},clip]{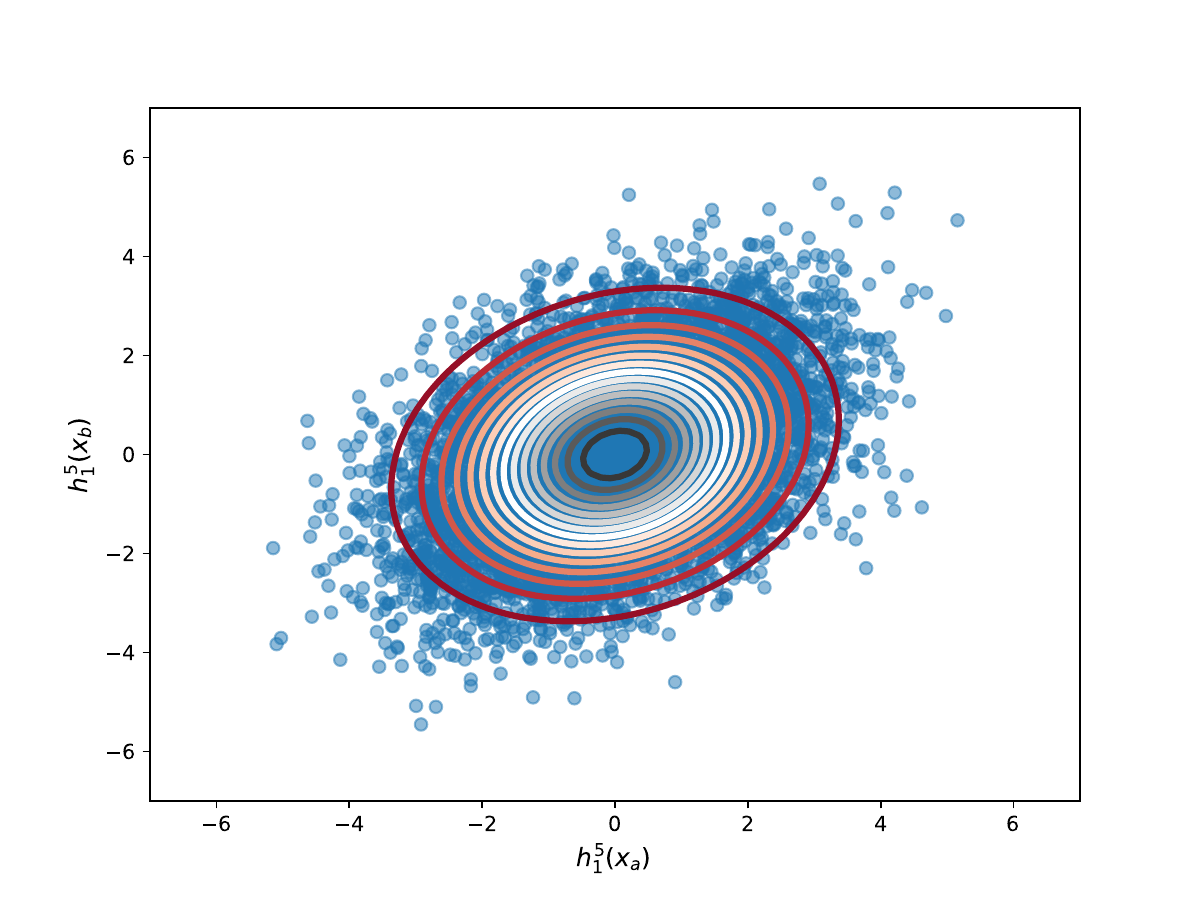}\\
        \pseudoiid\ Orthogonal &
            \includegraphics[width=0.15\textwidth,trim={71 47 55 30},clip]{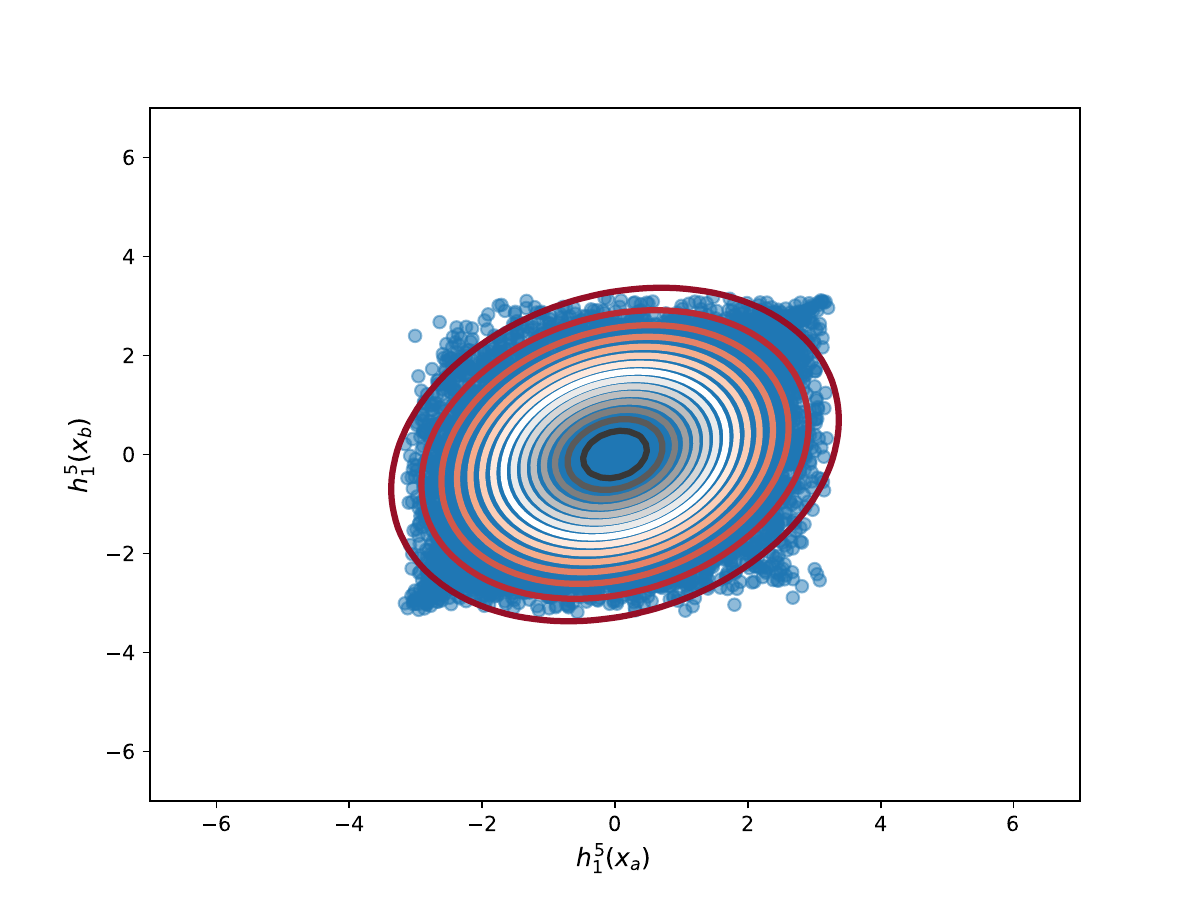} &
            \includegraphics[width=0.15\textwidth,trim={71 47 55 30},clip]{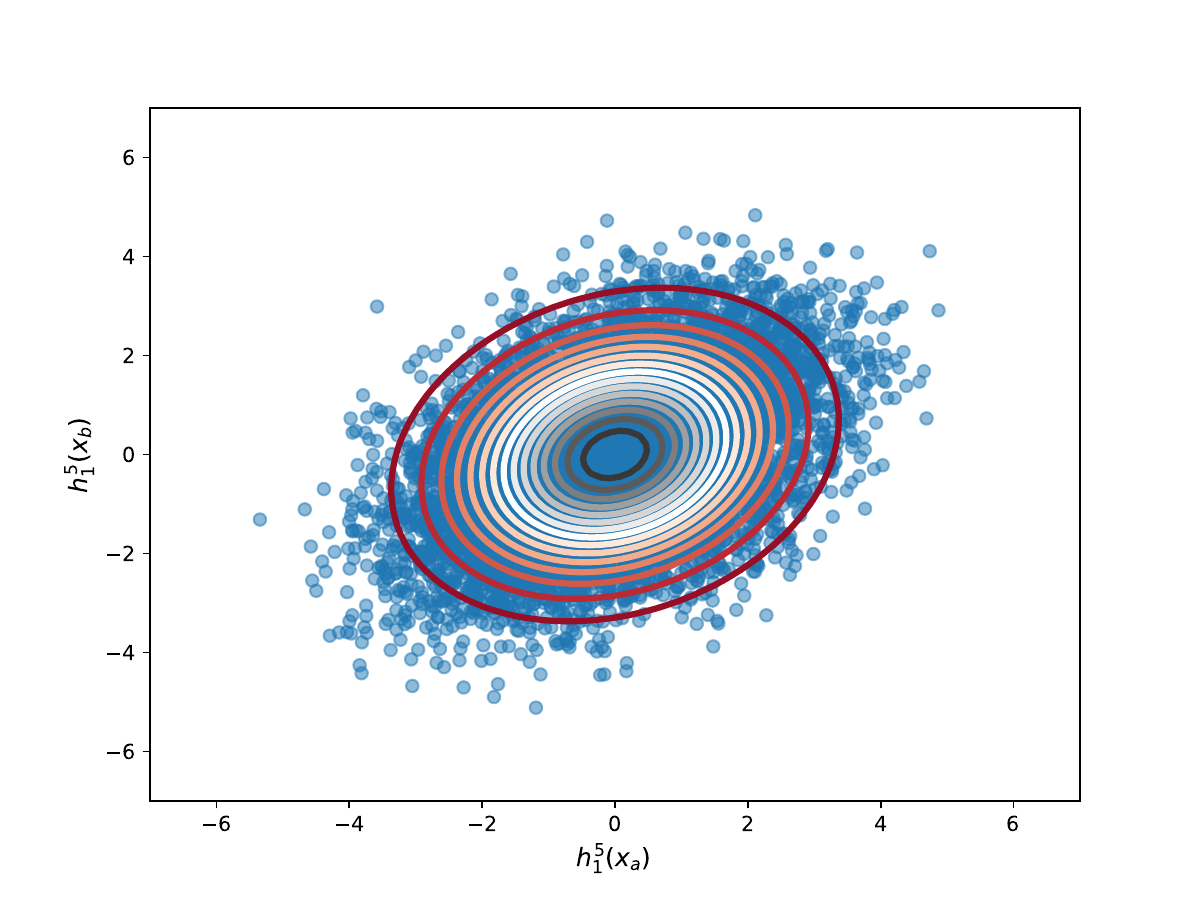} &
            \includegraphics[width=0.15\textwidth,trim={71 47 55 30},clip]{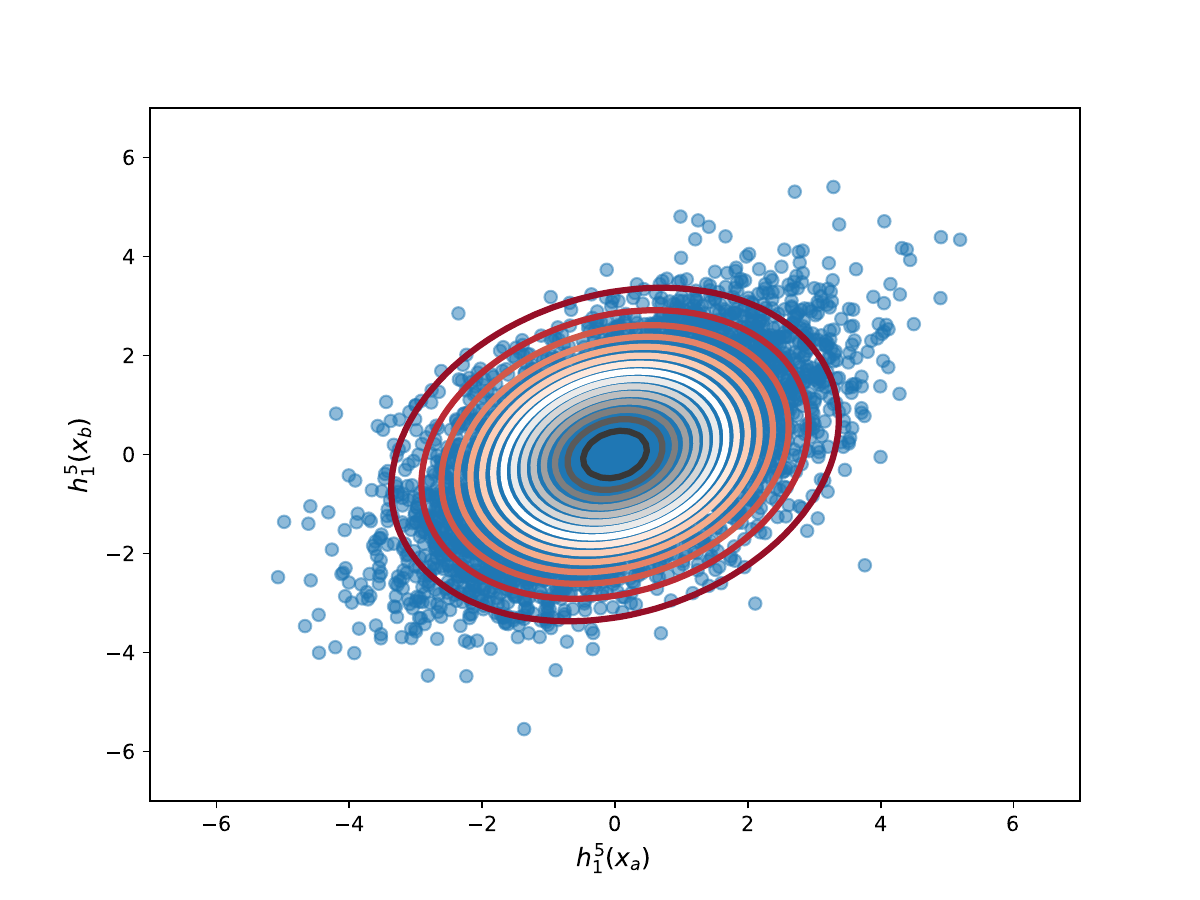}\\
        \pseudoiid\ Gaussian low-rank & 
            \includegraphics[width=0.15\textwidth,trim={71 47 55 30},clip]{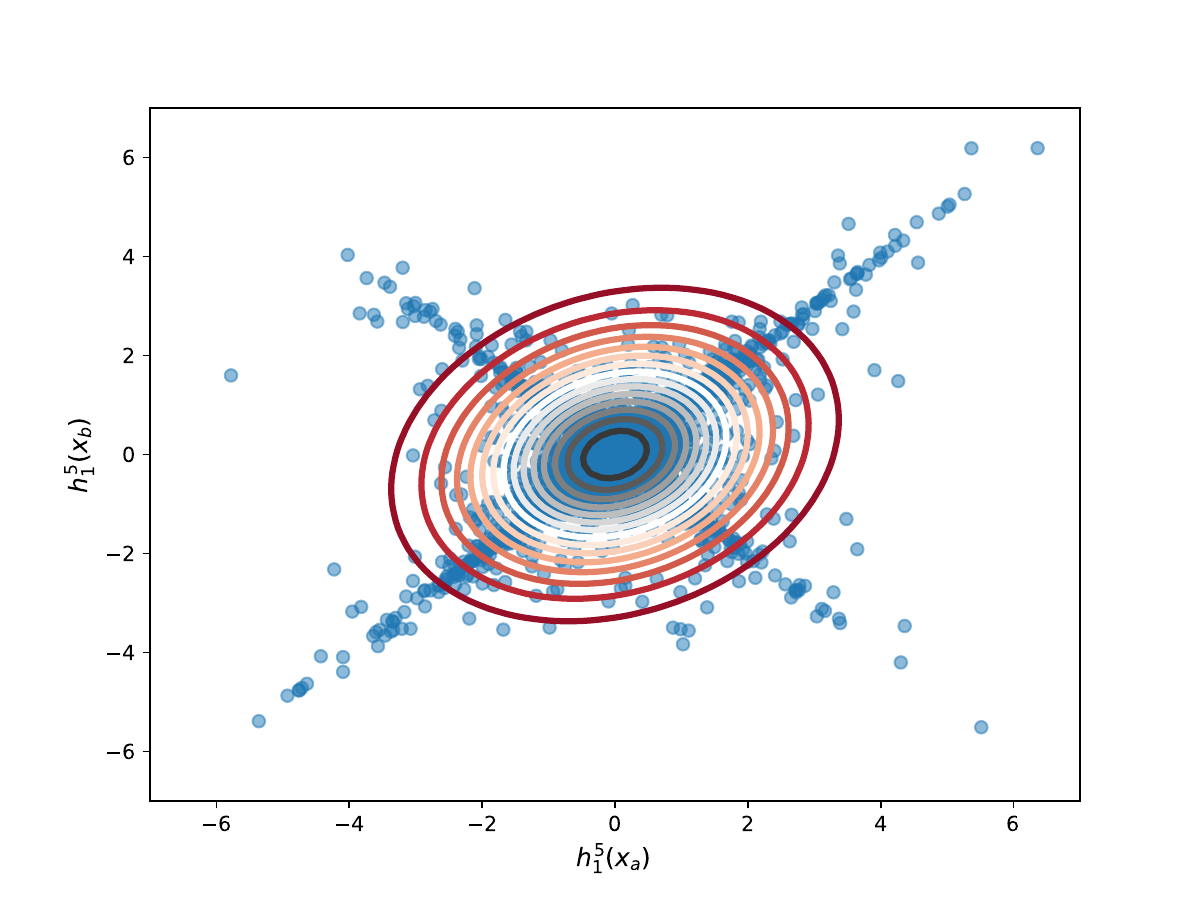} & 
            \includegraphics[width=0.15\textwidth,trim={71 47 55 30},clip]{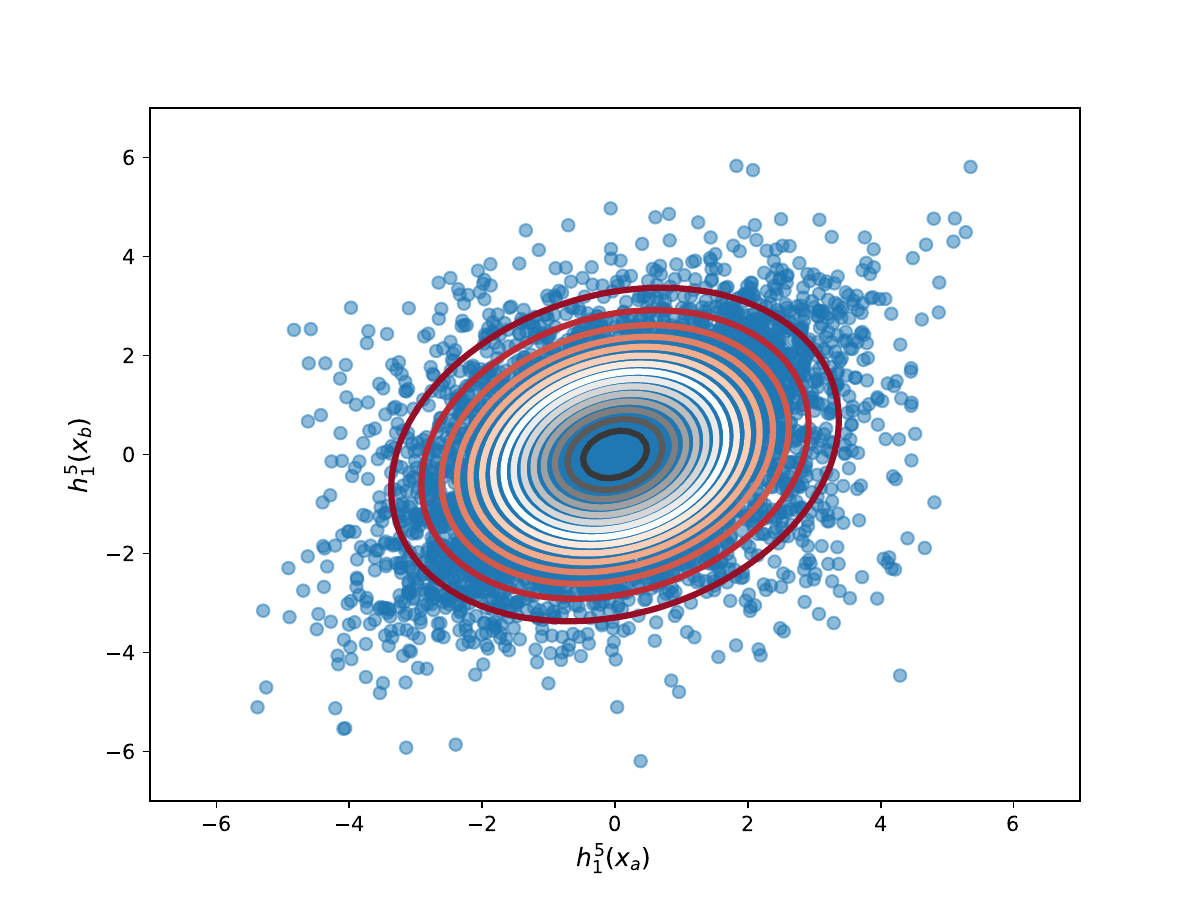} &
            \includegraphics[width=0.15\textwidth,trim={71 47 55 30},clip]{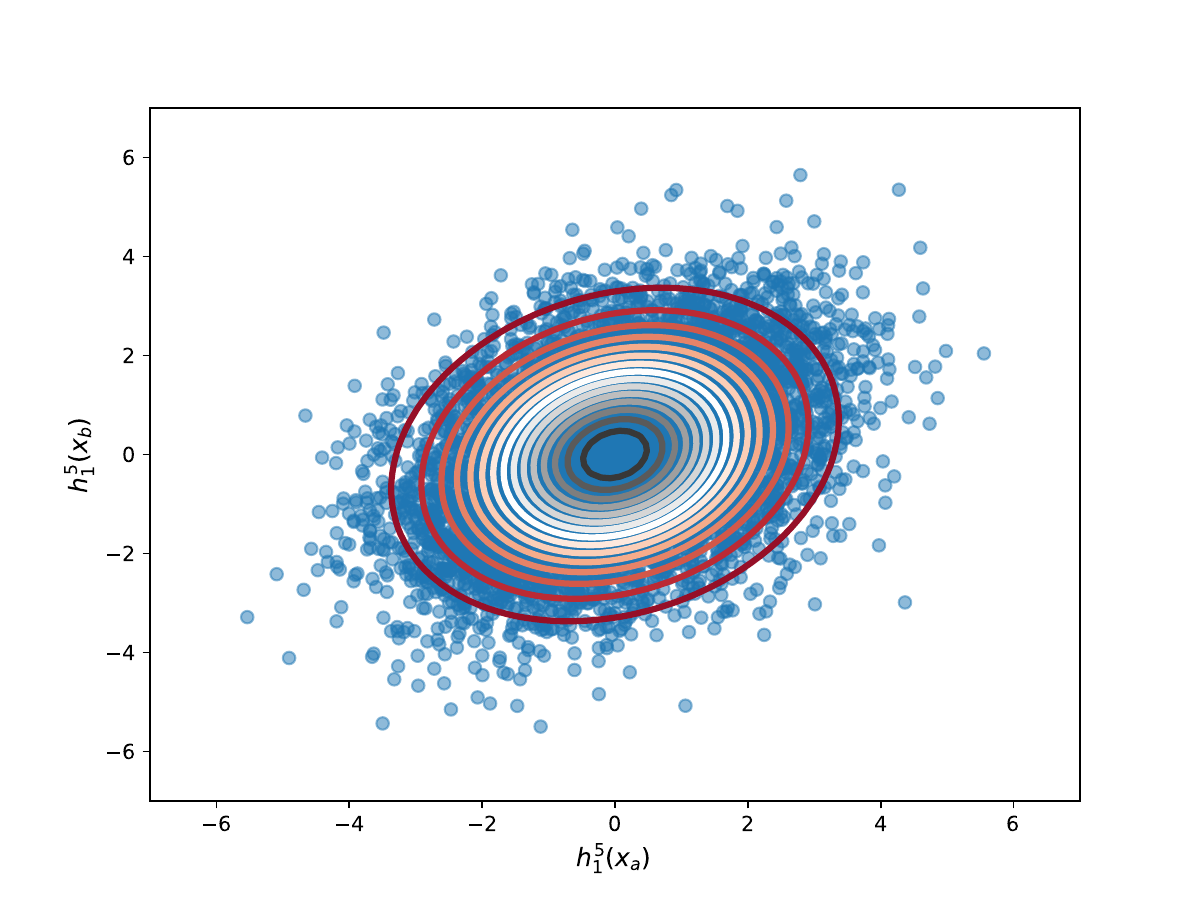}\\
        \pseudoiid\ Gaussian structured sparse & 
            \includegraphics[width=0.15\textwidth,trim={71 47 55 30},clip]{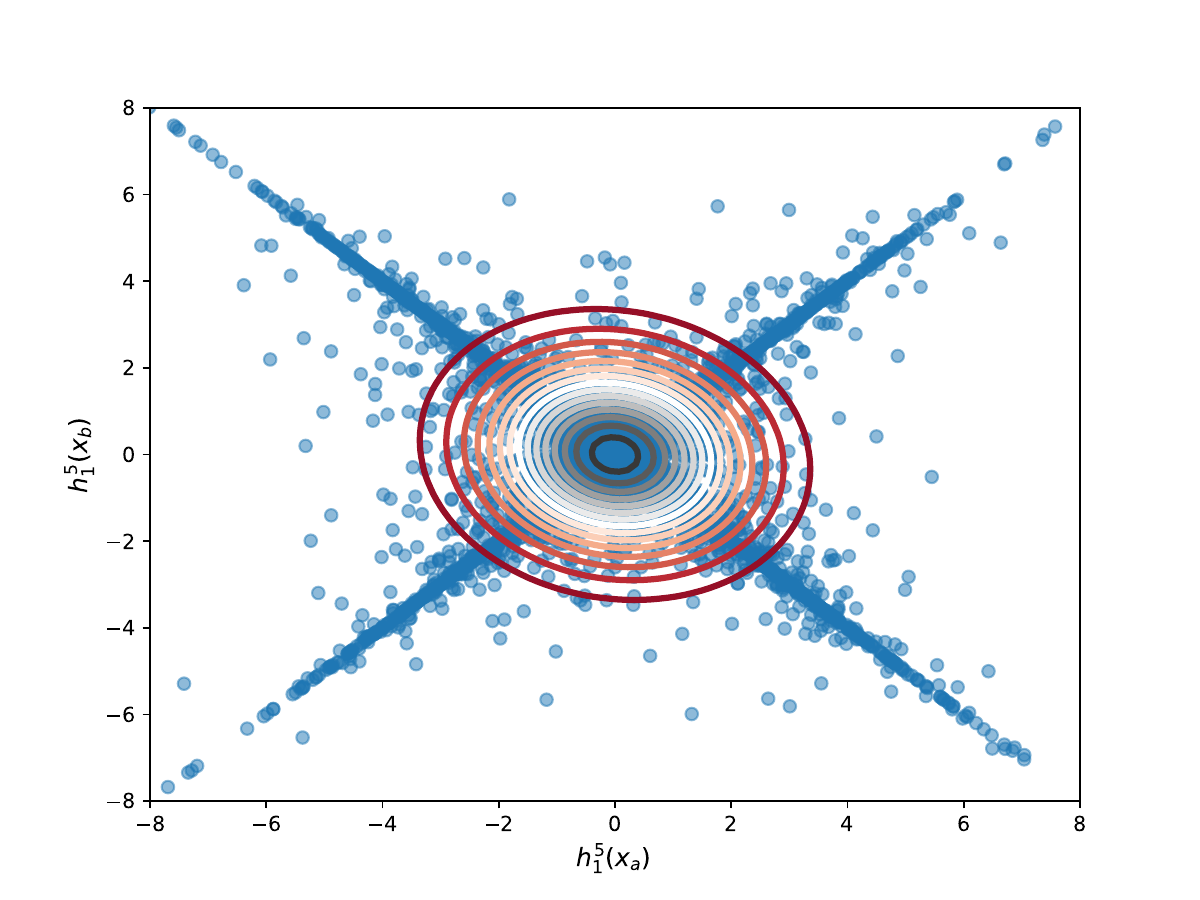} & 
            \includegraphics[width=0.15\textwidth,trim={71 47 55 30},clip]{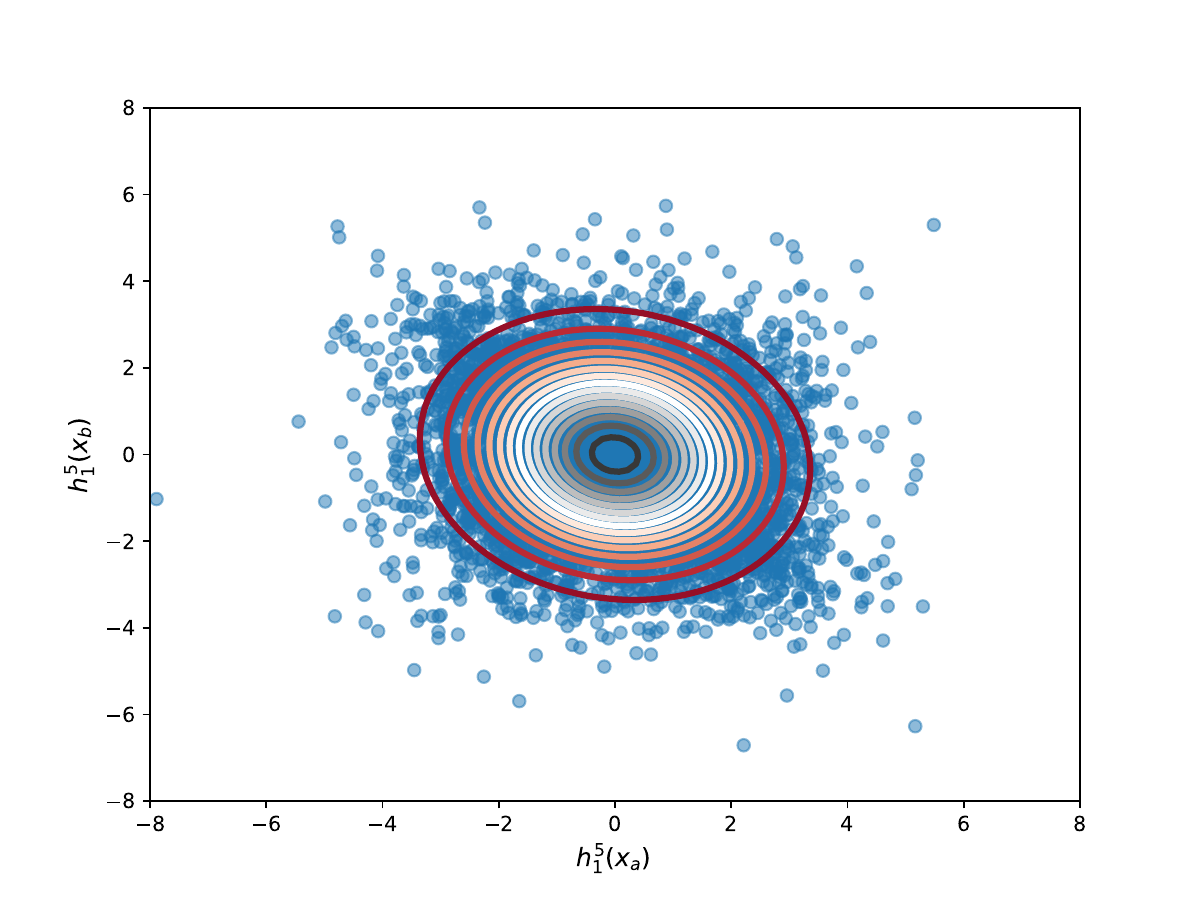} &
            \includegraphics[width=0.15\textwidth,trim={71 47 55 30},clip]{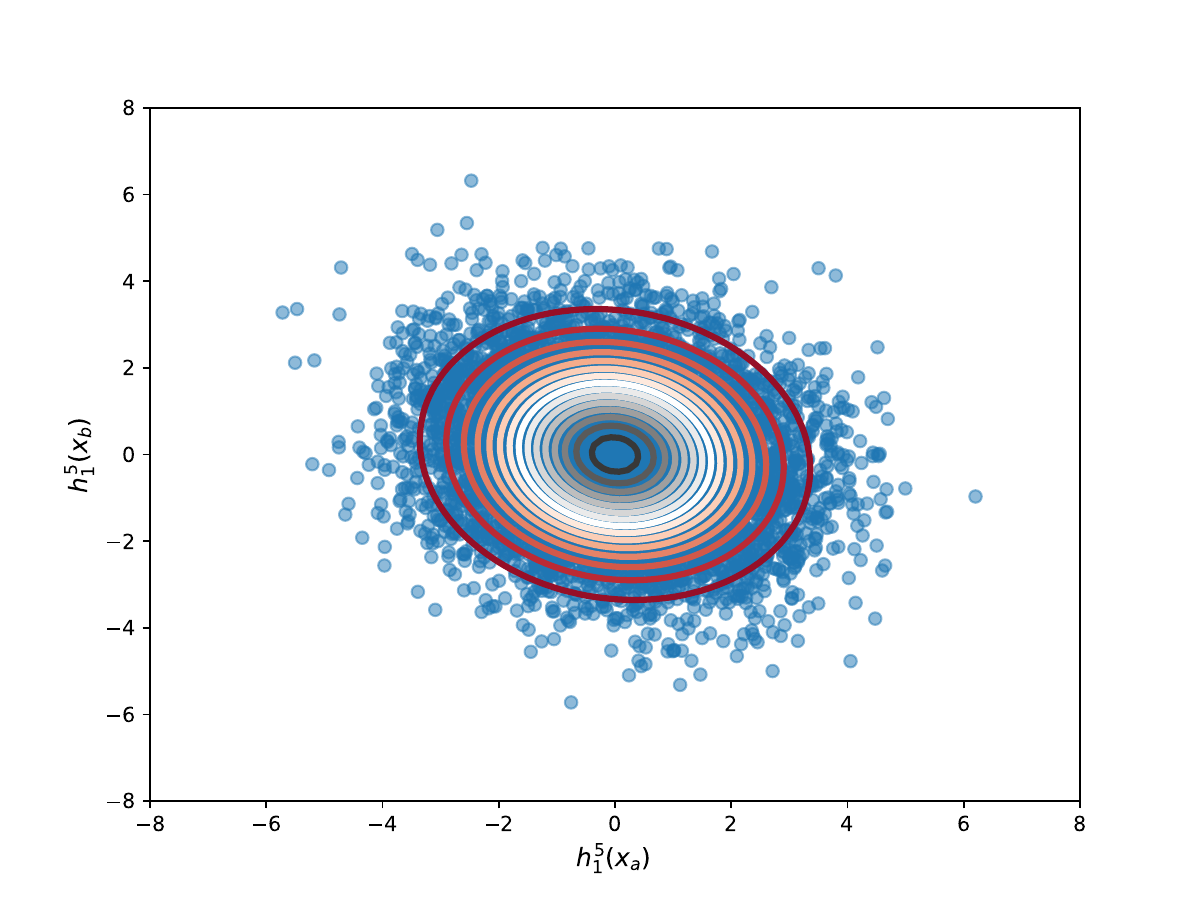}
    \end{tabular}
    \caption{The empirical joint distribution of the preactivations generated by two distinct inputs flowing through the network. The large width limiting distribution as defined in Theorem \ref{thm:main_FCN} is included as level curves. The input data $x_a, x_b$ are drawn \iid\ from  $\mathbb{S}^9$ and $10000$ experiments were conducted on a $7$-layer fully connected network. The horizontal and vertical axes in each subplot are respectively $h_1^{(5)}(x_a)$ and $h_1^{(5)}(x_b)$. }
    \label{fig:joint_distributiion_dependence}
\end{figure}

\subsection{Implications of the Gaussian Process Limit}\label{sec:eoc}

\paragraph{Bayesian Neural Network and Gaussian Process.}\label{sec:bayesian_nn}

As opposed to frequentist approaches, Bayesian inference aims at finding a predictive distribution that serves to quantify the model's predictions uncertainty and prevent overfitting. Starting from a prior, this posterior distribution is updated once the data is observed. Considering Bayesian neural networks, these priors are to be defined on the weights and biases if adopting a parametric perspective, or, alternatively, directly on the input-output function represented by the model. Note how one can jump from one view to the other given that random initialization of the parameters of a neural network induces a distribution on
the input-output function $h^{(L+1)}(x)$ of the network. Nonetheless, finding a meaningful prior over the space of functions is in general a delicate task.

In the large width limit, when the weights are drawn \pseudoiid\ in a $L$-layer deep network, Theorem \ref{thm:main_FCN} secures that the function represented by the network is \textit{exactly} a Gaussian Process with known covariance kernel $K^{(L+1)}$, which only depends on the variances $\sigma_b, \sigma_W$ and the activation function. Therefore, the exact posterior distribution can be computed, yielding the Neural Network Gaussian Process (NNGP) equivalence described in \cite{novak2020bayesian}. Since the covariance kernel obtained in our less restrictive setting recovers the one in \cite{novak2020bayesian}, \cite{Matthews_2018} and \cite{garrigaalonso_2019}, their experiments still hold in our case and we refer to these works for practical considerations on how to compute the posterior distribution. Specifically, NNGP yields better accuracy compared to finite point-estimate neural networks \cite{Lee_2017} and approximates finite Bayesian networks very well \cite[Section 5]{Matthews_2018}.

\paragraph{Edge of Chaos (EoC).} 

Random initialization of deep neural networks plays an important role in their trainability by regulating how quantities like the variance of preactivations and the pairwise correlation between input signals propagate through layers. An inappropriate initialization causes the network to collapse to a constant function or become a chaotic mapping oversensitive to input perturbations. The Edge of Chaos (EoC) initialization strategy rectifies these issues, additionally causing the network to have gradients of consistent magnitude across layers. Calculation of the EoC requires integration with respect to the distribution of preactivations in intermediary layers, which, in general, is intractable. However, the Gaussian Process limit simplifies the EoC analysis as carried out in \cite{Poole_2016} and \cite{Xiao_2018}. Moreover, under the same distributional assumption, \cite{schoenholz2017deep, pennington2018emergence} show that initialization on the EoC achieves \textit{dynamical isometry}, making backpropagation of errors stable. Our main contributions of Theorems \ref{thm:main_FCN} and \ref{thm:CNN} allow similar calculations to be made for \pseudoiid\ networks such as the examples of low-rank, structured sparse, and orthogonal CNN as shown in Section \ref{examples}. The EoC for low-rank networks has been calculated in \cite{Nait_2023} under the assumption of Theorems \ref{thm:main_FCN} and \ref{thm:CNN} which was at that time unproven. Interestingly, in terms of signal propagation, structured sparse or low-rank initializations are equivalent to their dense and full-rank counterparts up to a rescaling of the variance by a fractional factor of sparsity or rank.

\section{Conclusion} \label{conc}

We proved in this paper a new Gaussian Process limit for deep neural networks initialized with possibly inter-dependent entries. Examples include orthogonal, low-rank and structured sparse networks which are particularly of interest due to their efficient implementation and their empirically observed enhanced accuracy. Our result makes possible exact Bayesian inference as well as tractable Edge of Chaos analysis, for a broader class of either fully connected or convolutional networks. We expect the present work paves the way for a better understanding of the training dynamics of the emerging deep neural networks with parsimonious representations. 

\section*{Acknowledgments}
We thank Juba Nait Saada for insightful discussion and feedback on the manuscript, as well as Alex Cliffe for valuable comments and help in drawing Figure 1. Thiziri Nait Saada is supported by the UK Engineering and Physical Sciences Research Council (EPSRC) through the grant EP/W523781/1. Jared Tanner is supported by the Hong Kong Innovation and Technology Commission (InnoHK Project CIMDA).



\newpage

\bibliography{main.bib}
\bibliographystyle{iclr2024_conference}


\newpage
\appendix

\section{Proof of Theorem \ref{thm:main_FCN}: Gaussian Process behaviour in Fully Connected Networks in the \pseudoiid\ regime}\label{app:proof_fcn}

\subsection{Step 1: Reduction of the problem from countably-infinite to finite dimensional} \label{A1}
Firstly, we must clarify in what sense a sequence of stochastic Processes $h_i[n], i \in \mathbb{N}$ converges \textit{in distribution} to its limit $h_{i}[*]$. For a sequence of real-valued random variables, we can define convergence in distribution $X_n \substack{d \\ \rightarrow} X$ by the following condition: $\mathbb{E} f(X_n) \rightarrow \mathbb{E} f(X)$ for all \textit{continuous} functions $f:\mathbb{R} \rightarrow \mathbb{R}$. Similarly, we can define weak convergence for random objects taking values in $\mathbb{R}^{\mathbb{N}}$ (countably-indexed stochastic Processes), provided that we equip $\mathbb{R}^{\mathbb{N}}$ with a ``good'' topology\footnote{In fact, it needs to be a Polish space, i.e. a complete separable metric space.} that maintains a sufficiently rich class of continuous functions. The metric $\rho$ on the space of real sequences $\mathbb{R}^{\mathbb{N}}$ defined by
\begin{equation}
    \rho(h, h') \coloneqq \sum_{i=1}^{\infty} 2^{-i} \min(1, |h_i - h'_i|)
\end{equation}
is one example. Thus, we can speak of the weak convergence $h_i[n] \substack{d \\ \rightarrow} h_i[*]$ in the sense that $\mathbb{E}f(h_i[n]) \rightarrow \mathbb{E}f(h_i[*])$ for all $f: \mathbb{R}^{\mathbb{N}} \rightarrow \mathbb{R}$ continuous with respect to the metric $\rho$.

Fortunately, to prove the weak convergence of infinite-dimensional distributions, it is sufficient to show the convergence of their finite-dimensional marginals \cite{Billingsley_1999}.

\subsection{Step 2: Reduction of the problem from multidimensional to one-dimensional} \label{A2}
Let $\mathcal{L} = \{(i_1, x_1), \cdots, (i_P, x_P)\}$ be a finite subset of the index set $[N_{\ell}] \times \mathcal{X}$. We need to show that the vector $\big( h_{i_p}^{(\ell)}(x_p)[n] \big) \in \mathbb{R}^{P}$ converges in distribution to $\big( h_{i_p}^{(\ell)}(x_p)[*] \big) \in \mathbb{R}^{P}$. By the Cram\'{e}r-Wold theorem (\cite{Cramer_1936}), we may equivalently show the weak convergence of an arbitrary linear projection
\begin{align}
\mathcal{T}^{(\ell)}(\alpha, \mathcal{L})[n] & = \sum_{(i,x) \in \mathcal{L}} \alpha_{(i,x)} \Big( h^{(\ell)}_{i}(x) [n] - b_{i}^{(\ell)} \Big) \\
& = \sum_{(i,x) \in \mathcal{L}} \sum_{j=1}^{N_{\ell-1}[n]} \alpha_{(i,x)} W_{ij}^{(\ell)}z_{j}^{(\ell -1)}(x)[n] \\
& = \label{clt_sum} \frac{1}{\sqrt{N_{\ell - 1}[n]}} \sum_{j=1}^{N_{\ell - 1}[n]} \gamma_j^{(\ell)}(\alpha, \mathcal{L})[n],
\end{align}
where
\begin{equation} \label{eq:gamma}
    \gamma_j^{(\ell)}(\alpha, \mathcal{L})[n] \coloneqq \sigma_W \sum_{(i,x) \in \mathcal{L}} \alpha_{(i, x)} \epsilon_{ij}^{(\ell)} z_j^{(\ell -1)}(x)[n],
\end{equation}
and $\epsilon^{(\ell)}_{ij}$ is centered and normalized, i.e. $\mathbb{E}\epsilon^{(\ell)}_{ij}=0, \: \mathbb{E}{\epsilon^{(\ell)}}^2_{ij}=1$. We will be using a suitable version of Central Limit Theorem (CLT) to prove the weak convergence of the series in \eqref{clt_sum} to a Gaussian random variable.

\subsection{Step 3: Use of an exchangeable Central Limit Theorem} \label{A3}

The classical Central Limit Theorem (CLT) establishes that for a sequence of \iid\ random variables, the properly scaled sample mean converges to a Gaussian random variable in distribution. Here we recall an extension of the Central Limit Theorem introduced in \cite{Blum_1956}, where the independence assumption on the summands is relaxed and replaced by an exchangeability condition. The following statement is an adapted version derived in \cite{Matthews_2018}, which is more suited to our case.

\begin{theorem}[\cite{Matthews_2018}, Lemma 10]\label{thm:Blum} For each positive integer $n$, let $(X_{n,j};j\in \mathbb{N}^*)$ be an infinitely exchangeable Process with mean zero, finite variance $\sigma_n^2$, and finite absolute third moment. Suppose also that the variance has a limit $\lim_{n\to \infty} \sigma_n^2 = \sigma^2_*$. Define 
\begin{align}\label{eq:summands}
    S_n \coloneqq \frac{1}{\sqrt{N[n]}} \sum_{j=1}^{N[n]} X_{n,j},
\end{align}
where $N:\mathbb{N}\to \mathbb{N}$ is a strictly increasing function. If
\begin{enumerate}[label=(\alph*)]
    \item $\mathbb{E}[X_{n,1} X_{n,2}] = 0$,
    \item $\lim_{n\to \infty} \mathbb{E}[X_{n,1}^2 X_{n,2}^2] = \sigma_*^4$,
    \item $\mathbb{E}[|X_{n,1}|^3] = o_{n\to \infty}(\sqrt{N[n]})$,
\end{enumerate}
then $S_n$ converges in distribution to $\mathcal{N}(0, \sigma_*^2)$.
\end{theorem}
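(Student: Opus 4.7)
The plan is to exploit infinite exchangeability via de Finetti's theorem and reduce to a conditional Lyapunov CLT. For each fixed $n$, the infinitely exchangeable sequence $(X_{n,j})_{j\geq 1}$ admits a representation as conditionally i.i.d.\ variables given a random probability measure $\mu_n$ on $\mathbb{R}$. Introduce the random conditional moments $M_n := \int x\,d\mu_n$, $V_n := \int x^2\,d\mu_n - M_n^2$ and $T_n := \int |x|^3\,d\mu_n$. The strategy is then: (i) translate (a)--(c) into moment conditions on $(M_n, V_n, T_n)$; (ii) run a conditional CLT via a Taylor expansion of the conditional characteristic function; (iii) unconditionalise using bounded convergence.

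\textbf{Translating the hypotheses.} Centering gives $\mathbb{E}[M_n]=0$. By conditional independence, $\mathbb{E}[X_{n,1}X_{n,2}] = \mathbb{E}[M_n^2]$, so (a) forces $M_n=0$ almost surely, hence $V_n = \int x^2\,d\mu_n$ and $\mathbb{E}[V_n] = \sigma_n^2 \to \sigma_*^2$. Similarly $\mathbb{E}[X_{n,1}^2X_{n,2}^2] = \mathbb{E}[V_n^2]$, so (b) yields $\mathbb{E}[V_n^2] \to \sigma_*^4$, and therefore $\mathrm{Var}(V_n) \to 0$, i.e.\ $V_n \to \sigma_*^2$ in $L^2$ and in probability. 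Finally (c) reads $\mathbb{E}[T_n] = \mathbb{E}[|X_{n,1}|^3] = o(\sqrt{N[n]})$, which is precisely the averaged Lyapunov ratio.

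\textbf{Conditional CLT.} Let $\psi_n(s) := \mathbb{E}[e^{isX_{n,1}}\mid \mu_n]$. Conditional independence gives $\mathbb{E}[e^{itS_n}\mid \mu_n] = \psi_n(t/\sqrt{N[n]})^{N[n]}$. The standard cubic estimate $|e^{ix} - 1 - ix + x^2/2| \leq |x|^3/6$, combined with $M_n = 0$, yields $|\psi_n(s) - 1 + s^2 V_n/2| \leq \tfrac{1}{6}|s|^3 T_n$. Setting $s = t/\sqrt{N[n]}$, expanding $\log(1+z) = z + O(z^2)$, and exponentiating gives, for each fixed $t$,
\begin{equation*}
\big|\mathbb{E}[e^{itS_n}\mid \mu_n] - e^{-t^2 V_n/2}\big| \;\leq\; C_t\Big(\frac{T_n}{\sqrt{N[n]}} + \frac{V_n^2}{N[n]}\Big).
\end{equation*}
Taking unconditional expectations, the right-hand side is $o(1)$ by (b) and (c), while $\mathbb{E}[e^{-t^2 V_n/2}] \to e^{-t^2 \sigma_*^2/2}$ by bounded convergence (the integrand is bounded by $1$ and $V_n \to \sigma_*^2$ in probability). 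L\'evy's continuity theorem then delivers $S_n \Rightarrow \mathcal{N}(0,\sigma_*^2)$.

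\textbf{Main obstacle.} The delicate step is the Taylor remainder: the cubic estimate on $\psi_n$ is controlled by the random conditional third moment $T_n$, which is not uniformly bounded, so one cannot invoke a pathwise Berry--Esseen rate. The resolution is to keep the remainder in integrated form and use hypothesis (c) only after taking unconditional expectations. A minor technical nuisance is the event $\{V_n\text{ small}\}$ on which $\log(1 + z)$ expansion and the approximation $\psi_n^{N[n]} \approx e^{-t^2 V_n/2}$ both degrade; because $V_n \to \sigma_*^2$ in probability (assumed positive; if $\sigma_*^2 = 0$, then $\mathbb{E}[S_n^2] = \sigma_n^2 \to 0$ and the conclusion is trivial), this event has vanishing probability and contributes negligibly to all characteristic-function computations. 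Everything downstream is routine.
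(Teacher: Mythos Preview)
The paper does not prove this statement; it is quoted from \cite{Matthews_2018} (itself adapting \cite{Blum_1956}) and used as a black box in the induction. Your proposal therefore cannot be compared against a proof in the paper, but it can be assessed on its own terms, and it is essentially the classical argument underlying those references.

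Your strategy is sound: de Finetti reduces the infinitely exchangeable row to conditionally i.i.d.\ variables; hypotheses (a)--(c) translate cleanly into $M_n = 0$ a.s., $V_n \to \sigma_*^2$ in $L^2$, and $\mathbb{E}[T_n] = o(\sqrt{N[n]})$; and a conditional characteristic-function expansion plus bounded convergence finishes. The translation step and the unconditioning step are both correct.

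One technical point deserves tightening. The derivation you sketch for the displayed pathwise bound, via $\log(1+z) = z + O(z^2)$, tacitly requires $|z|$ to be small, which is not guaranteed on the event where $V_n$ or $T_n/\sqrt{N[n]}$ is large. The cleaner route is the elementary inequality $|a^N - b^N| \leq N|a-b|$ for complex $a,b$ with $|a|,|b|\leq 1$, applied with $a = \psi_n(t/\sqrt{N[n]})$ and $b = e^{-t^2 V_n/(2N[n])}$, together with $0 \leq e^{-x} - 1 + x \leq x^2/2$ for $x\geq 0$. This yields
\[
\bigl|\psi_n(t/\sqrt{N[n]})^{N[n]} - e^{-t^2 V_n/2}\bigr| \;\leq\; \frac{|t|^3 T_n}{6\sqrt{N[n]}} + \frac{t^4 V_n^2}{8 N[n]}
\]
unconditionally, with no logarithm and no case split. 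Incidentally, the edge case you flag (``$V_n$ small'') is not the problematic one; if anything the concern would be $V_n$ or $T_n$ large, and the $|a^N - b^N|$ route sidesteps it entirely. With this adjustment the argument is complete.
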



Comparing \eqref{clt_sum} with \eqref{eq:summands}, we need to check if the summands $X_{n,j} \coloneqq \gamma_j^{(\ell)}(\alpha, \mathcal{L})[n]$ satisfy the conditions of Theorem \ref{thm:Blum}. We will carefully verify each condition in the following sections. At each step, we make a note in italics whenever loosening the assumption on the first layer's weights from being \iid\ Gaussian to row-\iid\ introduces a slight variation in the proof.

\subsubsection{Exchangeability of the summands}\label{sec:exchangeability}


To apply Theorem \ref{thm:Blum}, we must first prove that the random variables $\gamma_j^{(\ell)}(\alpha, \mathcal{L})[n]$ are exchangeable. Let us expand the expression for $\gamma_j^{(\ell)}(\alpha, \mathcal{L})[n]$ further and write it in terms of the preactivations of layer $\ell-2$, i.e.
\begin{align*}
    \gamma_j^{(\ell)}(\alpha, \mathcal{L})[n] & = \sigma_W \sum_{(i,x) \in \mathcal{L}} \alpha_{(i,x)} {\epsilon}^{(\ell)}_{ij} \phi(h^{(\ell-1)}_j(x)[n]) \\
    & = \sigma_W  \sum_{(i,x) \in \mathcal{L}} \alpha_{(i,x)} {\epsilon}^{(\ell)}_{ij} \phi \Big( \sum_{k=1}^{N_{\ell - 2}[n]} W^{(\ell - 1)}_{jk} z^{(\ell-2)}_k(x)[n] + b^{(\ell-1)}_j \Big).
\end{align*}


If we apply a random permutation on the index $j$, let us say $\sigma:\{1, \cdots, N_{\ell-1} \} \to \{1, \cdots, N_{\ell-1}\} $, then the row-exchangeability and column-exchangeability of the \pseudoiid\ weights ensure that $W^{(\ell)}_{i\sigma(j)}$ has same distribution as $W^{(\ell)}_{ij}$ and that $W^{(\ell - 1)}_{\sigma(j)k}$ has same distribution as $W^{(\ell - 1)}_{jk}$, for any $i\in \{1, \cdots, N_{\ell}\}, \, k \in \{1, \cdots, N_{\ell-2}\}$. Note that this extends easily to the normalized versions of the weights ${\epsilon}^{(\ell)}_{ij}$. Additionally, as the biases are set to be \iid\ Gaussians, they are a fortiori exchangeable and their distributions remain unchanged when considering random permutations of indices. Therefore, $\gamma_j^{(\ell)}(\alpha, \mathcal{L})[n]$ is equal to $\gamma_{\sigma(j)}^{(\ell)}(\alpha, \mathcal{L})[n]$ in distribution, hence exchangeability.

\textit{For $\ell=2$, when the weights $W^{( 1)}_{jk}$ are row-\iid\ only, they are also row-exchangeable and the exchangeability of $\gamma_{j}^{(2)}(\alpha, \mathcal{L})[n]$ follows.}

\subsubsection{Moment conditions}
We mean by moment conditions the existence of a limiting variance $\sigma^2_{*}$, as well as the conditions (a)--(c) in Theorem \ref{thm:Blum}. We will prove the moment conditions by induction on the layer number $\ell$.

\paragraph{Existence of the limiting variance of the summands.}

To show the existence of the limiting variance of the summands, let us first write down such a variance at a finite width. Since $\gamma_j^{(\ell)}$'s are exchangeable (see \ref{sec:exchangeability}), their distribution is identical and we may simply calculate the variance of $\gamma_1^{(\ell)}$ as follows.
\begin{align}
    (\sigma^{(\ell)})^2( \alpha, \mathcal{L})[n] & \coloneqq \mathbb{E}\big( \gamma^{(\ell)}_{1}(\alpha, \mathcal{L})[n] \big)^2
    = \mathbb{E} \Big[\sigma_W \sum_{(i,x) \in \mathcal{L}} \alpha_{(i,x)} {\epsilon}^{(\ell)}_{i,1} z^{(\ell-1)}_1(x)[n] \Big]^2 \nonumber \\
    & = \sigma_W^2 \sum_{\substack{(i_a, x_a) \in \mathcal{L} \\ (i_b, x_b) \in \mathcal{L}}} \alpha_{(i_a, x_a)} \alpha_{(i_b, x_b)} \mathbb{E} \Big[ {\epsilon}^{(\ell)}_{i_a,1} {\epsilon}^{(\ell)}_{i_b,1}\Big] \mathbb{E} \Big[z^{(\ell-1)}_1(x_a)[n] \cdot z^{(\ell-1)}_1(x_b)[n] \Big] \nonumber \\
    & = \sigma_W^2 \sum_{\substack{(i_a, x_a) \in \mathcal{L} \\ (i_b, x_b) \in \mathcal{L}}} \alpha_{(i_a, x_a)} \alpha_{(i_b, x_b)} \delta_{i_a,i_b} \mathbb{E} \Big[z^{(\ell-1)}_1(x_a)[n] \cdot z^{(\ell-1)}_1(x_b)[n] \Big] \label{eq:sigma*},
\end{align}
where we first considered the independence between the normalized weights at layer $\ell$ and the activations at layer $\ell-1$; then  used the fact that the normalized weights are uncorrelated in the \pseudoiid\ regime.

The convergence of the second moment of the summands is thus dictated by the convergence of the covariance of the activations of the last layer. By the induction hypothesis, the feature maps $h_j^{(\ell-1)}(x)[n]$ converge in distribution, so the continuous mapping theorem guarantees the existence of a limiting distribution for the activations $z^{(\ell-1)}_1(x_a)[n]$ and $z^{(\ell-1)}_1(x_b)[n]$ as $n$ tends to infinity. Note that this result holds even if the activation function has a set of discontinuity points of Lebesgue measure zero, e.g. the step function. Thus, the product inside the expectation in \eqref{eq:sigma*} converges in distribution to a limiting random variable $z^{(\ell-1)}_1(x_a)[*] z^{(\ell-1)}_1(x_b)[*]$.

From \cite{Billingsley_1999}, one knows that if a sequence weakly converges to a limiting distribution and is uniformly integrable, then we can swap the order of taking the limit and the expectation. Thus, by Proposition \ref{proposition:uniform_integrability}, we have
\begin{align}
    (\sigma^{(\ell)})^2( \alpha, \mathcal{L})[*] & \coloneqq \lim_{n \to \infty} (\sigma^{(\ell)})^2( \alpha, \mathcal{L})[n] \nonumber \\
    & = \sigma_W^2 \sum_{\substack{(i_a, x_a) \in \mathcal{L} \\ (i_b, x_b) \in \mathcal{L}}}  \alpha_{(i_a, x_a)} \alpha_{(i_b, x_b)} \delta_{i_a,i_b} \mathbb{E} \Big[z^{(\ell-1)}_1(x_a)[*] \cdot z^{(\ell-1)}_1(x_b)[*] \Big].
\end{align}

\textit{For $\ell=2$, the activation $z^{(1)}_1(x_b)$ does not scale with $n$, i.e. $z^{(1)}_1(x_b)\coloneqq z^{(1)}_1(x_b)[*]$, thus swapping expectation and limit is trivial.}

\paragraph{Condition (a).}
At a given layer $\ell$, we need to show that $X_{n,1} \coloneqq \gamma_1^{(\ell)}(\alpha, \mathcal{L})[n]$ and $X_{n,2} \coloneqq \gamma_2^{(\ell)}(\alpha, \mathcal{L})[n]$ are uncorrelated. We have
\begin{align*}
    \mathbb{E}\big(X_{n,1}X_{n,2}\big) &= \mathbb{E} \Big[\Big(\sigma_W \sum_{(i,x) \in \mathcal{L}} \alpha_{(i,x)} {\epsilon}^{(\ell)}_{i,1} z^{(\ell-1)}_1(x)[n]\Big) \cdot \Big( \sigma_W \sum_{(i,x) \in \mathcal{L}} \alpha_{(i,x)} {\epsilon}^{(\ell)}_{i,2} z^{(\ell-1)}_2(x)[n] \Big) \Big] \\
    &= \sigma_W^2 \sum_{\substack{(i_a, x_a) \in \mathcal{L} \\ (i_b, x_b) \in \mathcal{L}}} \alpha_{(i_a, x_a)} \alpha_{(i_b, x_b)} \mathbb{E} \Big[ {\epsilon}^{(\ell)}_{i_a,1}{\epsilon}^{(\ell)}_{i_b,2}\Big] \mathbb{E} \Big[z^{(\ell-1)}_1(x_a)[n] \cdot z^{(\ell-1)}_2(x_b)[n] \Big] \\
    &= 0,
\end{align*}
since ${\epsilon}^{(\ell)}_{i_a,1}$ and ${\epsilon}^{(\ell)}_{i_b,2}$ are uncorrelated by the \pseudoiid\ assumption, for $\ell \geq 2$.

\paragraph{Condition (b).}
\begin{align} \label{cond_b}
    \mathbb{E} \Big[X_{n,1}^2 X_{n,2}^2\Big] &= \mathbb{E} \Big[\Big(\sigma_W \sum_{(i,x) \in \mathcal{L}} \alpha_{(i,x)} {\epsilon}^{(\ell)}_{i,1} z^{(\ell-1)}_1(x)[n]\Big)^2 \cdot \Big( \sigma_W \sum_{(i,x) \in \mathcal{L}} \alpha_{(i,x)} {\epsilon}^{(\ell)}_{i,2} z^{(\ell-1)}_2(x)[n] \Big)^2 \Big] \nonumber \\
    \begin{split}
     &= \sigma_W^4\sum_{\substack{(i_t, x_t) \in \mathcal{L} \\ t\in \{a, b, c, d\}}} \Big(\prod_{t\in \{a, b, c, d\}} \alpha_{(i_t, x_t)}\Big) \\ 
     & \hspace{3cm} \mathbb{E} \Big[ \Big(\prod_{t\in \{a, b\}} {\epsilon}^{(\ell)}_{i_t,1} z^{(\ell-1)}_1 (x_t)[n]\Big) \cdot \Big(\prod_{t\in \{c, d\}} {\epsilon}^{(\ell)}_{i_t,2} z^{(\ell-1)}_2 (x_t)[n]\Big)\Big]   
    \end{split} \nonumber \\
    \begin{split}
     &=\sigma_W^4 \sum_{\substack{(i_t, x_t) \in \mathcal{L} \\ t\in \{a, b, c, d\}}} \Big(\prod\limits_{t\in \{a, b, c, d\}} \alpha_{(i_t, x_t)}\Big) F_n(i_a, i_b, i_c, i_d) \\ 
     & \hspace{3cm} \mathbb{E} \Big[ \Big(\prod_{t\in \{a, b\}} z^{(\ell-1)}_1 (x_t)[n] \Big) \cdot \Big(\prod_{t\in \{c, d\}} z^{(\ell-1)}_2 (x_t)[n] \Big) \Big],
    \end{split}
\end{align}
where
\begin{equation}
    F_n(i_a, i_b, i_c, i_d) \coloneqq \mathbb{E}\big[\epsilon^{(\ell)}_{i_a,1} \epsilon^{(\ell)}_{i_b,1} \epsilon^{(\ell)}_{i_c,2} \epsilon^{(\ell)}_{i_d,2}\big].
\end{equation}
We justify the convergence in distribution of the random variable inside the expectation in the exact same way as in the previous section, referring to the continuous mapping theorem and the induction hypothesis. By Proposition \ref{proposition:uniform_integrability}, as $n \rightarrow \infty$, the above expectation converges to the expectation of the limiting Gaussian Process, i.e.
\begin{align*} \label{limit:z}
\begin{split}
    & \lim_{n\to \infty} \mathbb{E} \Big[ \Big(\prod_{t\in \{a, b\}} z^{(\ell-1)}_1 (x_t)[n] \Big) \cdot \Big(\prod_{t\in \{c, d\}} z^{(\ell-1)}_2 (x_t)[n] \Big) \Big] \\
    & \hspace{3cm}= \mathbb{E} \Big[ \Big(\prod_{t\in \{a, b\}} z^{(\ell-1)}_1 (x_t)[*] \Big) \cdot \Big(\prod_{t\in \{c, d\}} z^{(\ell-1)}_2 (x_t)[*] \Big) \Big].
\end{split}
\end{align*}

Moreover, condition (iv) of Definition \ref{pseudoiid} implies that
\begin{equation*}
    \lim_{n \to \infty} F_n(i_a,i_b,i_c,i_d) = \delta_{i_a, i_b} \delta_{i_c, i_d}.
\end{equation*}

Substituting the two limits back in the \eqref{cond_b} and using the independence of the activations at layer $\ell-1$ given by the induction hypothesis, we get
\begin{align*}
\begin{split}
    \lim_{n \rightarrow \infty} \mathbb{E} \Big[X_{n,1}^2 X_{n,2}^2\Big] &= \sigma_W^4 \sum_{\substack{(i_t, x_t) \in \mathcal{L} \\ t\in \{a, b, c, d\}}} \Big(\prod\limits_{t\in \{a, b, c, d\}} \alpha_{(i_t, x_t)}\Big) \delta_{i_a,i_b} \delta_{i_c,i_d} \\ 
    & \hspace{2cm} \mathbb{E} \Big[ \Big(\prod_{t\in \{a, b\}} z^{(\ell-1)}_1 (x_t)[*] \Big) \cdot \Big(\prod_{t\in \{c, d\}} z^{(\ell-1)}_2 (x_t)[*] \Big) \Big] \\
    &= \sigma_W^4 \Big( \sum_{\substack{(i_t, x_t) \in \mathcal{L} \\ t\in \{a, b\}}} \Big(\prod_{t\in \{a, b\}} \alpha_{(i_t, x_t)}\Big) \delta_{i_a,i_b} \mathbb{E} \Big[\prod\limits_{t\in \{a, b\}} z_{1}^{(\ell-1)}(x_t)[*]\Big] \Big) \\
    & \hspace{2cm} \Big( \sum_{\substack{(i_t, x_t) \in \mathcal{L} \\ t\in \{c, d\}}} \Big(\prod_{t\in \{c, d\}} \alpha_{(i_t, x_t)}\Big) \delta_{i_c,i_d} \mathbb{E} \Big[\prod_{t\in \{c, d\}} z_{1}^{(\ell-1)}(x_t)[*]\Big] \Big) \\
    &= \sigma_W^4 (\sigma^{(\ell)})^4( \alpha, \mathcal{L})[*].
\end{split}
\end{align*}

\textit{For $\ell=2$, the activations $z^{(1)}_1(x_t)$ do not scale with $n$, i.e. $z^{(1)}_1(x_t)\coloneqq z^{(1)}_1(x_t)[*]$, thus swapping expectation and limit is trivial once again. The independence between the activations at different neurons boils down to the independence between the rows of $W^{(\ell)}$.}

\paragraph{Condition (c).} To show that the third absolute moment of the $\gamma^{(\ell)}_{1}(\alpha, \mathcal{L})[n]$ grows slower  than $\sqrt{N_{\ell-1}[n]}$ as $n \to \infty$, it is sufficient to bound it by a constant. Applying H\"{o}lder's inequality on $X = \big| \gamma^{(\ell)}_{1}(\alpha, \mathcal{L})[n] \big|^3, Y=1, p=4/3, q=4$, we obtain
\begin{align*}
    \mathbb{E}(\big| \gamma^{(\ell)}_{1}(\alpha, \mathcal{L})[n]\big|^3) \leq \big[\mathbb{E} \big|\gamma^{(\ell)}_{1}(\alpha, \mathcal{L})[n] \big|^4\big]^{\frac{1}{4}} \times 1.
\end{align*}

Therefore, condition (c) boils down to showing that $\lim_{n \to \infty}\mathbb{E} \big|\gamma^{(\ell)}_{1}(\alpha, \mathcal{L})[n]\big|^4$ is finite, for which we will once again make use of the uniform integrability of the feature maps derived in \ref{app:lemmas}. Define
\begin{equation}
    G_n(i_a, i_b,i_c,i_d) \coloneqq \mathbb{E}\big[\epsilon_{i_a,1} \epsilon_{i_b,1} \epsilon_{i_c,1} \epsilon_{i_d,1}\big],
\end{equation}
and observe that
\begin{align*}
\mathbb{E} \big| \gamma^{(\ell)}_{1}(\alpha, \mathcal{L})[n] \big|^4 &= \mathbb{E} \Big[\Big(\sigma_W \sum_{(i,x) \in \mathcal{L}} \alpha_{(i,x)} {\epsilon}^{(\ell)}_{i,1} z^{(\ell-1)}_1(x)[n]\Big)^4 \Big] \\
     &=\sigma_W^4  \sum_{\substack{(i_t, x_t) \in \mathcal{L} \\ t\in \{a, b, c, d\}}} \Big(\prod\limits_{t\in \{a, b, c, d\}} \alpha_{(i_t, x_t)}\Big) G_n(i_a,i_b,i_c,i_d) \mathbb{E} \Big[ \prod_{t\in \{a, b, c, d\}} z^{(\ell-1)}_1 (x_t)[n] \Big].
\end{align*}

Using Cauchy-Schwarz inequality, we can bound $G_n$ by the fourth moment of the normalized weights, i.e.
\begin{align*}
    G_n &\leq \sqrt{\mathrm{Var}({\epsilon^{(\ell)}}_{i_a,1}{\epsilon^{(\ell)}}_{i_b,1}) \mathrm{Var}({\epsilon^{(\ell)}}_{i_c,1}{\epsilon^{(\ell)}}_{i_d,1})} = \mathrm{Var}({\epsilon^{(\ell)}}_{i_a,1}{\epsilon^{(\ell)}}_{i_b,1}) \\
    & = \mathbb{E} \big[{\epsilon^{(\ell)}}^2_{i_a,1}{\epsilon^{(\ell)}}^2_{i_b,1} \big] \leq \mathrm{Var}({\epsilon^{(\ell)}}^2_{i_a,1}) = \mathbb{E}\big[{\epsilon^{(\ell)}}^4_{i_a,1}\big] -1.
\end{align*}
Then, we use condition (iii) of the Definition \ref{pseudoiid} for $\mathbf{a} = (1, 0, \cdots, 0)^\top$ combined with H\"{o}lder's inequality \ref{lemma:Holder} for $p=2$ to bound the fourth moment:
\begin{align*}
    \mathbb{E}\big[{\epsilon^{(\ell)}}^4_{i_a,1}\big] & = \frac{n^2}{\sigma^4} \mathbb{E}\big[W^4_{i_a,1}\big] \\
    & = \frac{n^2}{\sigma^4} K_4 \lVert \mathbf{a} \rVert_2^4 n^{-2} = \frac{K_4}{\sigma^4} = o_n(1).
\end{align*}

Furthermore, the induction hypothesis gives the convergence in distribution of the feature maps from the last layer, and combined with the continuous mapping theorem we get the convergence in distribution of the above product inside expectation. Using Lemma \ref{proposition:uniform_integrability}, the uniform integrability of the activations follows, and Billingsley's theorem (Lemma \ref{lemma:Bellingsley}) enables us to swap the limit and the expectation. Thus, 

\begin{align*}
\lim_{n\to \infty} \mathbb{E} \Big[ \prod\limits_{t\in \{a, b, c, d\}} z^{(\ell-1)}_1 (x_t)[n] \Big] =  \mathbb{E} \Big[ \prod\limits_{t\in \{a, b, c, d\}} z^{(\ell-1)}_1 (x_t)[*] \Big].
\end{align*}

\textit{For $\ell=2$, the activations $z^{(1)}_1(x_t)$ do not scale with $n$, i.e. $z^{(1)}_1(x_t)\coloneqq z^{(1)}_1(x_t)[*]$, thus swapping expectation and limit is trivial once again.}

To bound the product of four different random variables on $\mathbb{R}$, it is sufficient to bound the fourth order moment of each (see Lemma \ref{lemma:boundedness_4th_moments}). We can do so using the linear envelope property (Definition \ref{def:activation_function}) satisfied by the activation function to get, for any $(i,x)\in \mathcal{L}$,

\begin{align*}
\mathbb{E} \Big[ z^{(\ell-1)}_1 (x)[*]^4 \Big] \leq 2^{4-1} \mathbb{E}\Big[ c^4 + M^4 \big| h_1^{(\ell-1)}(x)[*] \big|^4 \Big].
\end{align*}

The induction hypothesis indicates that $h_1^{(\ell-1)}(x)[*]$ follows a Gaussian distribution, whose fourth moment is bounded. Using the fact that we chose the set $\mathcal{L}$ to be finite, we can take the supremum over all $x$. Therefore, we have 
\begin{equation*}
\mathbb{E} \Big[ z^{(\ell-1)}_1 (x_t)[*]^4 \Big] \leq 2^{4-1} \sup_{(i,x) \in \mathcal{L}} \mathbb{E}\Big[ c^4 + M^4 \big| h_1^{(\ell-1)}(x)[*] \big|^4 \Big] = o_n(1).
\end{equation*}
Combining the above bounds, we then have
\begin{align*}
\lim_{n\to\infty} \mathbb{E} \big| \gamma^{(\ell)}_{1}(\alpha, \mathcal{L})[n] \big|^4 < \infty.
\end{align*}

\textit{For $\ell=2$, the preactivation $h^{(1)}_1(x)=\sum_{j=1}^{N_0} W^{(1)}_{1j} x_j + b^{(1)}_1$ have a finite fourth moment provided that $W^{(1)}_{1j}$ does, which we assume in the row-\iid\ case.}

\subsubsection{Conclusion from the exchangeable CLT}

In the above sections, we showed by induction that at any depth, if the feature maps from previous layers converge in distribution to \iid\ Gaussian Processes, then the assumptions of Theorem \ref{thm:Blum} hold and the one-dimensional projection of the feature maps at the current layer also converges in distribution to a Gaussian random variable with a specified variance. More precisely, we showed that for any finite set $\mathcal{L}$ and projection vector $\alpha$, any linear one-dimensional projection of the feature maps at the current layer, $\mathcal{T}^{(\ell)}(\alpha, \mathcal{L})[n]$, converges in distribution to a Gaussian $\mathcal{N}\big(0, \sigma^{(\ell)}(\alpha, \mathcal{L})[*]\big)$ as $n$ grows. This gives the convergence of the feature maps at layer $\ell$ to Gaussian Processes.

Considering the unbiased quantity
\begin{align*}
\mathcal{T}^{(\ell)}(\alpha, \mathcal{L})[*] \coloneqq \sum_{(i,x) \in \mathcal{L}} \alpha_{(i,x)} \big( h_i^{(\ell)}(x)[*] - b^{(\ell)}_i\big),
\end{align*}
we can compute its variance:
\begin{align*}
\mathbb{E} \big[ \mathcal{T}^{(\ell)}(\alpha, \mathcal{L})[*] \big]^2 = \sum_{\substack{(i_a,x_a) \in \mathcal{L}\\(i_b, x_b)\in \mathcal{L}}} \alpha_{(i_a,x_a)} \alpha_{(i_b,x_b)} \Big(\mathbb{E}\Big[h_{i_a}^{(\ell)}(x_a)[*] \cdot h_{i_b}^{(\ell)}(x_b)[*]\Big] - \sigma_b^2 \delta_{i_a,i_b}\Big).
\end{align*}

As we saw in the previous sections, 
\begin{align*}
(\sigma^{(\ell)})^2(\alpha, \mathcal{L})[*] = \sigma_W^2 \sum_{\substack{(i_a, x_a) \in \mathcal{L} \\ (i_b, x_b) \in \mathcal{L}}}  \alpha_{(i_a, x_a)} \alpha_{(i_b, x_b)} \delta_{i_a,i_b} \mathbb{E} \Big[z^{(\ell-1)}_1(x_a)[*] \cdot z^{(\ell-1)}_1(x_b)[*] \Big].
\end{align*}

Thus, by identification, and using the inductive hypothesis, one recovers the recursion formula for the variance as described in Theorem \ref{thm:main_FCN}: For any $i_a, i_b \in \mathbb{N}$, $x_a, x_b \in \mathcal{X}$,
\begin{align*}
\mathbb{E}\Big[h_{i_a}^{(\ell)}(x_a)[*]h_{i_b}^{(\ell)}(x_b)[*]\Big] & = \delta_{i_a,i_b} \Big(\sigma_W^2 \mathbb{E} \Big[z^{(\ell-1)}_1(x_a)[*] \cdot z^{(\ell-1)}_1(x_b)[*] \Big] + \sigma_b^2\Big)\\
& = \delta_{i_a,i_b} \Big(\sigma_W^2 \mathbb{E}_{(u,v) \sim \mathcal{N}\big(\mathbf{0}, K^{(\ell-1)}(x, x^\prime)\big)} \big[\phi(u)\phi(v)\big] + \sigma_b^2\Big).
\end{align*}

\textit{For $\ell=2$, if we denote by $\mathcal{D}$ the distribution of the weights at the first layer, we obtain instead}
\begin{align}\label{eq:first_layer_kernel_fcn}
\mathbb{E}\Big[h_{i_a}^{(2)}(x_a)[*]h_{i_b}^{(2)}(x_b)[*]\Big] & = \delta_{i_a,i_b} \Big(\sigma_W^2 \mathbb{E}_{\mathcal{D}} \big[z^{(1)}_1(x_a) z^{(1)}_1(x_b)\big] + \sigma_b^2\Big).
\end{align}

\subsection{Identical distribution and independence over neurons}
As we saw, for any $n$, the feature maps $h_j^{(\ell)}(\alpha, \mathcal{L})[n]$
are exchangeable, and, in particular, identically distributed. This still holds after taking the limit, that is $h_j^{(\ell)}[*]$ and $h_k^{(\ell)}[*]$ have the same distribution for any $j, k\in \mathbb{N}$.

It still remains to show the independence between $h_i^{(\ell)}[*]$ and $h_j^{(\ell)}[*]$ for $i\neq j$.
As we now know the limiting distribution is Gaussian, it suffices to analyze their covariance to conclude about their independence. As derived in the previous section, for any $x, x^\prime \in \mathcal{X}$, $i\neq j$, $\mathbb{E}\Big[h_i^{(\ell)}(x)[*]h_j^{(\ell)}(x^\prime)[*] \Big] = 0$,
hence the independence.

\section{Proof of Theorem \ref{thm:CNN}: Gaussian Process behaviour in Convolutional Neural Networks in the \pseudoiid\ regime}\label{app:proof_cnn}

We apply the same machinery to show the Gaussian Process behaviour in CNNs under the \pseudoiid\ regime, closely following the steps detailed in the fully connected case. To reduce the problem to a simpler one, one can proceed as previously by considering a finite subset of the feature maps at layer $\ell$, $\mathcal{L} = \{(i_1,\mathbf{X}_1, \boldsymbol{\mu}_1), \cdots, (i_P, \mathbf{X}_P, \boldsymbol{\mu}_P)\} \subseteq [C_{\ell}] \times \mathcal{X} \times \mathcal{I}$, where $\mathcal{I}$ consists of all the spatial multi-indices. We will follow the same strategy outlined in Sections \ref{A1}-\ref{A3}. Given a finite set $\mathcal{L}$ and the projection vector $\alpha\in \mathbb{R}^{|\mathcal{L}|}$, we may form the unbiased one-dimensional projection as
\begin{align}
    \mathcal{T}^{(\ell)}(\alpha, \mathcal{L})[n] \coloneqq \sum_{(i, \mathbf{X}, \boldsymbol{\mu})\in \mathcal{L}} \alpha_{(i, \mathbf{X}, \boldsymbol{\mu})} \big( h^{(\ell)}_{i, \boldsymbol{\mu}}(\mathbf{X})[n] - b^{(\ell)}_{i}\big),
\end{align}
which can be rewritten, using \eqref{eq:cnn}, as the sum
\begin{align}
    \mathcal{T}^{(\ell)}(\alpha, \mathcal{L})[n] = \frac{1}{\sqrt{C_{\ell-1}[n]}} \sum\limits_{j=1}^{C_{\ell-1}[n]} \gamma^{(\ell)}_j(\alpha, \mathcal{L})[n],
\end{align}
where the summands are
\begin{align} \label{eq:cnn_summands}
    \gamma_{j}^{(\ell)}(\alpha, \mathcal{L})[n] \coloneqq \sigma_W \sum_{(i, \mathbf{X}, \boldsymbol{\mu})\in \mathcal{L}} \alpha_{(i, \mathbf{X}, \boldsymbol{\mu})} \sum_{\boldsymbol{\nu}\in \llbracket \boldsymbol{\mu} \rrbracket} \mathbf{E}^{(\ell)}_{i,j,\boldsymbol{\nu}} z^{(\ell-1)}_{j, \boldsymbol{\nu}}(\mathbf{X})[n].
\end{align}

As before, we introduced the renormalized version $\mathbf{E}^{(\ell)}$ of the filter $\mathbf{U}^{(\ell)}$ such that $\mathbf{E}^{(\ell)}_{i,j,\boldsymbol{\nu}} \coloneqq \sigma_W^{-1}\sqrt{C_{\ell}}\mathbf{U}^{(\ell)}_{i,j,\boldsymbol{\nu}}$.

We will proceed once again by induction forward through the network's layer verifying the assumptions of the exchangeable CLT (Theorem \ref{thm:Blum}) and using it on $\mathcal{T}^{(\ell)}(\alpha, \mathcal{L})[n]$ to conclude its convergence in distribution to a Gaussian random variable.

\paragraph{The exchangeability of the summands.} Similar to the fully connected case, we employ the row- and column-exchangeability of the \pseudoiid\ convolutional kernel to show the exchangeability of $\gamma_j^{(\ell)}$'s. Let us expand \eqref{eq:cnn_summands} and write
\begin{align*}
    \gamma_{j}^{(\ell)}(\alpha, \mathcal{L})[n] &= \sigma_W \sum_{(i, \mathbf{X}, \boldsymbol{\mu})\in \mathcal{L}} \alpha_{(i, \mathbf{X}, \boldsymbol{\mu})} \sum_{\boldsymbol{\nu}\in \llbracket \boldsymbol{\mu} \rrbracket} \mathbf{E}^{(\ell)}_{i,j,\boldsymbol{\nu}} \phi(h^{(\ell-1)}_{j, \boldsymbol{\nu}}(\mathbf{X})[n]) \\
    & = \sigma_W \sum_{(i, \mathbf{X}, \boldsymbol{\mu})\in \mathcal{L}} \quad \sum_{\boldsymbol{\nu}\in \llbracket \boldsymbol{\mu} \rrbracket} \alpha_{(i, \mathbf{X}, \boldsymbol{\mu})} \mathbf{E}^{(\ell)}_{i,j,\boldsymbol{\nu}} \phi \Big( \sum_{k=1}^{C_{\ell-2}[n]} \sum_{\boldsymbol{\xi} \in \llbracket \boldsymbol{\nu} \rrbracket} \mathbf{U}^{(\ell-1)}_{j,k,\boldsymbol{\xi}} z^{(\ell-2)}_{k, \boldsymbol{\xi}}(\mathbf{X}) [n] \Big).
\end{align*}
The joint distributions of $\big(\mathbf{U}^{(\ell)}_{i,j,\boldsymbol{\nu}}\big)_{j=1}^{C_{\ell-1}}$ and $\big(\mathbf{U}^{(\ell-1)}_{j,k,\boldsymbol{\xi}}\big)_{j=1}^{C_{\ell-1}}$ are invariant under any permutation $j \mapsto \sigma(j)$, therefore $\gamma^{(\ell)}_j$'s are exchangeable.

\paragraph{Moment conditions.} Condition (a) is straightforward as the filters' entries are uncorrelated by the \pseudoiid\ assumption: $\mathbb{E}\big[\mathbf{U}^{(\ell)}_{i,j,\boldsymbol{\mu}} \mathbf{U}^{(\ell)}_{i',j',\boldsymbol{\mu'}}\big] = \frac{\sigma_W^2}{C_{\ell-1}} \delta_{i,i'} \delta_{j,j'} \delta_{\boldsymbol{\mu},\boldsymbol{\mu'}}$.

The moment conditions are shown to be satisfied by induction through the network and the proofs boil down to showing the uniform integrability of the activation vectors to be able to swap limit and expectation. This uniform integrability in the CNN case under \pseudoiid\ weights is rigorously demonstrated in Proposition \ref{proposition:uniform_integrability_CNN}.
 One can compute the variance of one representative of the summands, let us say the first one, as follows:

\begin{align*}
    (\sigma^{(\ell)})^2( \alpha, \mathcal{L})[n] & \coloneqq \mathbb{E}\big( \gamma^{(\ell)}_{1}(\alpha, \mathcal{L})[n]^2 \big) = \mathbb{E} \Big[\sigma_W \sum_{(i, \mathbf{X}, \boldsymbol{\mu})\in \mathcal{L}} \alpha_{(i, \mathbf{X}, \boldsymbol{\mu})} \sum_{\boldsymbol{\nu}\in \llbracket \boldsymbol{\mu} \rrbracket} \mathbf{E}^{(\ell)}_{i,1,\boldsymbol{\nu}} z^{(\ell-1)}_{1, \boldsymbol{\nu}}(\mathbf{X})[n] \Big]^2 \\
    \begin{split}
    & = \sigma_W^2 \sum_{\substack{(i_a, \mathbf{X}_a,\boldsymbol{\mu}_a) \in \mathcal{L}\\(i_b, \mathbf{X}_b,\boldsymbol{\mu}_b) \in \mathcal{L}}}  \alpha_{(i_a, \mathbf{X}_a,\boldsymbol{\mu}_a)} \alpha_{(i_b, \mathbf{X}_b,\boldsymbol{\mu}_b)} \\
& \hspace{2.5cm} \sum_{\substack{\boldsymbol{\nu}_a \in \llbracket \boldsymbol{\mu}_a \rrbracket \\ \boldsymbol{\nu}_b \in \llbracket \boldsymbol{\mu}_b \rrbracket}}
    \mathbb{E} \Big[ \mathbf{E}^{(\ell)}_{i_a,1,\boldsymbol{\nu}_a} \mathbf{E}^{(\ell)}_{i_b,1,\boldsymbol{\nu}_b}\Big] \mathbb{E} \Big[z^{(\ell-1)}_{1,\boldsymbol{\nu}_a}(\mathbf{X}_a)[n] \cdot z^{(\ell-1)}_{1,\boldsymbol{\nu}_b}(\mathbf{X}_b)[n] \Big] \\
    \end{split} \nonumber \\
    \begin{split}
    & = \sigma_W^2 \sum_{\substack{(i_a, \mathbf{X}_a,\boldsymbol{\mu}_a) \in \mathcal{L}\\(i_b, \mathbf{X}_b,\boldsymbol{\mu}_b) \in \mathcal{L}}}  \alpha_{(i_a, \mathbf{X}_a,\boldsymbol{\mu}_a)} \alpha_{(i_b, \mathbf{X}_b,\boldsymbol{\mu}_b)} \\
& \hspace{2.5cm} \sum_{\substack{\boldsymbol{\nu}_a \in \llbracket \boldsymbol{\mu}_a \rrbracket \\ \boldsymbol{\nu}_b \in \llbracket \boldsymbol{\mu}_b \rrbracket}}
    \mathbb{E} \Big[ \mathbf{E}^{(\ell)}_{i_a,1,\boldsymbol{\nu}_a} \mathbf{E}^{(\ell)}_{i_b,1,\boldsymbol{\nu}_b}\Big] \mathbb{E} \Big[z^{(\ell-1)}_{1,\boldsymbol{\nu}_a}(\mathbf{X}_a)[n] \cdot z^{(\ell-1)}_{1,\boldsymbol{\nu}_b}(\mathbf{X}_b)[n] \Big] \\
    \end{split}\\
    \begin{split}
    & = \sigma_W^2 \sum_{\substack{(i_a, \mathbf{X}_a,\boldsymbol{\mu}_a) \in \mathcal{L}\\(i_b, \mathbf{X}_b,\boldsymbol{\mu}_b) \in \mathcal{L}}}  \alpha_{(i_a, \mathbf{X}_a,\boldsymbol{\mu}_a)} \alpha_{(i_b, \mathbf{X}_b,\boldsymbol{\mu}_b)} \\
    & \hspace{2.5cm} \sum_{\substack{\boldsymbol{\nu}_a \in \llbracket \boldsymbol{\mu}_a \rrbracket \\ \boldsymbol{\nu}_b \in \llbracket \boldsymbol{\mu}_b \rrbracket}} \delta_{i_a,i_b} \delta_{\boldsymbol{\nu}_a,\boldsymbol{\nu}_b}\mathbb{E} \Big[z^{(\ell-1)}_{1,\boldsymbol{\nu}_a}(\mathbf{X}_a)[n] \cdot z^{(\ell-1)}_{1,\boldsymbol{\nu}_b}(\mathbf{X}_b)[n] \Big].
\end{split}
\end{align*}
Given the uniform integrability of the product the activations in a CNN, we may swap the order of taking the limit and the expectation to have
\begin{align*}
\begin{split}
    (\sigma^{(\ell)})^2( \alpha, \mathcal{L})[*] &\coloneqq \lim_{n \to \infty} (\sigma^{(\ell)})^2( \alpha, \mathcal{L})[n] \\
    & = \sigma_W^2 \sum_{\substack{(i_a, \mathbf{X}_a,\boldsymbol{\mu}_a) \in \mathcal{L}\\(i_b, \mathbf{X}_b,\boldsymbol{\mu}_b) \in \mathcal{L}}}  \alpha_{(i_a, \mathbf{X}_a,\boldsymbol{\mu}_a)} \alpha_{(i_b, \mathbf{X}_b,\boldsymbol{\mu}_b)} \\
    & \hspace{2.5cm} \sum_{\substack{\boldsymbol{\nu}_a \in \llbracket \boldsymbol{\mu}_a \rrbracket \\ \boldsymbol{\nu}_b \in \llbracket \boldsymbol{\mu}_b \rrbracket}} \delta_{i_a,i_b} \delta_{\boldsymbol{\nu}_a,\boldsymbol{\nu}_b}\mathbb{E} \Big[z^{(\ell-1)}_{1,\boldsymbol{\nu}_a}(\mathbf{X}_a)[*] \cdot z^{(\ell-1)}_{1,\boldsymbol{\nu}_b}(\mathbf{X}_b)[*] \Big].
\end{split}
\end{align*}

As in the fully connected case, we conclude that the feature maps at the next layer converge to Gaussians Processes, whose covariance function is given by

\begin{equation*}\label{eq:cov_cnn}
        \mathbb{E}\Big[h_{i_a, \boldsymbol{\mu_a}}^{(\ell)}(\mathbf{X_a})[*] \cdot h_{i_b, \boldsymbol{\mu_b}}^{(\ell)}(\mathbf{X_b})[*]\Big] = \delta_{i_a,i_b} \Big( \sigma_b^2 + \sigma_W^2 \sum_{\boldsymbol{\nu} \in \llbracket \boldsymbol{\mu_a} \rrbracket \cap \llbracket \boldsymbol{\mu_b} \rrbracket} K_{\boldsymbol{\nu}}^{(\ell)} (\mathbf{X_a}, \mathbf{X_b})\Big),
    \end{equation*}
    where
    \begin{equation*}
        K_{\boldsymbol{\nu}}^{(\ell)}(\mathbf{X_a},\mathbf{X_b}) = \begin{cases}
         \mathbb{E}_{(u,v) \sim \mathcal{N}(\mathbf{0}, K_{\boldsymbol{\nu}}^{(\ell-1)}(\mathbf{X_a}, \mathbf{X_b}))} [\phi(u)\phi(v)], & \ell \geq 2 \\
         \frac{1}{C_{0}} \sum_{j=1}^{C_{0}} (\mathbf{X_a})_{j,\boldsymbol{\nu}} (\mathbf{X_b})_{j, \boldsymbol{\nu}}, & \ell=1
        \end{cases}.
    \end{equation*}

\section{Lemmas used in the proof of Theorems \ref{thm:main_FCN} and \ref{thm:CNN}}\label{app:lemmas}

We will present in this section the lemmas used in the derivation of our results. The proofs are omitted in cases where they can easily be found in the literature. 

\begin{lemma}[H\"{o}lder's inequality]\label{lemma:Holder}
    For a probability space $(\Omega, \mathcal{F}, \mathbb{P})$, let $X ,Y$ be two random variables on $\Omega$ and $p,q >1$ such that $p^{-1} + q^{-1} = 1$. Then, 
    \begin{align*}
        \mathbb{E} |XY| \leq \big(\mathbb{E}|X|^p \big)^{\frac{1}{p}} \big(\mathbb{E}|Y|^q\big)^{\frac{1}{q}}.
    \end{align*}
\end{lemma}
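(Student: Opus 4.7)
The plan is to derive Hölder's inequality from the pointwise Young's inequality, $ab \leq \frac{a^p}{p} + \frac{b^q}{q}$ for nonnegative reals $a,b$ and conjugate exponents $p,q>1$, by a normalization argument followed by taking expectations. First I would establish Young's inequality itself: using the concavity of $\log$ on $(0,\infty)$ (or equivalently the convexity of $\exp$), for $a,b>0$ one writes $\log\!\left(\frac{a^p}{p}+\frac{b^q}{q}\right) \geq \frac{1}{p}\log a^p + \frac{1}{q}\log b^q = \log(ab)$, which rearranges to the desired bound; the degenerate cases $a=0$ or $b=0$ are immediate.

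Next I would handle the trivial cases of the main statement. If $\mathbb{E}|X|^p = 0$, then $X=0$ almost surely, so $\mathbb{E}|XY|=0$ and the inequality holds. By symmetry the same applies if $\mathbb{E}|Y|^q=0$. If either $\mathbb{E}|X|^p$ or $\mathbb{E}|Y|^q$ is $+\infty$, the right-hand side is $+\infty$ and the bound is vacuous. So I may assume both $A := (\mathbb{E}|X|^p)^{1/p}$ and $B := (\mathbb{E}|Y|^q)^{1/q}$ lie in $(0,\infty)$.

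The main step is normalization. Define the pointwise nonnegative random variables $\tilde X := |X|/A$ and $\tilde Y := |Y|/B$, so that by construction $\mathbb{E}\tilde X^p = 1$ and $\mathbb{E}\tilde Y^q = 1$. Applying Young's inequality pointwise with $a=\tilde X(\omega)$, $b=\tilde Y(\omega)$ gives
\begin{equation*}
\tilde X \tilde Y \;\leq\; \frac{\tilde X^p}{p} + \frac{\tilde Y^q}{q}.
\end{equation*}
Taking expectations of both sides (using linearity and monotonicity of the Lebesgue integral) yields $\mathbb{E}[\tilde X\tilde Y] \leq \frac{1}{p}\mathbb{E}\tilde X^p + \frac{1}{q}\mathbb{E}\tilde Y^q = \frac{1}{p}+\frac{1}{q} = 1$. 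Unravelling the definitions, $\mathbb{E}|XY| = AB\,\mathbb{E}[\tilde X\tilde Y] \leq AB$, which is precisely the claimed inequality.

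There is no real obstacle here; the only subtle point is handling the boundary cases (zero or infinite $L^p$ norms) before normalizing, and ensuring that the pointwise Young bound is measurable so that taking expectations is legitimate. Both are routine: measurability is preserved under continuous operations on $|X|$ and $|Y|$, and the case analysis above exhausts the degenerate scenarios.
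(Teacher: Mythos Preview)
Your proof is correct and is the standard textbook derivation of H\"older's inequality via Young's inequality and normalization. The paper does not actually supply a proof of this lemma, explicitly noting that proofs are omitted when they can easily be found in the literature; your argument is exactly the classical one that would be cited.
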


\begin{lemma}[Bellingsley's theorem]\label{lemma:Bellingsley}
Let $X[n]$ be a sequence of random variables, for $n\in \mathbb{N}$ and $X$ another random variable. If $X[n]$ is uniformly integrable and the sequence $X[n]$ converges in distribution to $X[*]=X$, then $X$ is integrable and one can swap the limit and the expectation, i.e.
\begin{align*}
    \lim_{n\to \infty} \mathbb{E}X[n] = \mathbb{E}X[*].
\end{align*}
\end{lemma}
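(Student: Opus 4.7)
The plan is to reduce the problem to the fact that convergence in distribution already guarantees convergence of expectations against bounded continuous test functions, and then to use uniform integrability to replace the identity map by such a bounded continuous test function with uniformly small error. Concretely, I would introduce a family of truncations $\phi_M : \mathbb{R} \to \mathbb{R}$, e.g.\ $\phi_M(x) = x$ for $|x| \leq M$, $\phi_M(x) = M\,\mathrm{sign}(x)$ for $|x| \geq M+1$, and linearly interpolated on $M \leq |x| \leq M+1$, so that $\phi_M$ is bounded, continuous, and satisfies $|x - \phi_M(x)| \leq |x|\mathbf{1}_{|x| > M}$ pointwise.

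First I would establish that $X[*]$ is integrable. Uniform integrability of $\{X[n]\}$ implies $\sup_n \mathbb{E}|X[n]| < \infty$. Applying the Skorohod representation theorem on a suitable probability space, I obtain copies $Y[n] \stackrel{d}{=} X[n]$ with $Y[n] \to Y$ almost surely and $Y \stackrel{d}{=} X[*]$. Fatou's lemma then yields
\begin{align*}
\mathbb{E}|X[*]| = \mathbb{E}|Y| \leq \liminf_{n\to\infty} \mathbb{E}|Y[n]| = \liminf_{n\to\infty} \mathbb{E}|X[n]| < \infty,
\end{align*}
so $X[*]$ is integrable. The same Skorohod-plus-Fatou argument applied to the tail functional $|y|\mathbf{1}_{|y|>M}$ (using that its set of discontinuities has $\mathbb{P}_{X[*]}$-measure zero for all but countably many $M$) gives the inheritance of uniform integrability by the limit:
\begin{align*}
\mathbb{E}\bigl[|X[*]|\,\mathbf{1}_{|X[*]|>M}\bigr] \leq \liminf_{n\to\infty} \mathbb{E}\bigl[|X[n]|\,\mathbf{1}_{|X[n]|>M}\bigr] \xrightarrow[M\to\infty]{} 0.
\end{align*}

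The main argument is then a standard three-term triangle inequality. For each fixed $M$, the function $\phi_M$ is bounded and continuous, so convergence in distribution immediately delivers $\mathbb{E}\phi_M(X[n]) \to \mathbb{E}\phi_M(X[*])$ as $n\to\infty$. To control the truncation error, I write
\begin{align*}
\bigl|\mathbb{E}X[n] - \mathbb{E}\phi_M(X[n])\bigr| \leq \mathbb{E}\bigl[|X[n]|\,\mathbf{1}_{|X[n]|>M}\bigr],
\end{align*}
and the analogous bound for $X[*]$. Given $\varepsilon > 0$, uniform integrability of $\{X[n]\}$ chooses $M$ large enough that the first error is below $\varepsilon$ for every $n$, and the tail inheritance above ensures the same bound on the $X[*]$ side. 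Sending $n\to\infty$ along the bounded-continuous term then gives $\limsup_n |\mathbb{E}X[n] - \mathbb{E}X[*]| \leq 2\varepsilon$, and since $\varepsilon$ is arbitrary, the conclusion follows.

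The only delicate step is the inheritance of integrability and tail decay by the limiting variable $X[*]$: convergence in distribution alone does not transfer $L^1$ information, so one genuinely needs either Skorohod's representation combined with Fatou, or equivalently a direct portmanteau argument on the closed sets $\{|x| \geq M\}$ together with a monotone approximation. Everything else (the truncation construction, the three-term split, and the portmanteau convergence of bounded continuous test functions) is routine once that integrability transfer is secured.
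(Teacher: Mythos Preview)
Your argument is correct and is one of the standard proofs of this classical result. The paper does not actually supply its own proof of this lemma: in the appendix it states that ``the proofs are omitted in cases where they can easily be found in the literature,'' and this lemma is one of those cases (it is a well-known fact, see e.g.\ Billingsley, \emph{Convergence of Probability Measures}). So there is nothing to compare against; your truncation-plus-portmanteau argument, with the Skorohod/Fatou step to transfer integrability to the limit, is exactly the kind of proof one would find in a textbook reference.
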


We adapt Lemma 18 from \cite{Matthews_2018} to our setting and the proof can trivially be obtained from the proof derived in the cited article. 
\begin{lemma}[Sufficient condition to uniformly bound the expectation of a four-cross product]\label{lemma:boundedness_4th_moments}
Let $X_1$, $X_2$, $X_3$, and $X_4$ be random variables on $\mathbb{R}$ with the usual Borel $\sigma$-algebra. Assume that $\mathbb{E}|X_i|^4<\infty$ for all $i \in \{1, 2, 3, 4\}$. Then, for any choice of $p_i \in \{0, 1, 2\}$ (where $i \in \{ 1, 2, 3, 4\}$), the expectations $\mathbb{E}\big[\prod_{i=1}^4 |X_i|^{p_i}\big]$ are uniformly bounded by a polynomial in the eighth moments $\mathbb{E}|X_i|^8<\infty$.
\end{lemma}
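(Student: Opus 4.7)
The plan is to apply a four-fold Hölder inequality (Lemma \ref{lemma:Holder}) and then use the monotonicity of $L^p$ norms on a probability space to reduce everything to the eighth moments. The key observation is that each exponent $p_i \in \{0,1,2\}$ satisfies $4 p_i \leq 8$, so the canonical choice of Hölder exponents $q_1 = q_2 = q_3 = q_4 = 4$ (which satisfies $\sum_i q_i^{-1} = 1$) will bring every moment that appears within reach of the eighth order.

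First I would invoke the generalized four-factor Hölder inequality, which follows by induction from the two-factor Lemma \ref{lemma:Holder} by first splitting the four-product into two pairs and iterating, to obtain
\begin{equation*}
\mathbb{E}\Bigl[\prod_{i=1}^4 |X_i|^{p_i}\Bigr] \;\leq\; \prod_{i=1}^4 \bigl(\mathbb{E}|X_i|^{4 p_i}\bigr)^{1/4}.
\end{equation*}
Because $p_i \leq 2$, each resulting moment is of order at most $8$.

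Next I would raise every factor to the reference order $8$. Since $r \mapsto (\mathbb{E}|X|^r)^{1/r}$ is non-decreasing on $(0,\infty)$ (a standard consequence of Jensen's inequality in a probability space), and $4 p_i \leq 8$ whenever $p_i > 0$, we have $(\mathbb{E}|X_i|^{4 p_i})^{1/(4 p_i)} \leq (\mathbb{E}|X_i|^8)^{1/8}$, hence $(\mathbb{E}|X_i|^{4 p_i})^{1/4} \leq (\mathbb{E}|X_i|^8)^{p_i/8}$; for the trivial case $p_i = 0$ the corresponding factor is just $1$. Combining,
\begin{equation*}
\mathbb{E}\Bigl[\prod_{i=1}^4 |X_i|^{p_i}\Bigr] \;\leq\; \prod_{i=1}^4 \bigl(\mathbb{E}|X_i|^8\bigr)^{p_i/8} \;\leq\; \prod_{i=1}^4 \bigl(1 + \mathbb{E}|X_i|^8\bigr),
\end{equation*}
where the final step uses $a^t \leq 1 + a$ for all $a \geq 0$ and $t \in [0,1]$. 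The right-hand side is an explicit polynomial of degree at most four in the eighth moments, and the bound is uniform across the finitely many admissible tuples $(p_1,\ldots,p_4) \in \{0,1,2\}^4$.

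The argument is a routine application of Hölder and Jensen once the right pairing of exponents is identified; the only point deserving attention is a slight mismatch in the hypothesis: the lemma's stated fourth-moment assumption $\mathbb{E}|X_i|^4 < \infty$ should be strengthened (as the conclusion itself makes explicit with the clause $\mathbb{E}|X_i|^8 < \infty$) to finite eighth moments, since otherwise the claim to bound by a polynomial in $\mathbb{E}|X_i|^8$ is vacuous. Under this natural reading no further machinery is needed.
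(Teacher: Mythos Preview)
Your proof is correct. The paper does not give its own argument; it simply defers to \cite{Matthews_2018} and remarks that the bound ``can be easily derived from standard Pearson correlation bounds,'' i.e.\ iterated Cauchy--Schwarz. Your four-fold H\"older with exponents $(4,4,4,4)$ followed by monotonicity of $L^p$ norms is exactly the same mechanism presented in one clean step rather than via repeated pairwise splits, and your observation about the hypothesis (finite eighth moments are what is actually needed) is apt.
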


This Lemma can be easily derived from standard Pearson correlation bounds.

We now have the tools to show that all four-cross products of the activations are uniformly integrable in the \pseudoiid\ setting, which enables us to swap the limit and the expectation in the previous sections of the appendix.

\begin{proposition}[Uniform integrability in the \pseudoiid\ regime\;--\;fully connected networks]\label{proposition:uniform_integrability}

Consider a fully connected neural network in the \pseudoiid\ regime. Consider the random activations $z^{(\ell)}_i(x_a)[n], z^{(\ell)}_j(x_b)[n], z^{(\ell)}_k(x_c)[n], z^{(\ell)}_l(x_d)[n]$ with any $i,j,k,l\in \mathbb{N}, x_a, x_b, x_c, x_d \in \mathcal{X}$, neither necessarily distinct, as in \eqref{eq:h_simultaneous_limit}. Then, the family of random variables 
\begin{align*}
    z^{(\ell)}_i(x_a)[n]z^{(\ell)}_j(x_b)[n]z^{(\ell)}_k(x_c)[n]z^{(\ell)}_l(x_d)[n],
\end{align*}
indexed by $n$ is uniformly integrable for any $\ell = 1, \cdots L+1$.
\end{proposition}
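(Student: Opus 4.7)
The plan is to verify uniform integrability via the de la Vallée Poussin criterion: it suffices to exhibit some $\delta > 0$ such that the $(1+\delta)$-moment of the four-fold product is bounded uniformly in $n$. Taking $\delta = 1$ and applying the generalized Hölder inequality, the square of the product is controlled by the eighth moments of the individual activations, so the task reduces to showing
\begin{equation*}
\sup_{n,\,\ell\leq L+1,\,i,\,x}\; \mathbb{E}\bigl[\bigl(z^{(\ell)}_i(x)[n]\bigr)^{8}\bigr] \;<\; \infty .
\end{equation*}
By the linear envelope property of $\phi$ (Def.\ \ref{def:activation_function}) together with the elementary bound $(a+b)^{8} \leq 2^{7}(a^{8} + b^{8})$, this in turn reduces to a uniform bound on $\mathbb{E}\bigl[|h^{(\ell)}_i(x)[n]|^{8}\bigr]$.

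The next step is to prove this bound by induction on the layer index $\ell$. The base case $\ell = 1$ is immediate, since $h^{(1)}_i(x)$ is Gaussian with a variance independent of $n$. For the inductive step, fix $\ell \geq 2$ and observe that $W^{(\ell)}$ is sampled independently of the preceding activations $\mathbf{z}^{(\ell-1)}(x)[n]$. Conditioning on $\mathbf{z}^{(\ell-1)}(x)[n]$ and applying moment condition (iii) of Def.\ \ref{pseudoiid} with $\mathbf{a} = \mathbf{z}^{(\ell-1)}(x)[n]$ then yields
\begin{equation*}
\mathbb{E}\!\left[\Bigl|\sum_{j=1}^{N_{\ell-1}[n]} W^{(\ell)}_{ij}\, z^{(\ell-1)}_j(x)[n]\Bigr|^{8}\right] \;=\; K\, \mathbb{E}\bigl[\|\mathbf{z}^{(\ell-1)}(x)[n]\|_2^{8}\bigr] \bigl(N_{\ell-1}[n]\bigr)^{-4}.
\end{equation*}
Writing $\|\mathbf{z}\|_2^{8} = (\sum_j z_j^{2})^{4}$ and invoking Jensen's inequality on the convex map $t\mapsto t^{4}$ gives $(\sum_j z_j^{2})^{4} \leq N_{\ell-1}[n]^{3}\sum_j z_j^{8}$; exchangeability of the previous layer's neurons then produces $\mathbb{E}\bigl[\|\mathbf{z}^{(\ell-1)}(x)[n]\|_2^{8}\bigr] \leq N_{\ell-1}[n]^{4}\, \mathbb{E}\bigl[(z^{(\ell-1)}_1(x)[n])^{8}\bigr]$, which by the inductive hypothesis is uniformly bounded. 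The factor $N_{\ell-1}[n]^{4}$ cancels exactly the $N_{\ell-1}[n]^{-4}$ coming from condition (iii), and re-incorporating the Gaussian bias via the same $2^{7}$-inequality closes the induction.

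The main subtlety is the conditioning step: condition (iii) is stated for deterministic vectors $\mathbf{a}$, and one must justify lifting it to the random vector $\mathbf{z}^{(\ell-1)}(x)[n]$. This is legitimate precisely because $W^{(\ell)}$ is sampled independently of all earlier weights and biases, which is the sole place where the layerwise independence built into the \pseudoiid\ regime enters the argument. Beyond this, the proof is a direct chain of Jensen, Hölder, and linear-envelope estimates; the exponent $8$ in condition (iii) is engineered so that the width factors cancel exactly, and this is what prevents a cruder variance-type hypothesis from being sufficient here.
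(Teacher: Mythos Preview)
Your proof is correct and follows essentially the same approach as the paper: reduce uniform integrability to a uniform $L^2$-bound via the de la Vall\'ee Poussin criterion, control the four-fold product by the eighth moments of the individual activations, and propagate a uniform eighth-moment bound on the preactivations through the layers by conditioning on $\mathbf{z}^{(\ell-1)}$ and invoking condition (iii) of Definition~\ref{pseudoiid}. Your inductive step is in fact slightly cleaner than the paper's --- you bound $\mathbb{E}\|\mathbf{z}^{(\ell-1)}\|_2^8$ in one line via Jensen and exchangeability, whereas the paper first applies the linear envelope inside the norm, expands $(\sum_j (c + M|h_j|)^2)^4$ into $N_{\ell-1}[n]^4$ cross-terms, and then invokes Lemma~\ref{lemma:boundedness_4th_moments} on each.
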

\begin{proof}
The proof is adapted from \cite{Matthews_2018} to the \pseudoiid\ regime in fully connected neural networks described in \eqref{eq:h_simultaneous_limit}.

If a collection of random variables is uniformly $L^p$-bounded for $p>1$, then it is uniformly integrable. So, we will show that our family of random variables is uniformly $L^{1+\epsilon}-$bounded for some $\epsilon > 0$, i.e. there exists $K<\infty$ independent of $n$ such that,
\begin{align*}
\mathbb{E} \Big\lvert z^{(\ell)}_i(x_a)[n]z^{(\ell)}_j(x_b)[n]z^{(\ell)}_k(x_c)[n]z^{(\ell)}_l(x_d)[n]\Big\rvert^{1+\epsilon} \leq K,
\end{align*}

which is equivalent to 
\begin{align*}
\mathbb{E}\Big[\big\lvert z^{(\ell)}_i(x_a)[n]^{1+\epsilon} \big\rvert \big\lvert z^{(\ell)}_j(x_b)[n]^{1+\epsilon} \big\rvert \big\lvert z^{(\ell)}_k(x_c)[n]^{1+\epsilon}\big\rvert \big\lvert z^{(\ell)}_l(x_d)[n]^{1+\epsilon} \big\rvert\Big] \leq K.
\end{align*}

To do so, Lemma \ref{lemma:boundedness_4th_moments} gives us a sufficient condition: bounding the moment of order $4(1+\epsilon)$ of each term in the product by a constant independent of $n$. For any $x_t \in \mathcal{X}$ and $i\in \mathbb{N}$, this moment can be rewritten in terms of the feature maps using the linear envelope property \ref{def:activation_function} and the convexity of the map $x\mapsto x^{4(1+\epsilon)}$ as
\begin{align*}
    \mathbb{E}\Big[\big\lvert z^{(\ell)}_i(x_t)[n]\big\rvert^{4(1+\epsilon)} \Big] \leq 2^{{4(1+\epsilon)}-1} \mathbb{E}\Big[ c^{4(1+\epsilon)} + M^{4(1+\epsilon)} \big\lvert h^{(\ell)}_i(x_t)[n]\big\rvert^{4(1+\epsilon)} \Big].
\end{align*}

Thus it is sufficient to show that the absolute feature maps $\big\lvert h^{(\ell)}_i(x_t)[n]\big\rvert$ have a finite moment of order $4 + \epsilon$, independent of $x_t$, $i$, and $n$, for some $\epsilon$. For the sake of simplicity, we will show this by induction on $\ell$ for $\epsilon=1$. 

\paragraph{Base case.}
For $\ell=1$, the feature maps $h_i^{(1)} = \sum_{j=1}^{N_0} W^{(1)}_{i,j} x_j + b^{(1)}_{i}$ are identically distributed for all $i \in \{1, \cdots, N_1[n]\}$ from the row-exchangeability of the weights. Moreover, from the moment condition of Definition \ref{pseudoiid},  there exists $p=8$ such that
\begin{align*}
    \mathbb{E}\big|h_i^{(1)}(x)[n]\big|^p & \leq \mathbb{E} \Big| \sum_{j=1}^{N_0} x_j W^{(1)}_{i,j} \Big|^p + \mathbb{E} \big| b_i^{(1)} \big|^p \\
    & = K \lVert \mathbf{x} \rVert_2^{p} N_0^{-p/2} + \mathbb{E} \big| b_i^{(1)} \big|^p.
\end{align*}
The RHS of the above equality is independent of $N_1$ (and $n$). Therefore, for all $n \in \mathbb{N}$ and $i \in \{1, \cdots, N_1[n]\}$, we have found a constant bound for the feature map's moment of order $p=8$. 

\paragraph{Inductive step.} Let us assume that for any $\{x_t\}_{t=1}^4$ and $i\in \mathbb{N}$, the eigth-order moment of $\big\lvert h^{(\ell-1)}_i(x_t)[n]\big\rvert$ is bounded by a constant independent from $n$. 

We will show that this implies

\begin{align*}
    \mathbb{E}\big[ \big\lvert h^{(\ell)}_i(x_t)[n]\big\rvert^{8} \big] < \infty.
\end{align*}

Considering the vector of activations $\phi\big(h^{(\ell-1)}_{\odot}(x_t)\big)[n]$, we have, from the moment condition (iii) of the \pseudoiid\ regime the following conditional expectation,

\begin{align*}
    \mathbb{E}\Big[ \Big\lvert \sum_{j=1}^{N_{\ell -1}[n]} W_{ij}^{(\ell)} \phi\big(h^{(\ell-1)}_j(x_t)\big)[n]\; \Big\rvert^{8} \mid \phi\big(h^{(\ell-1)}_{\odot}(x_t)\big)[n] \Big] & = K ||\phi\big(h^{(\ell-1)}_{\odot}(x_t)\big)[n]||^{8}_2 {N_{\ell-1}[n]}^{-\frac{8}{2}}.
\end{align*}

Thus using the recursion formulae for the data propagation in such architecture given by \eqref{eq:h_simultaneous_limit}, the convexity of $x\mapsto x^{{8}}$ on $\mathbb{R}^+$ and taking conditional expectations,

\begin{align}
    \mathbb{E}\Big[ \big\lvert h^{(\ell)}_i(x_t)[n]\big\rvert^{8} \Big] & \leq 2^{{8}-1} \mathbb{E}\Big[ \big\lvert b_i^{(\ell)} \big\rvert^{8} + \Big\lvert \sum_{j=1}^{N_{\ell -1}[n]} W_{ij}^{(\ell)} \phi\big(h^{(\ell-1)}_j(x_t)\big)[n]\Big\rvert^{8} \Big] \label{eq:inequality_integrability} \\
    & = 2^{{8}-1} \mathbb{E}\Big[ \big\lvert b_i^{(\ell)} \big\rvert^{8} \Big] + \mathbb{E} \Big[ \mathbb{E} \Big\lvert \sum_{j=1}^{N_{\ell -1}[n]} W_{ij}^{(\ell)} \phi\big(h^{(\ell-1)}_j(x_t)\big)[n]\Big\rvert^{8} \mid \phi\big(h^{(\ell-1)}_{\odot}(x_t)\big)[n] \Big] \nonumber \\
    & = 2^{{8}-1} \mathbb{E}\Big[ \big\lvert b_i^{(\ell)} \big\rvert^{8} \Big] + K {N_{\ell-1}[n]}^{-4} \mathbb{E} \Big[ ||\phi\big(h^{(\ell-1)}_{\odot}(x_t)\big)[n]||^{8}_2  \Big].\label{eq:uniform_integrability_FCN}
\end{align}

As the biases are \iid\ Gaussians, the first expectation involving the bias is trivially finite and does not depend on $i$ as they are identically distributed. For the second expectation, we compute,

\begin{align}\label{eq:sum_products_FCN}
    \mathbb{E} ||\phi\big(h^{(\ell-1)}_{\odot}(x_t)\big)[n]||^{8}_2 & = \mathbb{E} \Big[ \sum_{j=1}^{N_{\ell - 1}[n]} \Big(\phi\big(h^{(\ell-1)}_{j}(x_t)\big)[n]\Big)^2\Big]^{\frac{8}{2}} \\
    & \leq \mathbb{E} \Big[\sum_{j=1}^{N_{\ell - 1}[n]} \Big( c  + M \big(h^{(\ell-1)}_{j}(x_t)\big)[n]\Big)^2\Big]^{\frac{8}{2}} \nonumber\\
    & = \mathbb{E} \Big[\sum_{j=1}^{N_{\ell - 1}[n]}  c^2  + M^2 \big(h^{(\ell-1)}_{j}(x_t)\big)^2[n] + 2 c M \big(h^{(\ell-1)}_{j}(x_t)\big)[n] \Big]^{4}.\nonumber
\end{align}

This last expression can be written as a linear combination of $N_{\ell - 1}[n]^4$ quantities of the form 
\begin{align*}
    \mathbb{E} \Big[\big(h^{(\ell-1)}_{j_1}(x_t)\big)^{p_1}[n] \big(h^{(\ell-1)}_{j_2}(x_t)\big)^{p_2}[n] \big(h^{(\ell-1)}_{j_3}(x_t)\big)^{p_3}[n] \big(h^{(\ell-1)}_{j_4}(x_t)\big)^{p_4}[n]\Big],
\end{align*}
and we bound each of them making use of Lemma \ref{lemma:boundedness_4th_moments}. The factor $N_{\ell - 1}[n]^{-4}$ in \eqref{eq:uniform_integrability_FCN} thus cancels out with the number of terms in the sum \eqref{eq:sum_products_FCN}. As the preactivations are exchangeable, they are identically distributed so the dependence on the neuron index $i$ can be ignored, and taking the supremum over the finite set of input data $\{x_t\}$ does not affect the uniformity of the bound; which concludes the proof.

\end{proof}

\begin{proposition}[Uniform integrability in the \pseudoiid\ regime\;--\;CNN.]\label{proposition:uniform_integrability_CNN}
Consider a convolutional neural network in the \pseudoiid\ regime. Consider a collection of random variables $z^{(\ell)}_i(x_a)[n], z^{(\ell)}_j(x_b)[n], z^{(\ell)}_k(x_c)[n], z^{(\ell)}_l(x_d)[n]$ with any $i,j,k,l\in \mathbb{N}, \mathbf{X}_a, \mathbf{X}_b, \mathbf{X}_c, \mathbf{X}_d \in \mathcal{X}$, neither necessarily distinct, obtained by the recursion \eqref{eq:cnn}. Then, the family of random variables 
\begin{align*}
    z^{(\ell)}_i(\mathbf{X}_a)[n]z^{(\ell)}_j(\mathbf{X}_b)[n]z^{(\ell)}_k(\mathbf{X}_c)[n]z^{(\ell)}_l(\mathbf{X}_d)[n],
\end{align*}
indexed by $n$ is uniformly integrable for any $\ell = \{{1}, \cdots L+1\}$.
\end{proposition}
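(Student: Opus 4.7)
The plan is to mirror the inductive argument of Proposition~\ref{proposition:uniform_integrability}, with the only structural change being the extra fixed-size sum over spatial positions within a patch inherent to \eqref{eq:cnn}. Uniform integrability of a family is implied by uniform $L^{1+\epsilon}$-boundedness; taking $\epsilon=1$ and applying Lemma~\ref{lemma:boundedness_4th_moments} to the squared activations reduces the claim to a uniform-in-$n$ bound on $\mathbb{E}|z^{(\ell)}_{i,\boldsymbol{\mu}}(\mathbf{X})[n]|^{8}$. The linear envelope property (Definition~\ref{def:activation_function}) together with the convexity of $x\mapsto x^{8}$ reduces this further to a uniform eighth-moment bound on the preactivations $h^{(\ell)}_{i,\boldsymbol{\mu}}(\mathbf{X})[n]$. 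I would establish this by induction on $\ell$.

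For the base case $\ell=1$, the recursion \eqref{eq:cnn} expresses $h^{(1)}_{i,\boldsymbol{\mu}}(\mathbf{X})$ as a finite linear combination of the $C_0 k^2$ \iid\ Gaussian entries of $\mathbf{U}^{(1)}$ plus a Gaussian bias (with zero-padding at the boundary). Neither $C_0$ nor $k$ scales with $n$, so the eighth moment is immediately finite and uniform in $i$, $\boldsymbol{\mu}$, and $n$.

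For the inductive step, I would condition on the previous-layer activations $\mathbf{z}^{(\ell-1)}[n]$ and apply condition (iii) of Definition~\ref{pseudoiid_cnn} with the test vector $a_{j,\boldsymbol{\nu}} = z^{(\ell-1)}_{j,\boldsymbol{\nu}}(\mathbf{X})[n]\,\mathbf{1}\{\boldsymbol{\nu}\in\llbracket\boldsymbol{\mu}\rrbracket\}$, producing a conditional bound proportional to $\|\mathbf{a}\|_2^{8}\,C_{\ell-1}[n]^{-4}$ on the centered preactivation. Expanding $\|\mathbf{a}\|_2^{8}$ as $\big(\sum_{j,\boldsymbol{\nu}} (z^{(\ell-1)}_{j,\boldsymbol{\nu}})^{2}\big)^{4}$, i.e.\ as a sum of at most $(k^{2} C_{\ell-1}[n])^{4}$ products of eight activation entries, and bounding each such expectation by a constant via Lemma~\ref{lemma:boundedness_4th_moments} combined with the inductive hypothesis, the $C_{\ell-1}[n]^{-4}$ factor exactly cancels the channel combinatorics, leaving a bound depending only on the fixed filter size $k$, the weight and bias variances, and the envelope constants.

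The principal new ingredient relative to Proposition~\ref{proposition:uniform_integrability} is bookkeeping of the patch multi-indices: each convolution sum runs over both the channel index $j$ (scaling with $n$) and the fixed patch $\llbracket\boldsymbol{\mu}\rrbracket$ (of cardinality at most $k^{2}$). The latter contributes a combinatorial prefactor absorbed as a harmless constant, while zero-padding at the boundary only suppresses moments. Exchangeability across channels (condition (i)) delivers uniformity in the channel index $i$. The only mildly subtle point, which I anticipate as the main technical step, is that condition (iii) is applied \emph{conditionally} on $\mathbf{z}^{(\ell-1)}[n]$; this is legitimate because $\mathbf{U}^{(\ell)}$ is independent of all earlier filters and biases, so under the conditioning the patch-restricted vector $\mathbf{a}$ is deterministic and condition (iii) applies verbatim.
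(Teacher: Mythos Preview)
Your proposal is correct and follows essentially the same route as the paper: reduction to a uniform eighth-moment bound on $h^{(\ell)}_{i,\boldsymbol{\mu}}(\mathbf{X})[n]$ via $L^{2}$-boundedness and Lemma~\ref{lemma:boundedness_4th_moments}, then induction using condition~(iii) of Definition~\ref{pseudoiid_cnn} applied conditionally on $\mathbf{z}^{(\ell-1)}[n]$, with the $(k^{2}C_{\ell-1}[n])^{4}$ combinatorics canceling the $C_{\ell-1}[n]^{-4}$ factor. The only cosmetic difference is that the paper's base case also invokes condition~(iii) (which holds trivially for the \iid\ Gaussian $\mathbf{U}^{(1)}$), whereas you argue directly from the Gaussian structure; both are fine.
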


\begin{proof}
As previously, this proposition holds some novelty of this paper, extending already known proofs in the standard Gaussian \iid\ setting to the \pseudoiid\ regime in CNNs. Note that this directly implies the universality of the Gaussian Process behaviour for CNNs in the \iid\ regime, which has been established so far only by \cite{yang2021_tp1} to the best of our knowledge. Our result goes beyond.  We recall that the data propagation is described by Equation \eqref{eq:cnn}.

Once again, we observe it is sufficient to show such that the moment of order $8$ of the feature maps is uniformly bounded. We do this again by induction.

\paragraph{Base case.}
Since $h^{(1)}_{i, \boldsymbol{\mu}}(\mathbf{X})[n] = b^{(1)}_i + \sum_{j=1}^{C_0} \sum_{\boldsymbol{\nu}\in \llbracket \boldsymbol{\mu} \rrbracket} \mathbf{U}^{(1)}_{i,j,\boldsymbol{\nu}} x_{j,\boldsymbol{\nu}}$, and weights and biases are independent, for some $p = 8$ we can write
\begin{align*}
    \mathbb{E}\big|h_{i,\boldsymbol{\mu}}^{(1)}(\mathbf{X})[n]\big|^p & \leq \mathbb{E} \Big| \sum_{j=1}^{C_0} \sum_{\boldsymbol{\nu}\in \llbracket \boldsymbol{\mu} \rrbracket} x_{j,\boldsymbol{\nu}} \mathbf{U}^{(1)}_{i,j,\boldsymbol{\nu}} \Big|^p + \mathbb{E} \big| b_i^{(1)} \big|^p \\
    & = K_p \lVert \mathbf{X}_{\llbracket \boldsymbol{\mu} \rrbracket} \rVert_2^{p} C_0^{-p/2} + \mathbb{E} \big| b_i^{(1)} \big|^p,
\end{align*}
where $\mathbf{X}_{\llbracket \boldsymbol{\mu} \rrbracket}$ is the part of signal around the pixel $\boldsymbol{\mu}$ and we have used the condition (iii) of Definition \ref{pseudoiid_cnn} in the last line. Observe that the RHS is independent of $i$ and $n$.

\paragraph{Inductive step.} Let us assume that for any $\{\mathbf{X}_t\}_{t=1}^4$, $\boldsymbol{\mu} \in \mathcal{I}$ and $i\in \mathbb{N}$, there exists $\epsilon_0 \in (0,1)$ such that the eighth moment of the preactivations from the previous layer $\big\lvert h^{(\ell-1)}_{i, \boldsymbol{\mu}}(\mathbf{X}_t)[n]\big\rvert$ is bounded by a constant independent from $j\in \mathbb{N}, \boldsymbol{\nu} \in \mathcal{I}$, and $n$, for all $\mathbf{X}_t \in \mathcal{X}$

Mirroring our proof in the fully connected case, we will show that this propagates to the next layer, 

\begin{align*}
    \mathbb{E} \big\lvert h^{(\ell)}_{i,\boldsymbol{\mu}}(\mathbf{X}_t)[n]\big\rvert^{8} < \infty.
\end{align*}

From the third condition of the \pseudoiid\ regime, we can compute the expectation conditioned on the vector of activations $\phi\big(h^{(\ell-1)}_{\odot,\boldsymbol{\odot}}(\mathbf{X}_t)\big)[n]$,

\begin{align*}
\begin{split}
    \mathbb{E} \Big[ \Big\lvert \sum_{j=1}^{C_{\ell -1}[n]} \sum_{\boldsymbol{\nu} \in \llbracket \boldsymbol{\mu} \rrbracket} \mathbf{U}_{i,j, \boldsymbol{\nu}}^{(\ell)} \phi\big(h^{(\ell-1)}_{j, \boldsymbol{\nu}}(\mathbf{X}_t)\big)[n]\; \Big\rvert^{8}\Big] & = \mathbb{E} \Big[ \mathbb{E}\Big\lvert \sum_{j=1}^{C_{\ell -1}[n]} \sum_{\boldsymbol{\nu} \in \llbracket \boldsymbol{\mu} \rrbracket} \mathbf{U}_{i,j, \boldsymbol{\nu}}^{(\ell)} \phi\big(h^{(\ell-1)}_{j, \boldsymbol{\nu}}(\mathbf{X}_t)\big)[n]\; \Big\rvert^{8} \\ 
    & \hspace{3cm}\mid \phi\big(h^{(\ell-1)}_{\odot,\boldsymbol{\odot}}(\mathbf{X}_t)\big)[n]\Big]
    \end{split} \\
    & = K C_{\ell - 1}[n]^{-4} \mathbb{E} \lVert \phi\big(h^{(\ell-1)}_{\odot,\boldsymbol{\odot}}(\mathbf{X}_t)\big)[n] \rrbracket \rVert_2^{8},
\end{align*}

where the norm of the activations can be computed using the linear envelope property,

\begin{align*}
\mathbb{E} \lVert \phi\big(h^{(\ell-1)}_{\odot,\boldsymbol{\odot}}(\mathbf{X}_t)\big)[n] \rrbracket \rVert_2^{8}
        & \leq \mathbb{E} \Big[\sum_{j=1}^{C_{\ell -1}[n]} \sum_{\boldsymbol{\nu} \in \llbracket \boldsymbol{\mu} \rrbracket} \big(\phi\big(h^{(\ell-1)}_{j,\boldsymbol{\nu}}(\mathbf{X}_t)\big)[n]\big)^2 \Big]^4 \\
        & \leq \mathbb{E} \Big[\sum_{j=1}^{C_{\ell -1}[n]} \sum_{\boldsymbol{\nu} \in \llbracket \boldsymbol{\mu} \rrbracket} c^2 + M^2 h^{(\ell-1)}_{j,\boldsymbol{\nu}}(\mathbf{X}_t)^2[n] + 2cM h^{(\ell-1)}_{j,\boldsymbol{\nu}}(\mathbf{X}_t)\Big]^4.
\end{align*}

This last quantity turns out to be the weighted sum of $(k \times C_{\ell -1}[n])^4 $ terms (recall $k$ being the filter size, which is finite) of the form
\begin{align*}
    \mathbb{E} \Big[ h^{(\ell-1)}_{j_1,\boldsymbol{\nu}}(\mathbf{X}_t)^{p_1}[n] h^{(\ell-1)}_{j_2,\boldsymbol{\nu}}(\mathbf{X}_t)^{p_2}[n] h^{(\ell-1)}_{j_3,\boldsymbol{\nu}}(\mathbf{X}_t)^{p_3}[n] h^{(\ell-1)}_{j_4,\boldsymbol{\nu}}(\mathbf{X}_t)^{p_4}[n] \big],
\end{align*}

that can be bounded using Lemma \ref{lemma:boundedness_4th_moments} combined with our inductive hypothesis. Observe how the factors $C_{\ell -1}[n]^4$ cancel out and lead to a bound independent from $n$. 

Using the recursion formulae for the data propagation in the CNN architecture recalled in \eqref{eq:cnn} and the convexity of the map $x\mapsto x^{8}$ on $\mathbb{R}^+$, we have

\begin{align*}
    \mathbb{E} \big\lvert h^{(\ell)}_{i,\boldsymbol{\mu}}(\mathbf{X}_t)[n]\big\rvert^{8} \leq 2^{8-1} \mathbb{E}\Big[ \big\lvert b_i^{(\ell)} \big\rvert^8 + \Big\lvert \sum_{j=1}^{C_{\ell -1}[n]} \sum_{\boldsymbol{\nu} \in \llbracket \boldsymbol{\mu} \rrbracket} \mathbf{U}_{i,j, \boldsymbol{\nu}}^{(\ell)} \phi\big(h^{(\ell-1)}_{j,\boldsymbol{\nu}}(\mathbf{X}_t)\big)[n]\Big\rvert^8 \Big].
\end{align*}

The first expectation is finite and uniformly bounded as the biases are \iid\ Gaussians in the \pseudoiid\ regime and we have just shown above the boundedness of the second expectation.

Therefore,

\begin{align}\label{eq:uniform_bound_CNN}
    \mathbb{E} \big\lvert \phi\big(h^{(\ell-1)}_{j,\boldsymbol{\nu}}(\mathbf{X}_t)\big)[n]\big\rvert^{8} < \infty,
\end{align}

and taking the supremum over the finite input data set does not change the uniformly bounded property, which was needed to be shown.

\end{proof}

\section{\iid\ and orthogonal weights are \pseudoiid} \label{old-examples}

\paragraph{\iid\ weights.} If $A=(A_{ij}) \in \mathbb{R}^{m \times n}$ has \iid\ entries $A_{ij} \overset{\mathrm{iid}}{\sim} \mathcal{D}$, then it is automatically row- and column-exchangeable, and the entries are uncorrelated. Therefore, as long as the distribution of the weights $\mathcal{D}$ satisfies the moment conditions of Definition \ref{pseudoiid}, the network is in the \pseudoiid\ regime. A sufficient condition to satisfy (iii) is for $A_{ij}$ to be \textit{sub-Gaussian} with parameter $O(n^{-1/2})$. Then, given the independence of entries, the random variable $\big| \sum_{j=1}^{n} a_j A_{ij} \big|$ would be sub-Gaussian with parameter $\lVert \mathbf{a} \rVert_2 n^{-1/2}$, and its $p^{\mathrm{th}}$ moment is known to be $O(\lVert \mathbf{a} \rVert_2^p n^{-p/2})$; see \cite{vershynin2018high}. Appropriately scaled Gaussian and uniform \iid\ weights, for example, meet the sub-Gaussianity criterion and therefore fall within the \pseudoiid\ class. Condition (iv) is trivial in this case.

\paragraph{Orthogonal weights.} Let $A =(A_{ij}) \in \mathbb{R}^{n \times n}$ be drawn from the uniform (Haar) measure on the group of orthogonal matrices $\mathbb{O}(n)$. While the entries are not independent, they are uncorrelated, and the rows and columns are exchangeable. To verify condition (iii), we may employ the concentration of the Lipschitz functions on the sphere, since one individual row of $A$, let us say the $i^{\mathrm{th}}$ one $(A_{i,1}, \cdots, A_{i,n})$, is drawn uniformly from $\mathbb{S}^{n-1}$. Let $f(A_{i,1}, \cdots, A_{i,n}) \coloneqq \big| \sum_{j=1}^{n} a_j A_{ij} \big|$, then $f$ is Lipschitz with constant $\lVert \mathbf{a} \rVert_2$. By Theorem 5.1.4 of \cite{vershynin2018high}, the random variable $f(A_{i,1}, \cdots, A_{i,n}) = \big| \sum_{j=1}^{n} a_j A_{ij} \big|$ is sub-Gaussian with parameter $\lVert f \rVert_{\mathrm{Lip}} n^{-1/2} = \lVert \mathbf{a} \rVert_{2} n^{-1/2}$, which also implies $\mathbb{E} \big| \sum_{j=1}^{n} a_j A_{ij} \big| 
^p = O(\lVert \mathbf{a} \rVert_2^p n^{-p/2})$. Finally, the exact expressions of the $p^{\mathrm{th}}$ moments are known for orthogonal matrices (see \cite{collins2021weingarten} for an introduction to the calculus of Weingarten functions) and satisfy condition (iv). Note how we simplify the analysis here by considering square weight matrices from the second layer and onward.

\section{Further numerical simulations validating Theorem \ref{thm:main_FCN}}\label{sec:further_plots}

Fig. \ref{fig:QQ-plots} shows $Q-Q$ plots for the histograms in Fig.\ \ref{fig:histograms_gaussianity} as compared to their infinite with Gaussian limit.  These $Q-Q$ plots show how the \pseudoiid\ networks approach the Gaussian Process at somewhat different rates, with \iid\ uniform approaching the fastest.

\begin{figure}
    \centering
    \includegraphics[trim={4.8cm 16cm 5cm 2.9cm}, clip]{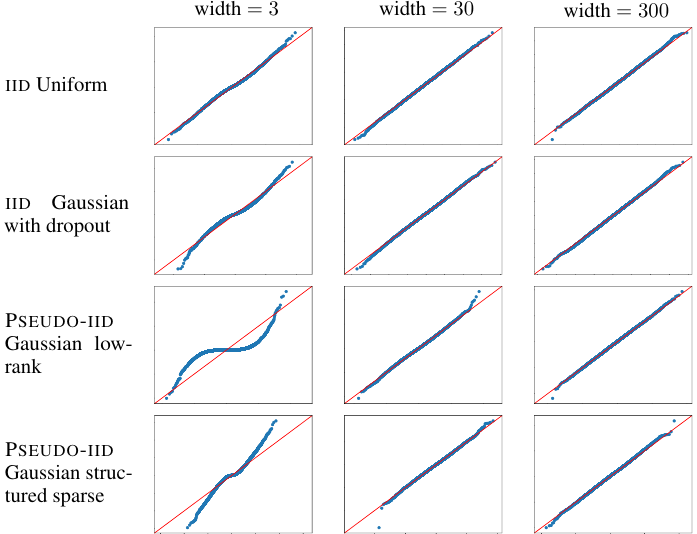}
    \caption{Q-Q plots of the preactivations values in Fig.\ \ref{fig:histograms_gaussianity} as an alternative way of showing the convergence of the preactivation of a fully connected network to a Gaussian as fully characterized in Theorem  \ref{thm:main_FCN}. The settings of the experiment are the same as those in Fig.\ \ref{fig:histograms_gaussianity}.}
    \label{fig:QQ-plots}
\end{figure}

Fig.\ \ref{fig:independence_plots} explores the growing independence of entries in $h^{(\ell)}_i(x)[n]$ for different $i$ by showing their joint distributions for two distinct choices of $i$; moreover, convergence to the limiting isotropic Gaussian distribution $(h^{(\ell)}_i(x), h^{(\ell)}_j(x))[*])$, $i\neq j$, is overlayed in the same plots.  Uniform \iid\ converges the quickest, while \pseudoiid\ Gaussian low-rank and structured sparse converge towards an isotropic distribution somewhat slower, albeit already showing good agreement at $n=30$.  The horizontal and vertical axes in each subplot of Fig.\ \ref{fig:independence_plots} are $h_i^{(5)}(x)$ for $i=1$ and $2$ respectively.

\begin{figure}
    \centering
    \begin{tabular}{ m{6em} m{0.2\textwidth} m{0.2\textwidth} m{0.2\textwidth}}
         & \centering width $= 3$ & \centering width $= 30$ & \parbox{0.2\textwidth}{\centering
  width $ = 300$} \\ 
        \iid\ Uniform with dropout&
            \includegraphics[width=0.2\textwidth,trim={71 47 55 30},clip]{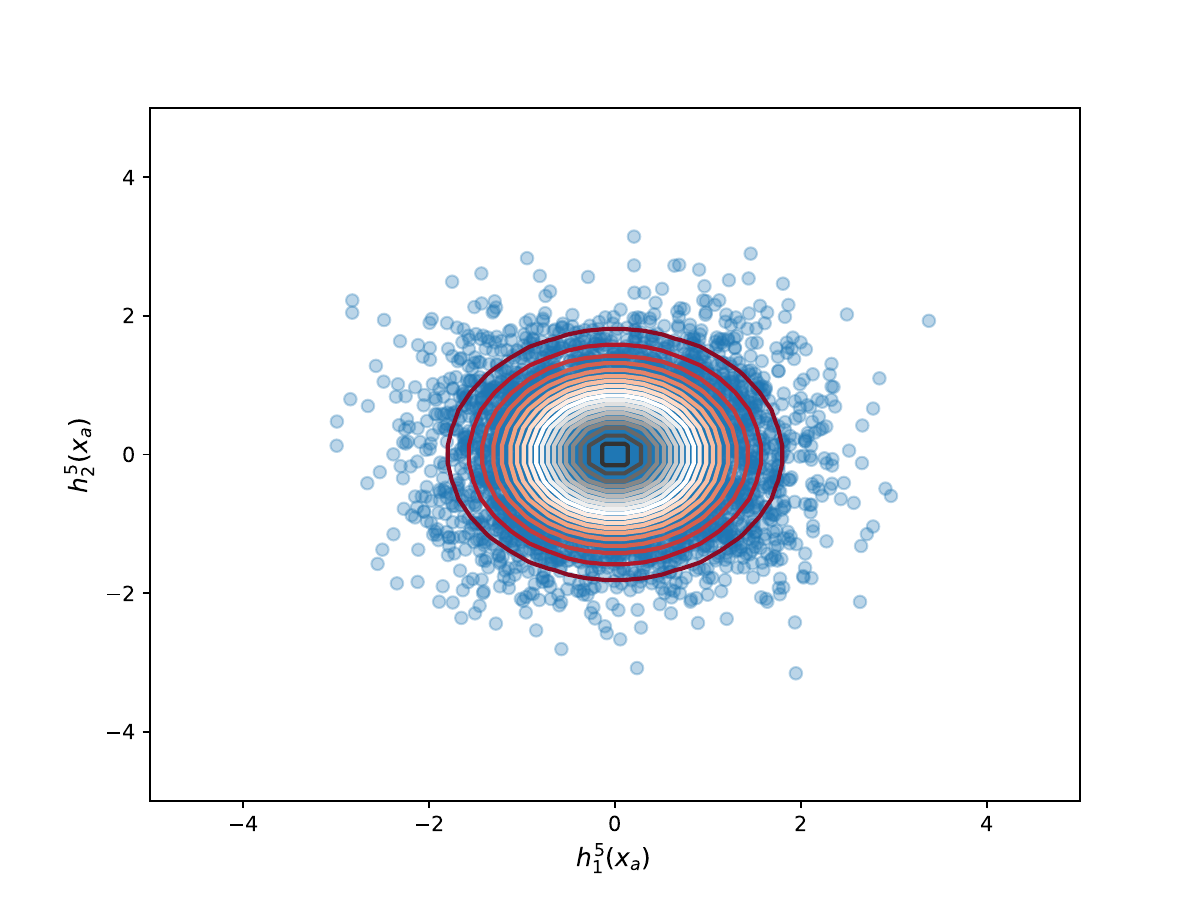} &
            \includegraphics[width=0.2\textwidth,trim={71 47 55 30},clip]{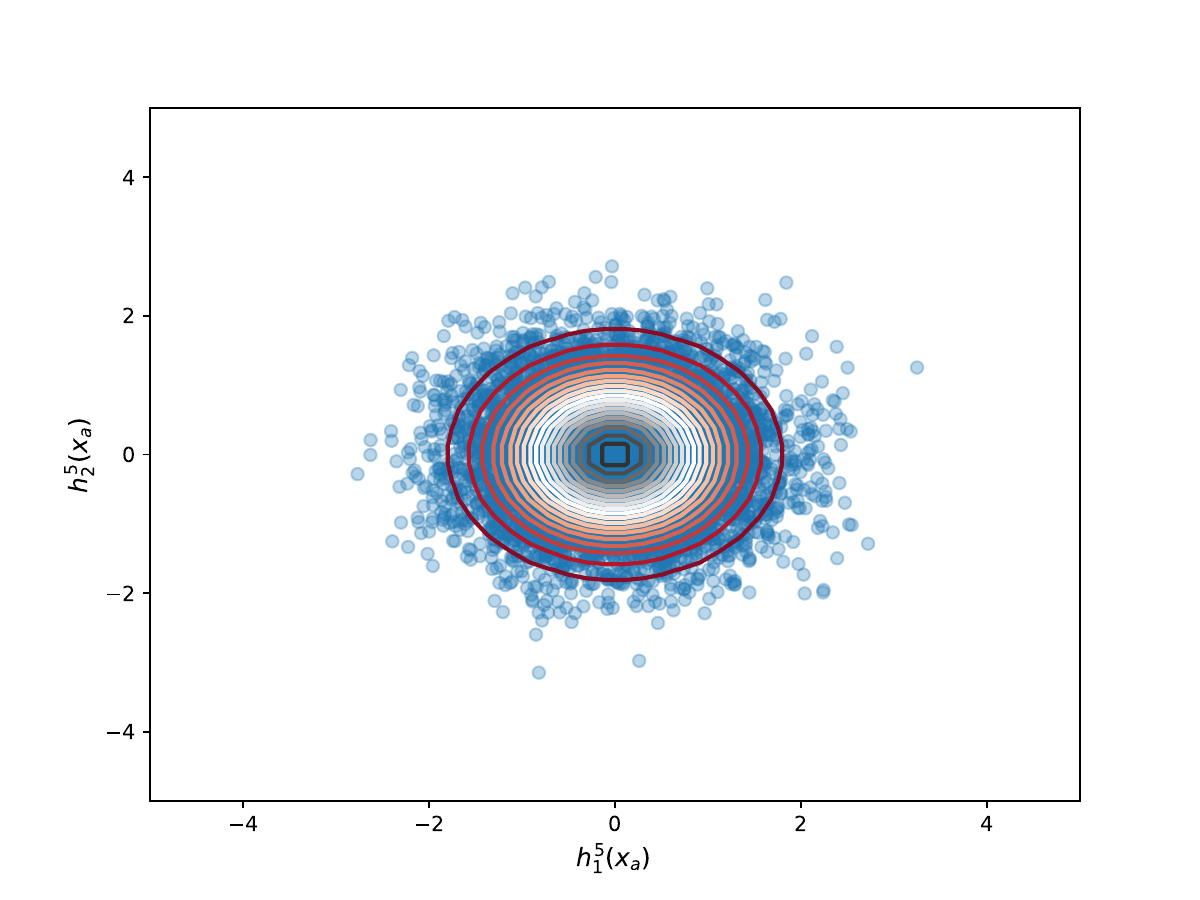} &
            \includegraphics[width=0.2\textwidth,trim={71 47 55 30},clip]{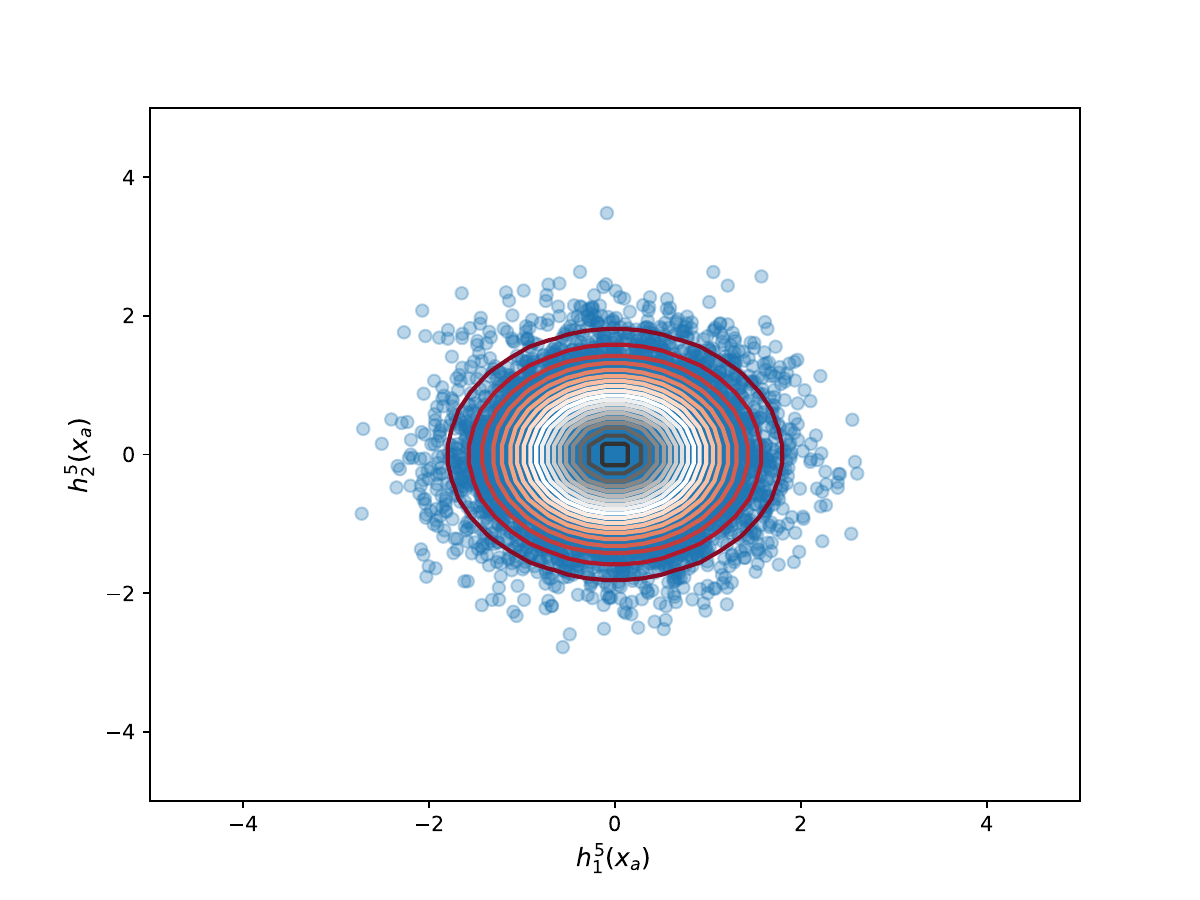}\\
        Non \pseudoiid\ Cauchy & 
            \includegraphics[width=0.2\textwidth,trim={71 47 55 30},clip]{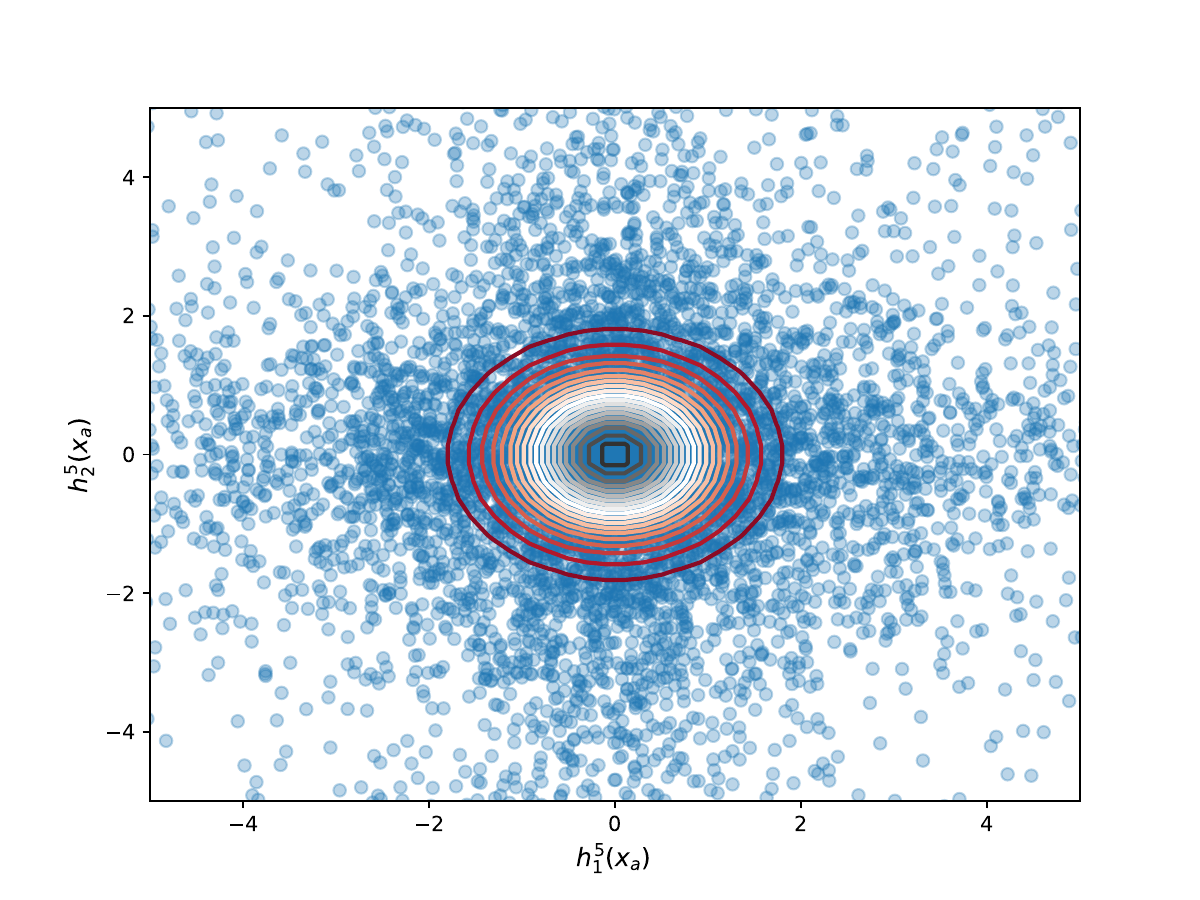} & 
            \includegraphics[width=0.2\textwidth,trim={71 47 55 30},clip]{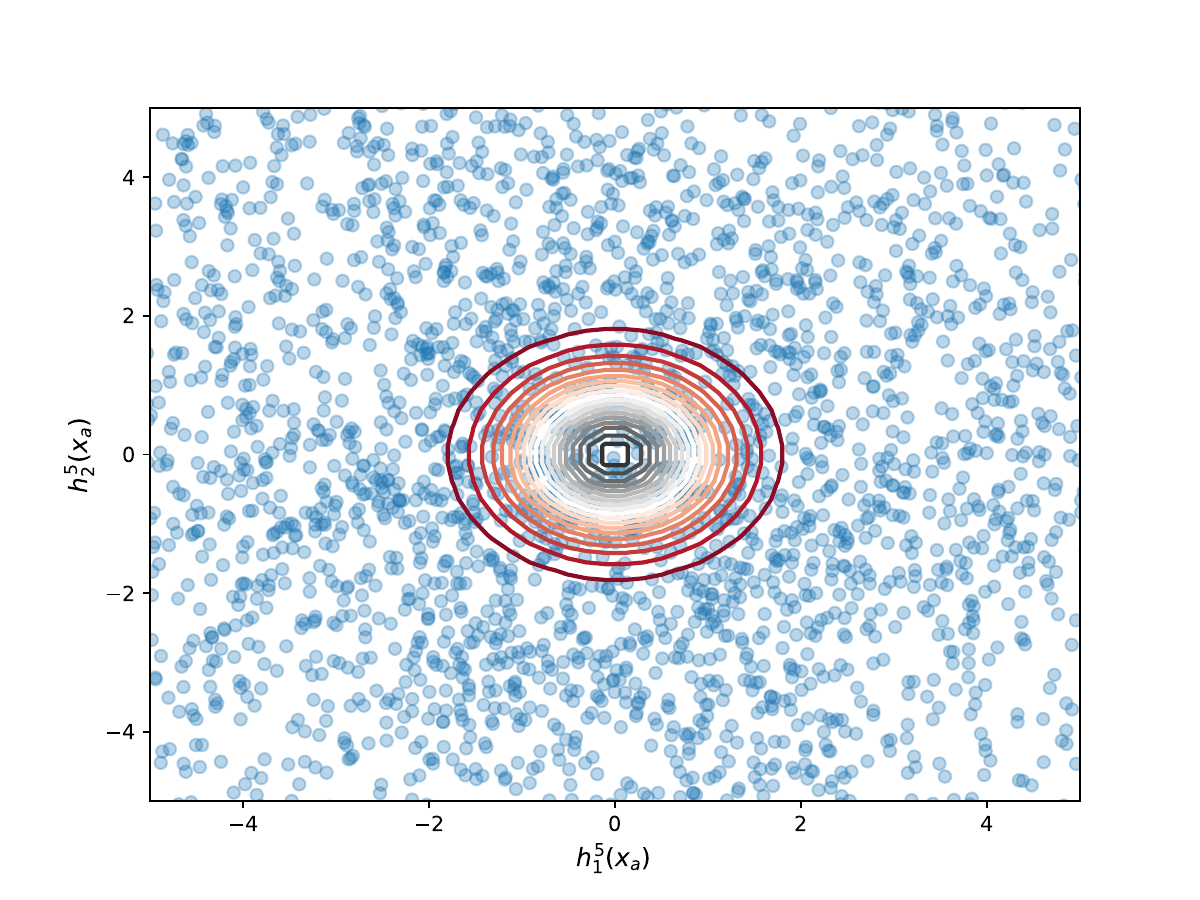} &
            \includegraphics[width=0.2\textwidth,trim={71 47 55 30},clip]{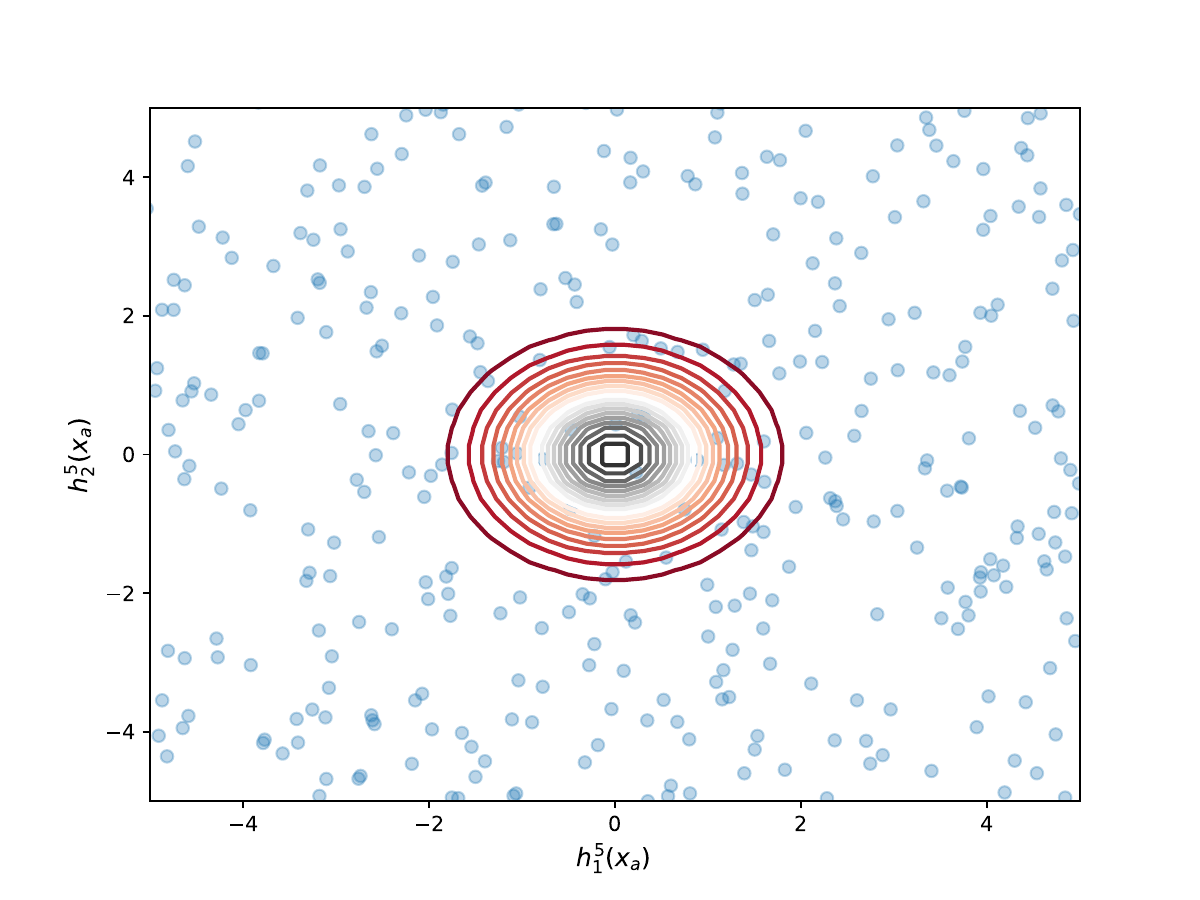} \\
        \pseudoiid\ Gaussian low-rank & 
            \includegraphics[width=0.2\textwidth,trim={71 47 55 30},clip]{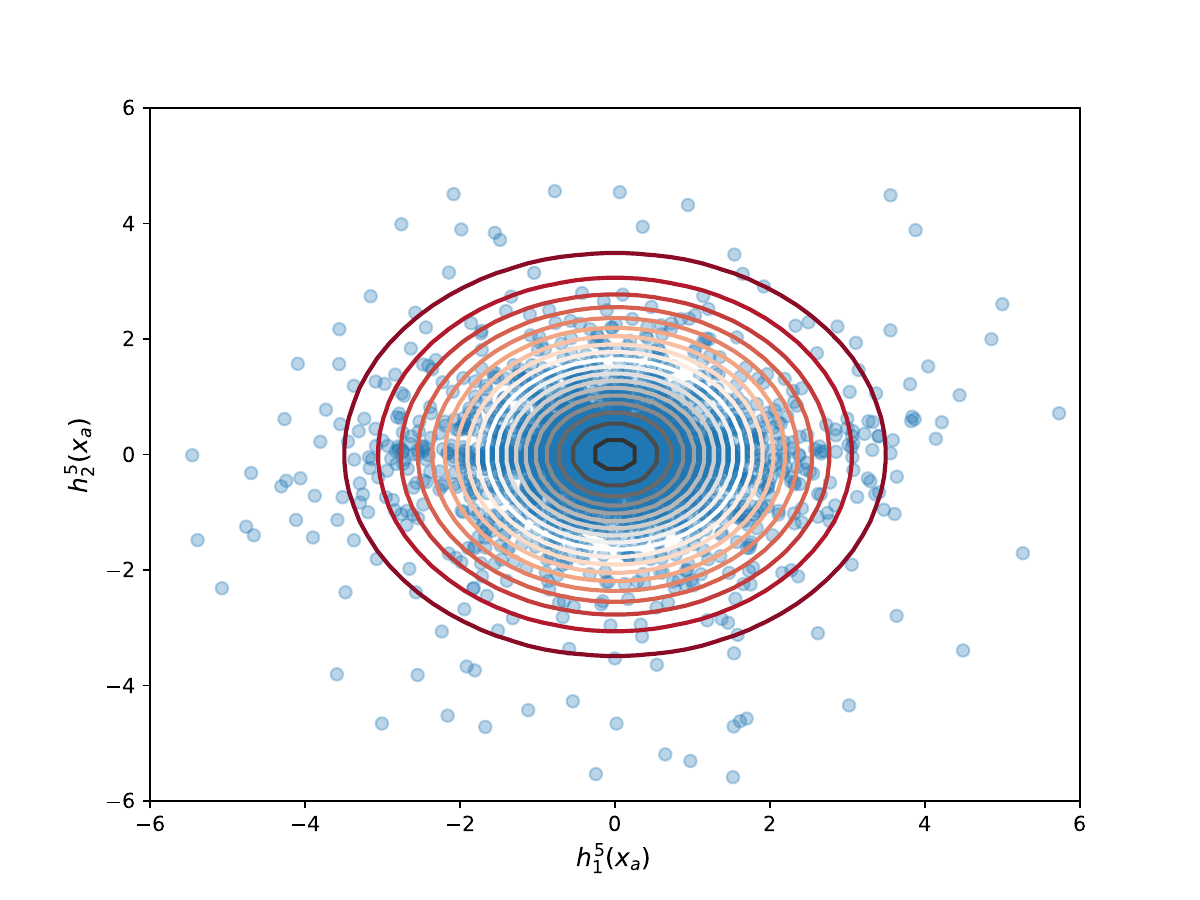} & 
            \includegraphics[width=0.2\textwidth,trim={71 47 55 30},clip]{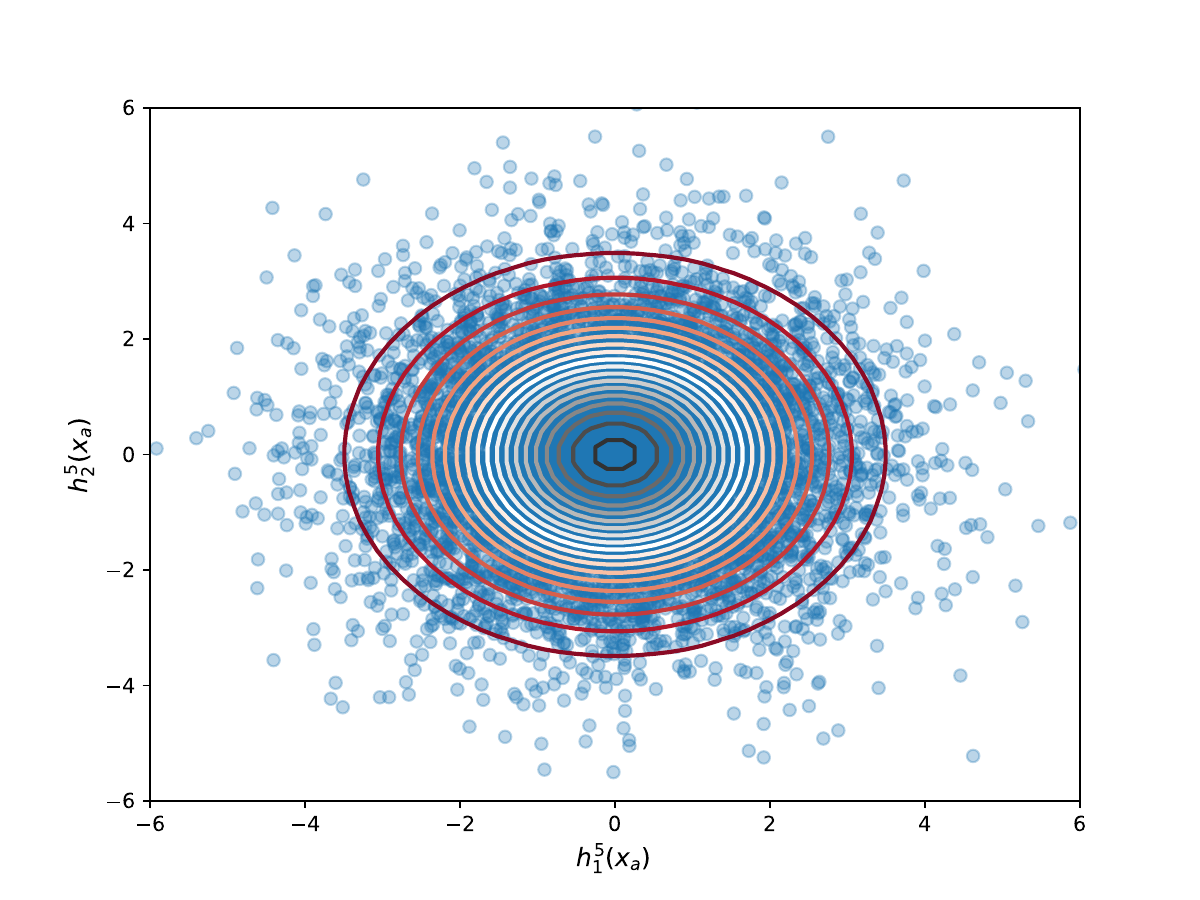} &
            \includegraphics[width=0.2\textwidth,trim={71 47 55 30},clip]{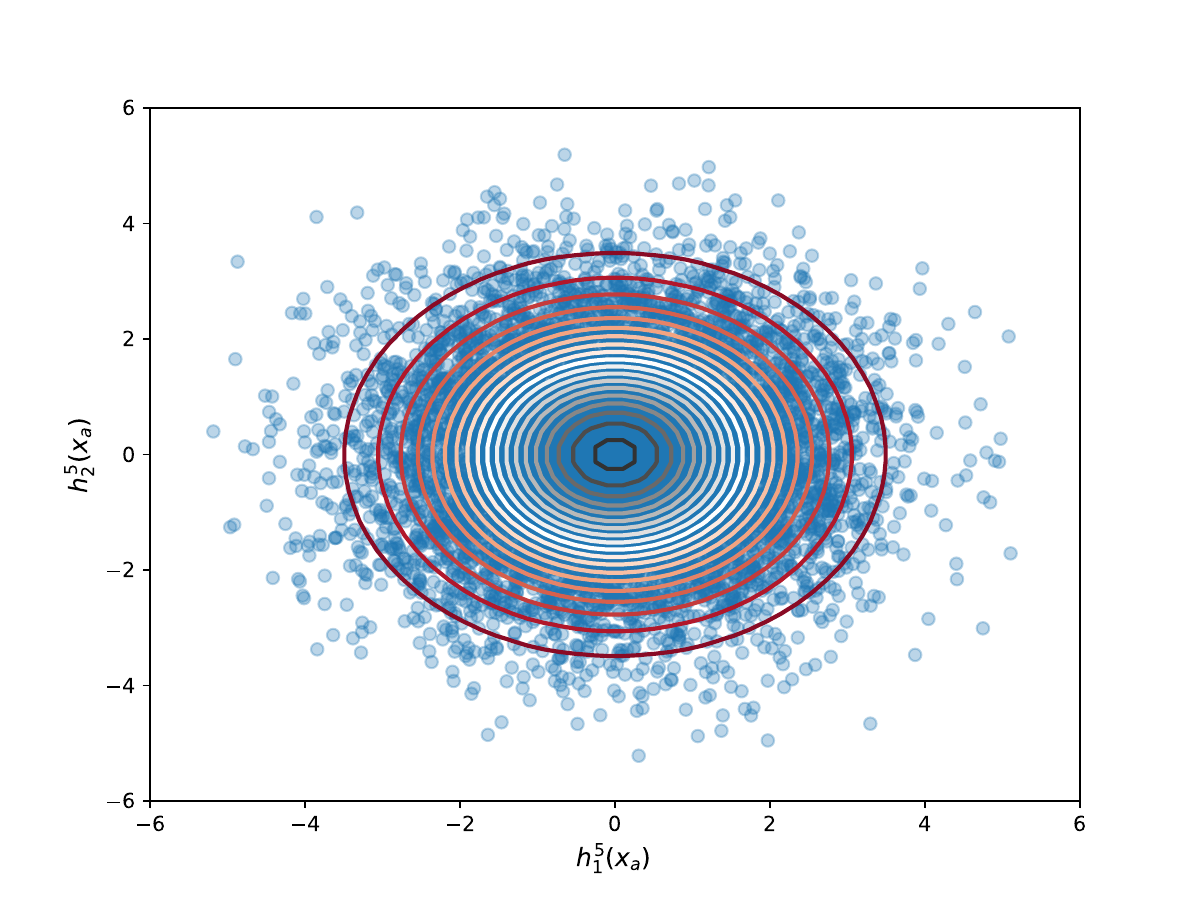} \\
        \pseudoiid\ Gaussian structured sparse & 
            \includegraphics[width=0.2\textwidth,trim={71 47 55 30},clip]{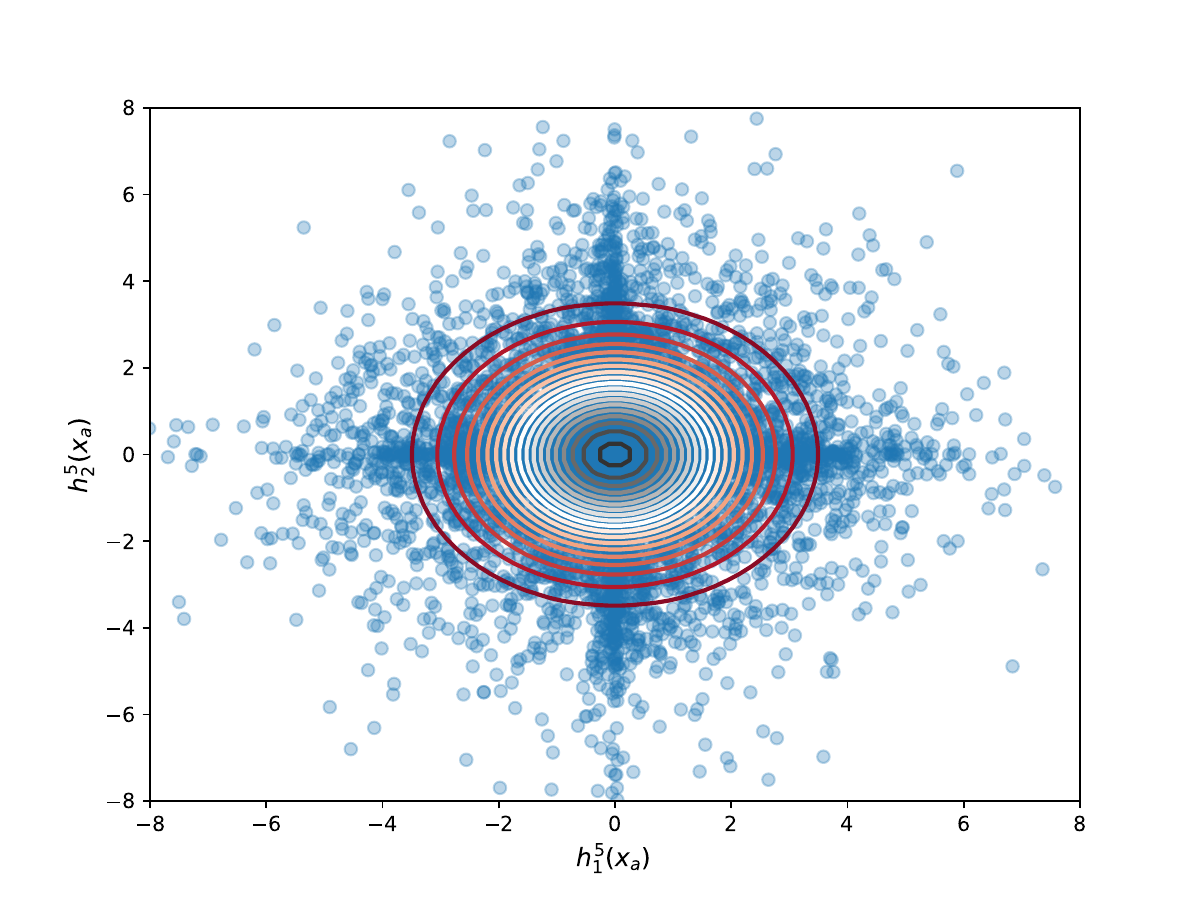} & 
            \includegraphics[width=0.2\textwidth,trim={71 47 55 30},clip]{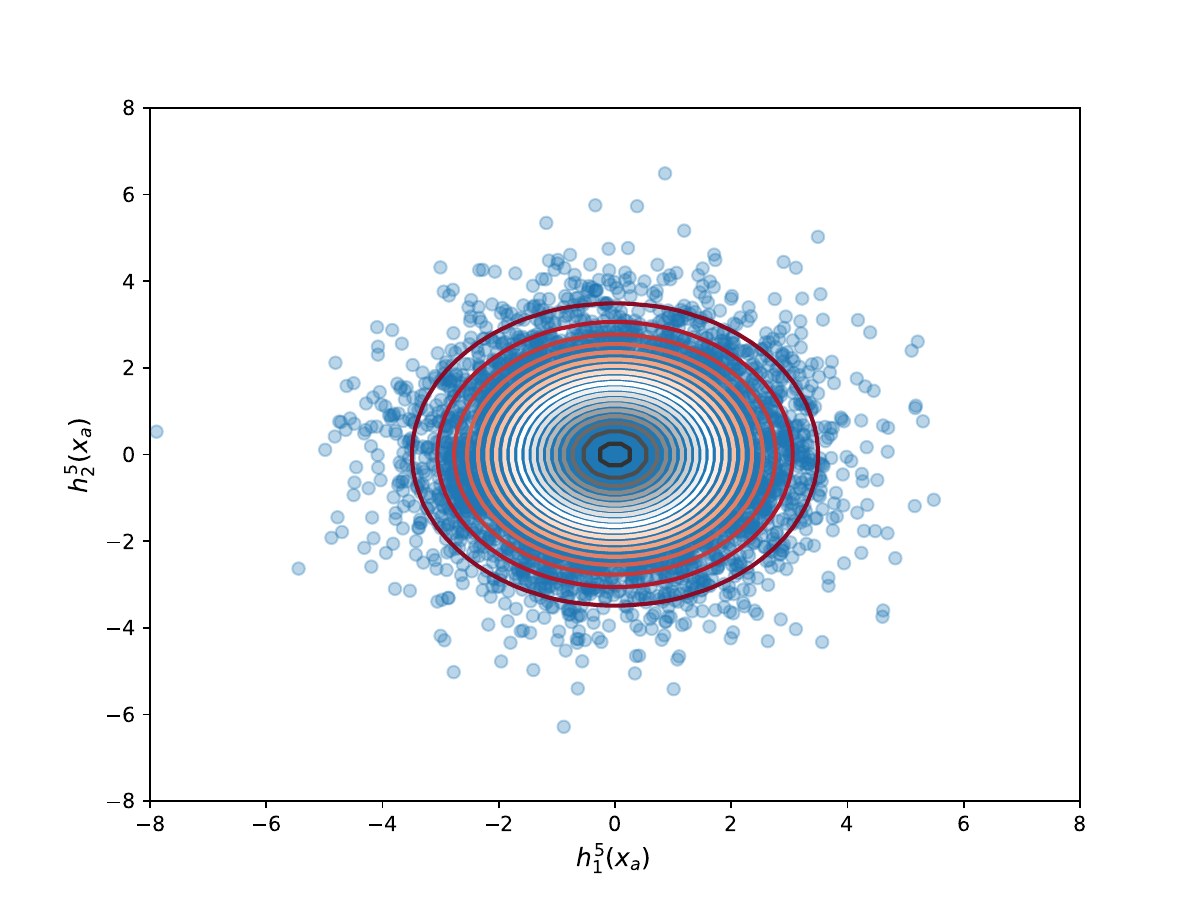} &
            \includegraphics[width=0.2\textwidth,trim={71 47 55 30},clip]{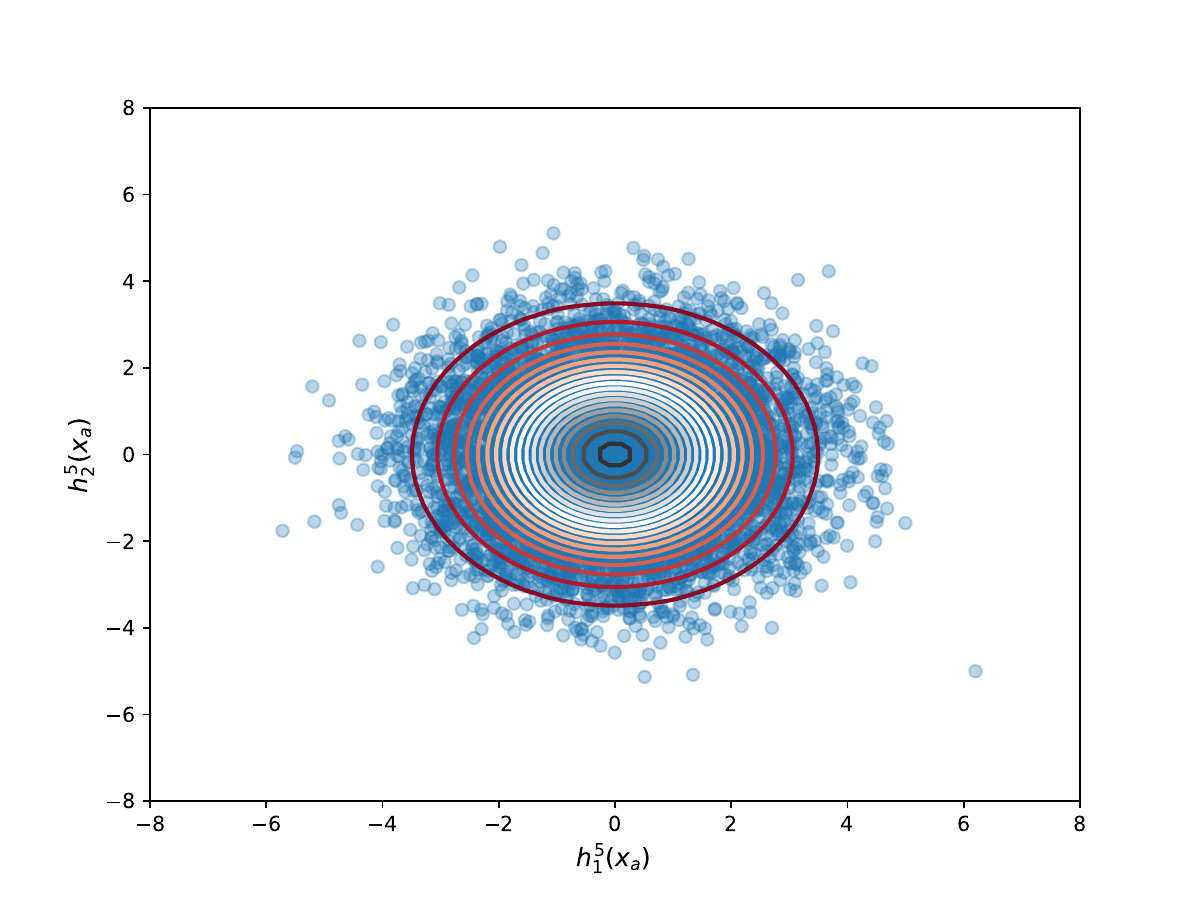} 
    \end{tabular}
    \vspace{0.5em} 
    \caption{For fully connected networks in the \pseudoiid\ regime, it is shown in Theorem  \ref{thm:main_FCN} that in the large width limit, at any layer, two neurons fed with the same input data become independent. We compare the joint distribution of the preactivations given in the first and second neurons at the fifth layer with an isotropic Gaussian probability density function. Initializing the weight matrices with \iid\ Cauchy realisations falls outside of our defined framework, resulting in a poor match. The inputs were sampled from $\mathbb{S}^8$ and $10000$ experiments conducted on a $7$-layer network.}
    \label{fig:independence_plots}
\end{figure}

\section{Numerical simulations validating Theorem \ref{thm:CNN} for orthogonal CNN filters}\label{app:plots_cnn_orthogonal}

\begin{figure}[t]
    \centering
    \begin{tabular}{ m{6em} m{0.15\textwidth} m{0.15\textwidth} m{0.15\textwidth}}
         & \centering $3$ channels& \centering $30$ channels & \parbox{0.15\textwidth}{\centering
   $300$ channels} \\ 
        \pseudoiid\ Orthogonal filters &
            \includegraphics[width=0.15\textwidth,trim={71 47 55 30},clip]{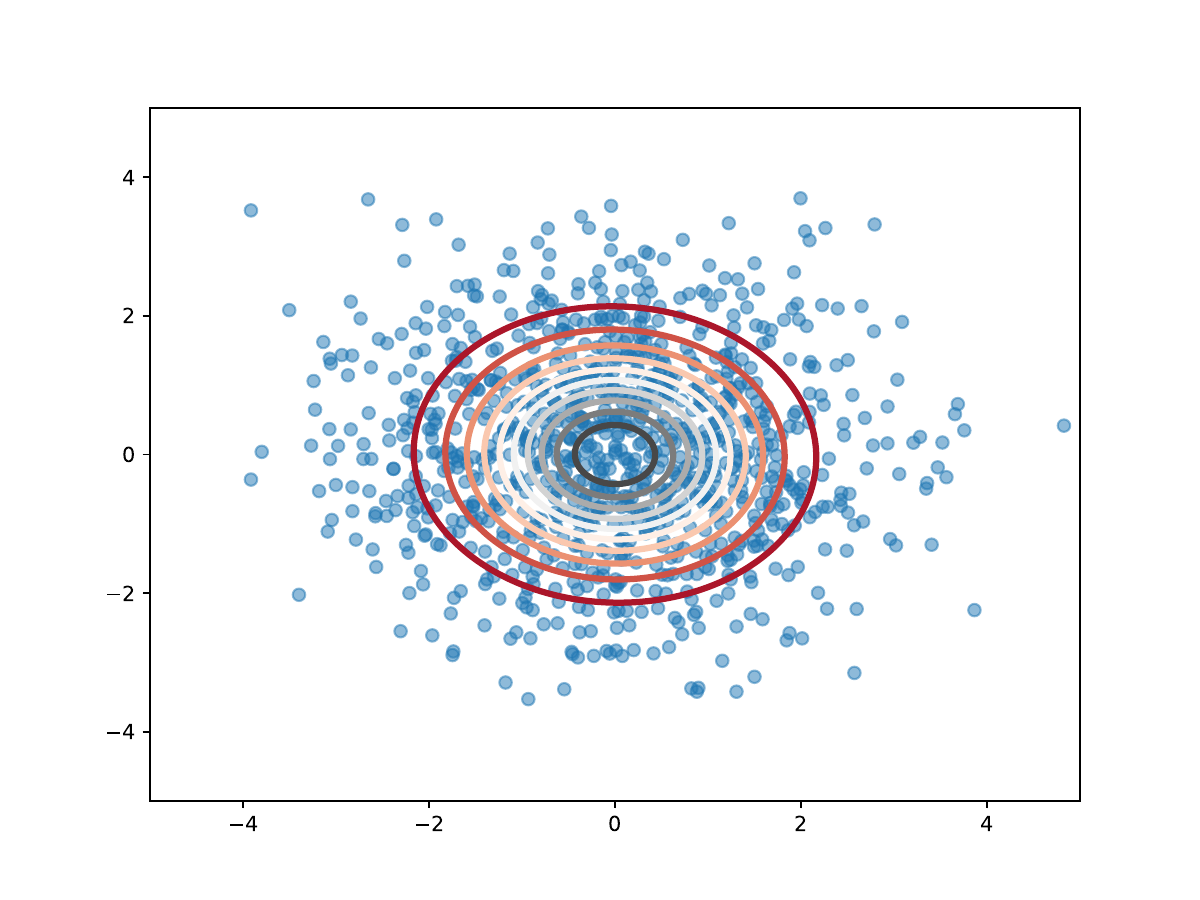} &
            \includegraphics[width=0.15\textwidth,trim={71 47 55 30},clip]{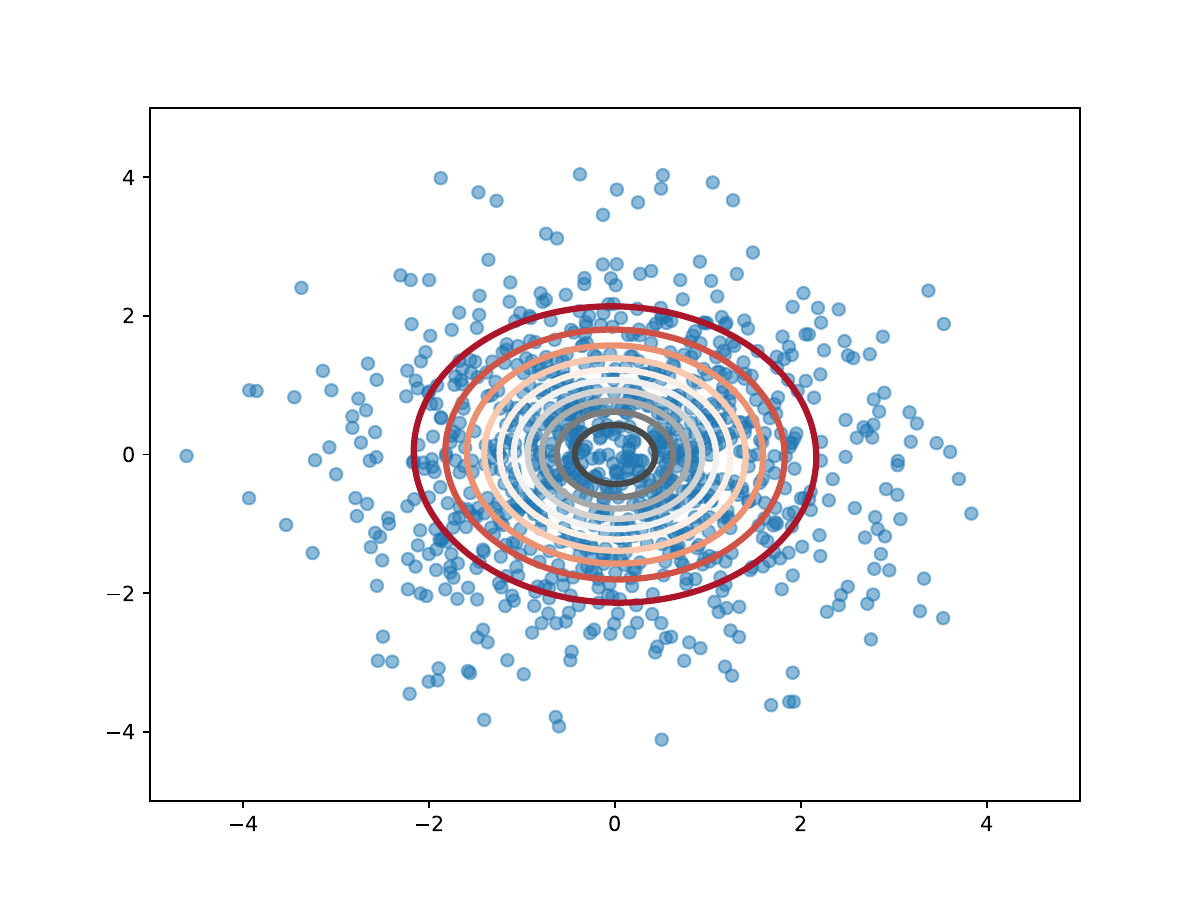} &
            \includegraphics[width=0.15\textwidth,trim={71 47 55 30},clip]{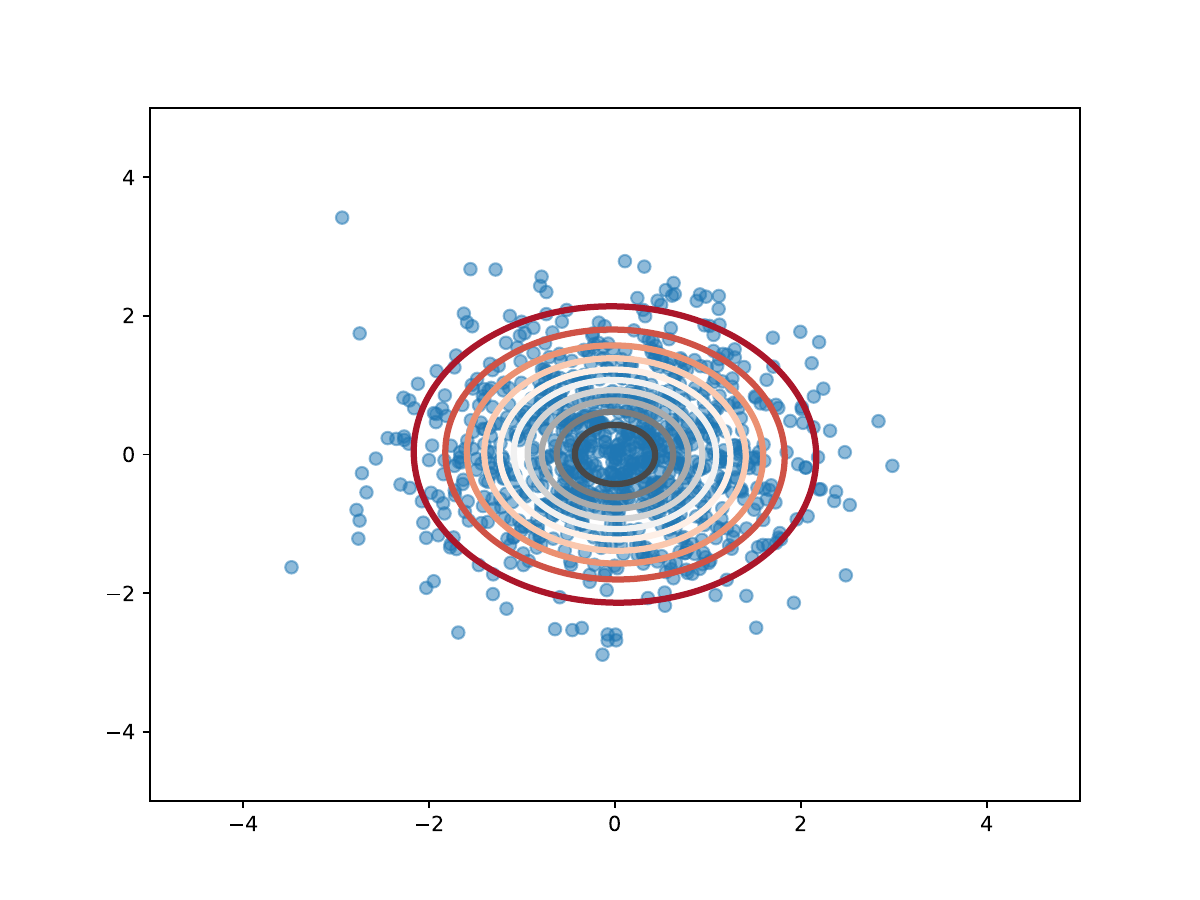}
    \end{tabular}
    \caption{The empirical joint distribution of the preactivations generated by one same input flowing through a $7$-layer CNN with $3 \times 3$ kernels, at two different channel indexes. An isotropic 2D gaussian is included as level curves, corroborating with the predicted asymptotic independence between neurons as given in Theorem \ref{thm:CNN}. The input data $\boldsymbol{X}_a$ are drawn from  the MNIST dataset and $1000$ experiments were conducted. The horizontal and vertical axes in each subplot are respectively $h_{1, \boldsymbol{\mu}}^{(4)}(\boldsymbol{X}_a)$ and $h_{2, \boldsymbol{\mu}}^{(4)}(\boldsymbol{X}_a)$, where $\boldsymbol{\mu}$ is the pixel located at (5,5). }
    \label{fig:cnn_joint_distributiion_independence}
\end{figure}

\begin{figure}[t]
    \centering
    \begin{tabular}{ m{6em} m{0.15\textwidth} m{0.15\textwidth} m{0.15\textwidth}}
         & \centering $3$ channels & \centering $30$ channels & \parbox{0.15\textwidth}{\centering
   $300$ channels} \\ 
        \pseudoiid\ Orthogonal filters &
            \includegraphics[width=0.15\textwidth,trim={71 47 55 30},clip]{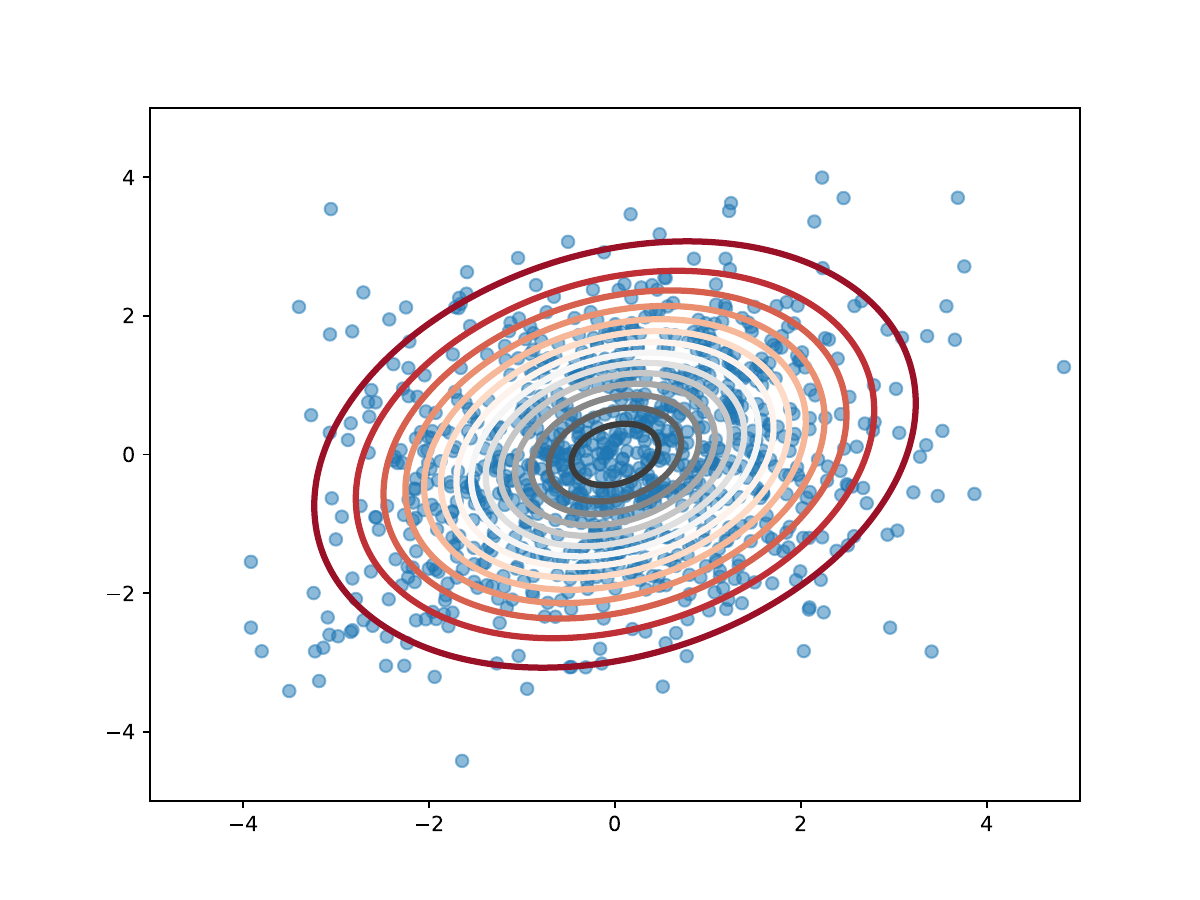} &
            \includegraphics[width=0.15\textwidth,trim={71 47 55 30},clip]{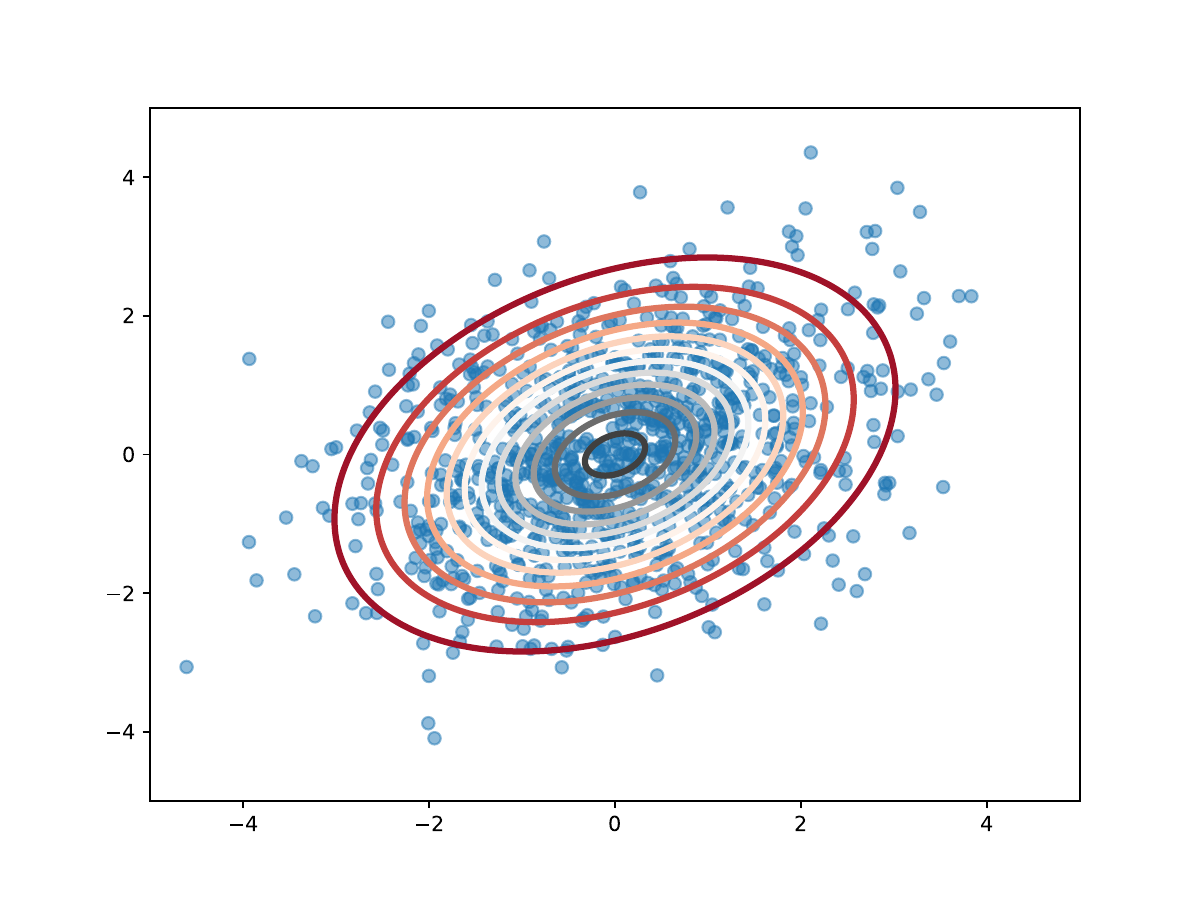} &
            \includegraphics[width=0.15\textwidth,trim={71 47 55 30},clip]{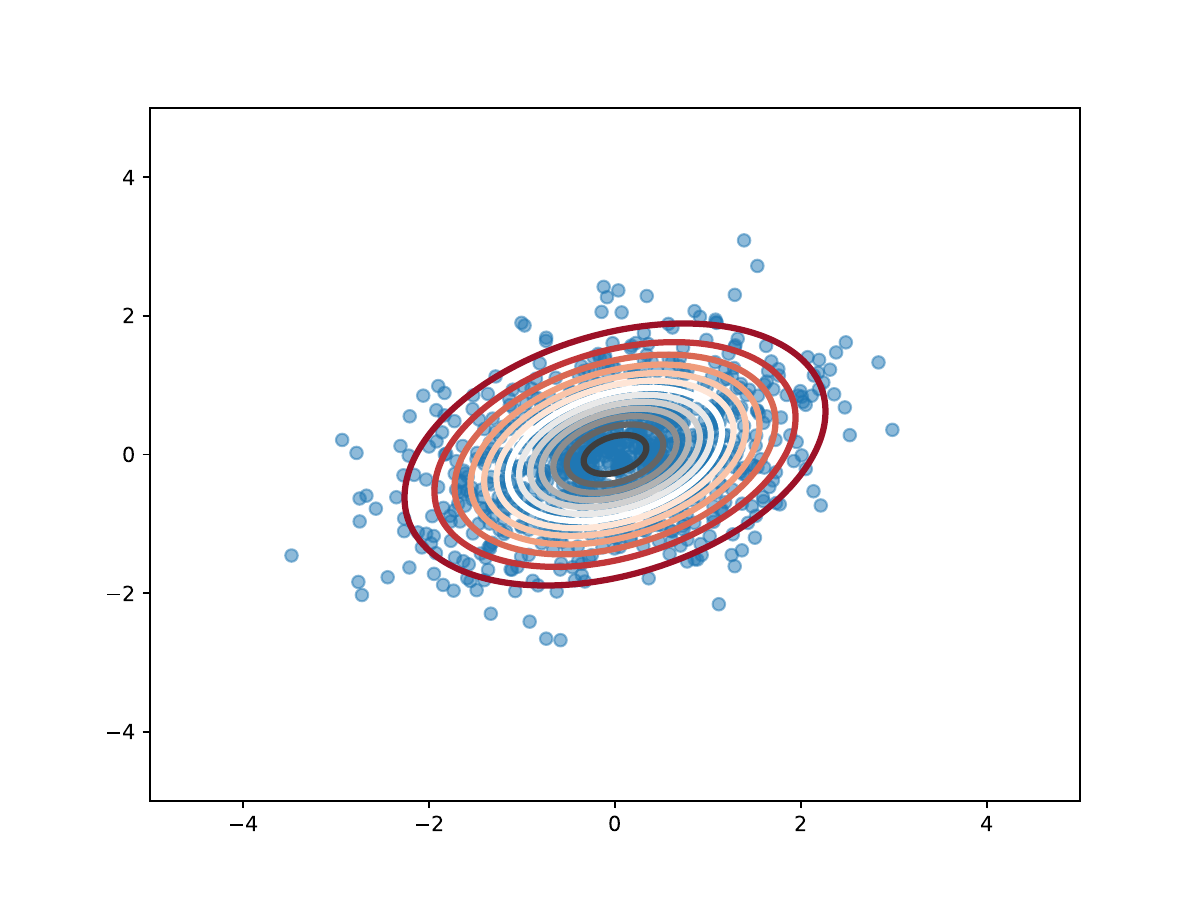}
    \end{tabular}
    \caption{The empirical joint distribution of the preactivations generated by two distinct inputs flowing through a $7$-layer CNN with $3 \times 3$ kernels. The large channel limiting distribution as defined in Theorem \ref{thm:CNN} involves the computation of a kernel covariance at a high cost. We use instead the Expectation-Maximization algorithm for each subplot and include as level curves the best gaussian fit. The input data $\boldsymbol{X}_a, \boldsymbol{X}_b$ are drawn from the MNIST dataset and $1000$ experiments were conducted. The (correlated) gaussian nature of the preactivations is thus empirically verified as the gaussian fit goes better and better. The horizontal and vertical axes in each subplot are respectively $h_{1, \boldsymbol{\mu}}^{(4)}(\boldsymbol{X}_a)$ and $h_{1, \boldsymbol{\mu}}^{(4)}(\boldsymbol{X}_b)$, where $\boldsymbol{\mu}$ is the pixel located at (5,5). }
    \label{fig:cnn_joint_distributiion_dependence}
\end{figure}

Fig.\ \ref{fig:cnn_joint_distributiion_independence} explores the growing independence of entries in $h^{(\ell)}_{i, \boldsymbol{\mu}}(\boldsymbol{X})[n]$ for different $i$ by showing their joint distributions for two distinct choices of $i$; moreover, convergence to the limiting isotropic Gaussian distribution $(h^{(\ell)}_{i, \boldsymbol{\mu}}(\boldsymbol{X})[*], h^{(\ell)}_{j, \boldsymbol{\mu}}(\boldsymbol{X})[*])$, $i\neq j$, is overlayed in the same plots. 

Fig.\ \ref{fig:cnn_joint_distributiion_dependence} explores the asymptotic gaussian nature of entries in $h^{(\ell)}_{i, \boldsymbol{\mu}}(\boldsymbol{X})[n]$ at the same channel index $i$ by showing their joint distributions for two distinct input data. For the same reasons advocated by \cite{garrigaalonso_2019, novak2020bayesian}, the computation of the covariance kernel comes at a high computational cost. To overcome this, we decided to focus on showing the gaussian nature of the preactivations in CNNs initialized with orthogonal filters as we defined them in \ref{examples}. To do so, we used the Expectation-Maximization algorithm to display the level curves of the best gaussian fit to the point clouds. This gaussian turns out to have non zero correlation, in agreement with \ref{thm:CNN}. Moreover, convergence to the limiting  Gaussian distribution $(h^{(\ell)}_{i, \boldsymbol{\mu}}(\boldsymbol{X})[*], h^{(\ell)}_{j, \boldsymbol{\mu}}(\boldsymbol{X})[*])$, $i\neq j$, seems to be slower than in the fully connected case, if comparing this figure with Fig.\ \ref{fig:joint_distributiion_dependence}.

\section{Examples concerning the bounded moment condition (iii) of the \pseudoiid\ distribution}\label{app:moments}

The bounded moment condition (iii) of the \pseudoiid\ distribution in Definition \ref{pseudoiid} is a key condition in the distinct proof of \iid\ matrices taken in \cite{hanin_2021} (Lemma 2.9).  We show some examples of distributions in Figure \ref{fig:condition(iii)} which verify or violate the conditions we identified as sufficient to rigorously prove the convergence of random neural networks to Gaussian Processes in the large width limit. 

\begin{figure}
    \centering
    \includegraphics[scale=0.4,trim={29 0 0 0},clip]{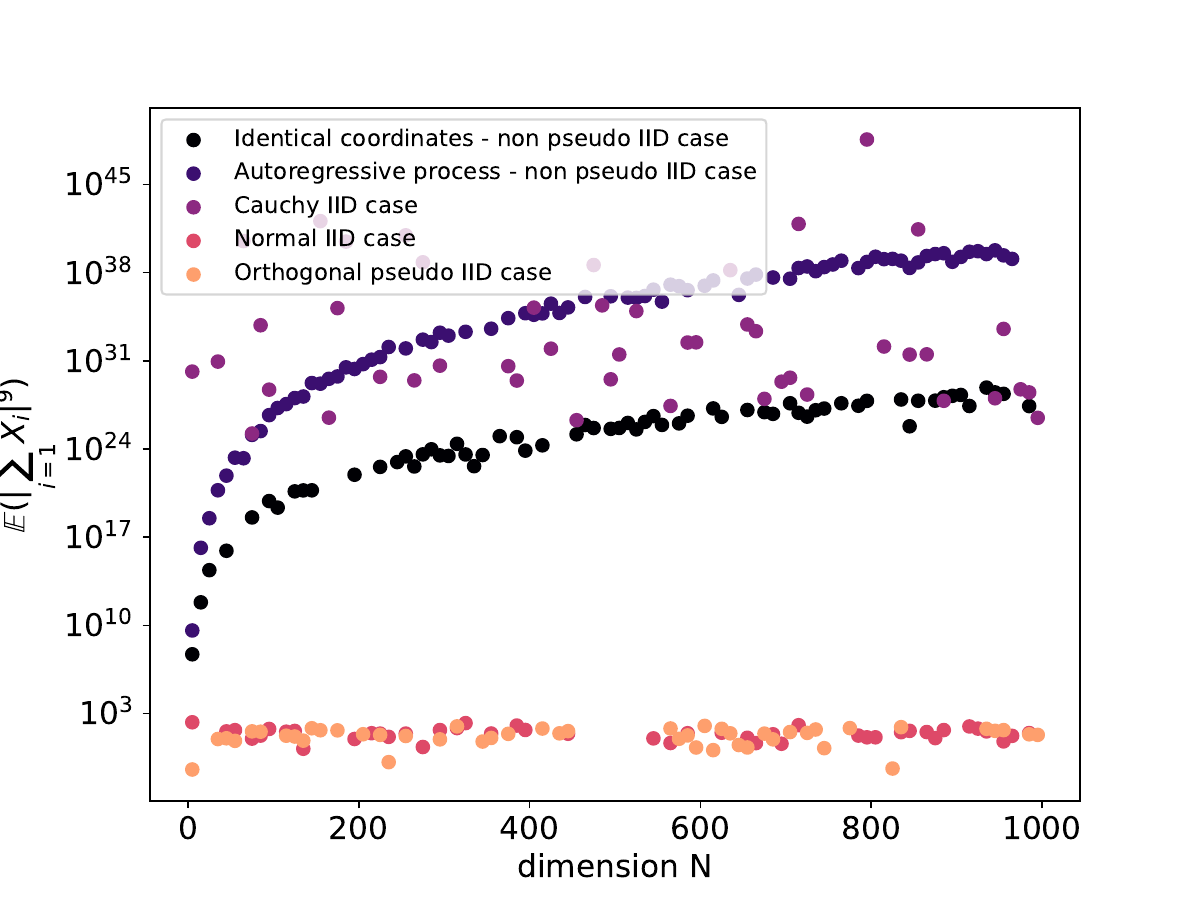}
    \caption{Plots of $\mathbb{E}\big| \sum_{j=1}^N X_j \big|^8$ against dimension $N$ for different cases where conditions (ii) and (iii) of the pseudo IID regime are either satisfied or violated. Condition (iii) is considered with $\mathbf{a}=(1, \cdots, 1)$ such that it becomes $\mathbb{E} \big| \sum_{j=1}^{N} X_{j} \big|^8 = K $, where $X=(X_1, \cdots, X_N)$ is regarded as one row of the weight matrix. The chosen distribution for the vector $X$ impacts whether the network is in the \pseudoiid\ regime. In the identical coordinates case, $X=(X_1,\cdots,X_1)$ is the concatenation of the same realisation $X_1$ sampled from a standard normal. Not only the traditional \iid\ assumption is broken as the coordinates are obviously dependent but also condition (iii) is violated, thus resulting in an unbounded expectation when growing the dimension $N$. The autoregressive Process $(X_1, \cdots, X_N)$ shown is obtained by $X_i = \epsilon_{i} + X_{i-1}$, where $\epsilon_{i}$ are \iid\ multivariate Gaussian noises of dimension $N$. The correlation between the coordinates does not decrease fast enough with the dimension to get a bound on the computed expectation and condition (iii) is once again violated. On the contrary, from the plot produced by sampling \iid\ Cauchy distributions, it is not obvious whether condition (iii) holds. Nonetheless, condition (ii) which ensures the finiteness of the variance is not, thus a random network initialized with \iid\ Cauchy weights falls outside the scope of our identified broad class of distributions to ensure a convergence towards a Gaussian Process. The last cases of samples taken either from scaled multivariate normals in red or uniformly sampled from the unit sphere in orange (with appropriate scaling such that condition (ii) holds) show that there exists a bound independent from the dimension on the expectation of interest. The empirical expectations are taken considering averages over 100000 samples.}
    \label{fig:condition(iii)}
\end{figure}

\section{Structured sparse weight matrices in the fully connected setting}\label{app:sparse}

Fig. \ref{fig:sparse} shows examples of permuted block-sparse weight matrices used to initialize a fully connected network in order to produce the plots given in Fig. \ref{fig:histograms_gaussianity}--\ref{fig:independence_plots}.

\begin{figure}
    \centering
    \begin{tabular}{ m{0.3\textwidth} m{0.3\textwidth} m{0.3\textwidth}}
          \centering width $= 3$ & \centering width $= 30$ & \parbox{0.3\textwidth}{\centering
  width $ = 300$} \\ 
            \includegraphics[width=0.3\textwidth]{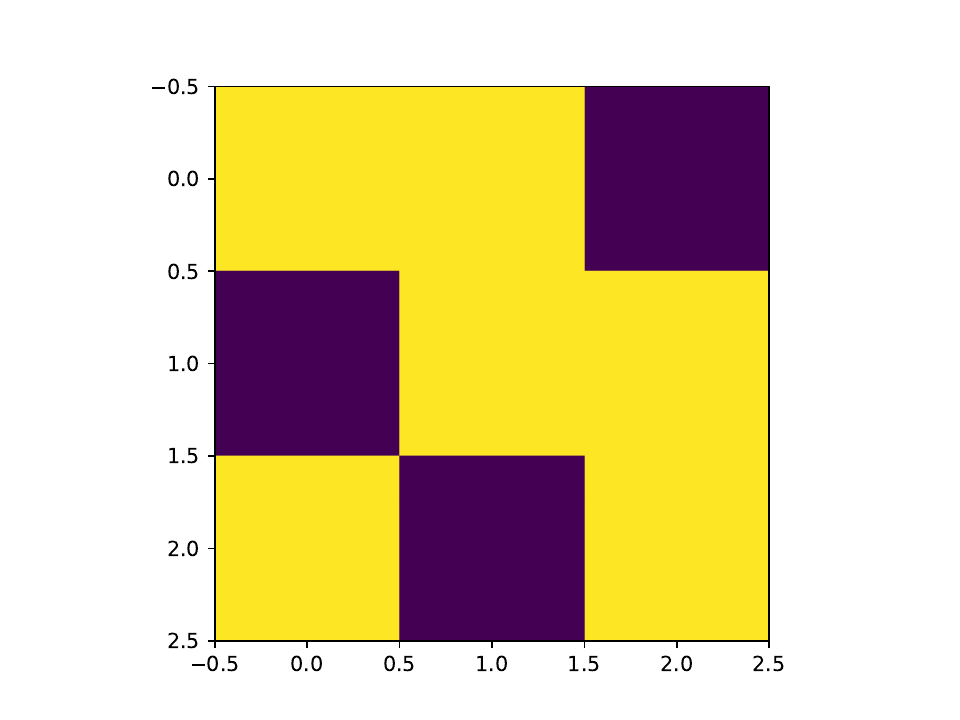}&
            \includegraphics[width=0.3\textwidth]{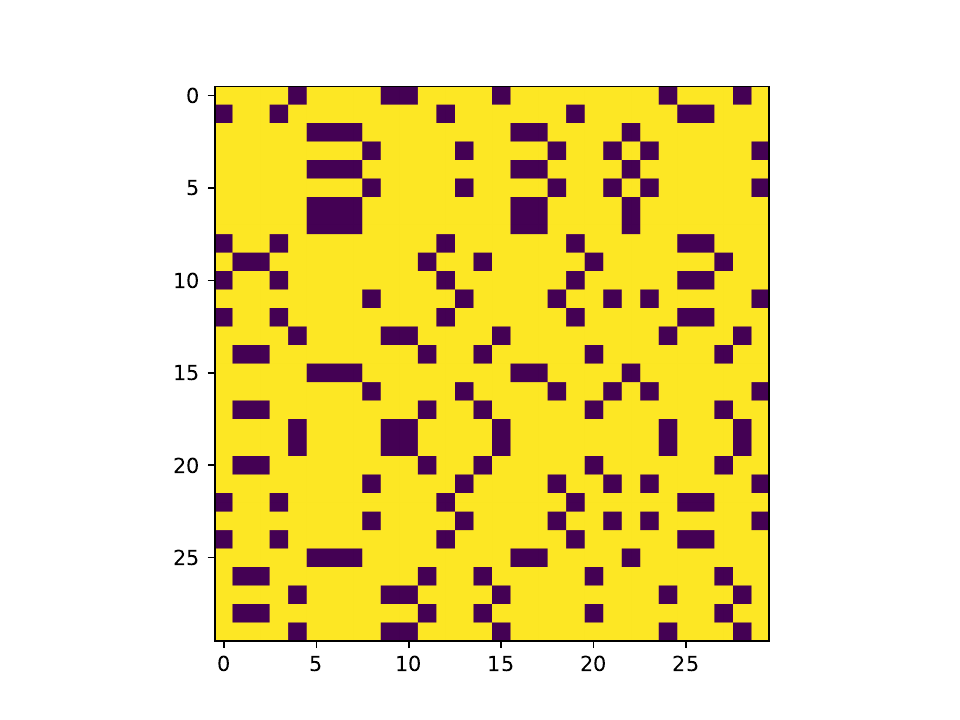}&
            \includegraphics[width=0.3\textwidth]{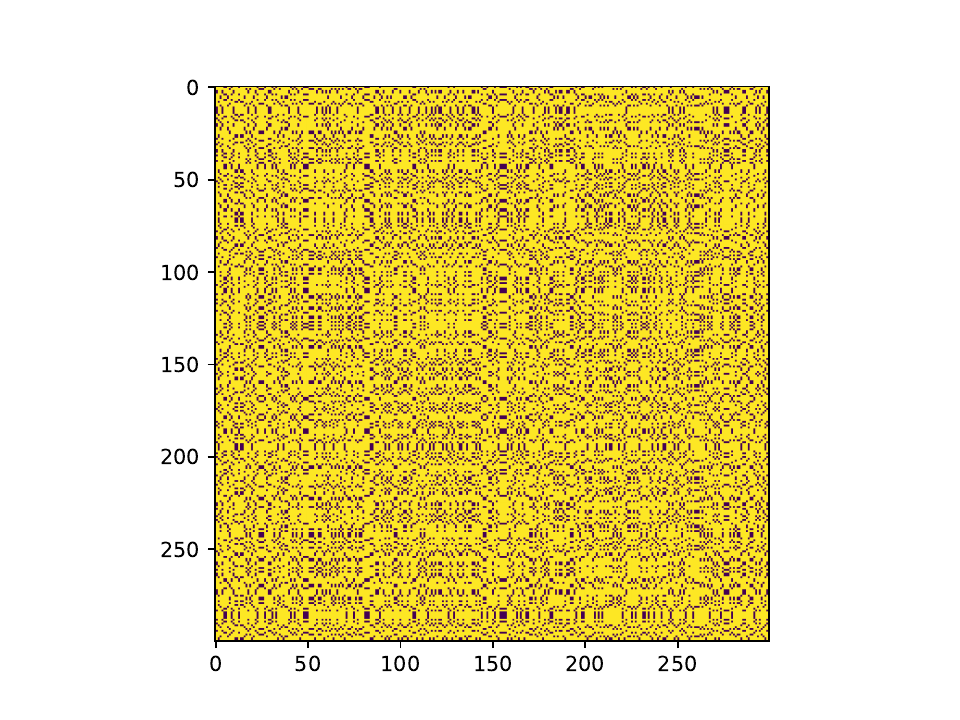}
    \end{tabular}
    \vspace{0.5em} 
    \caption{Example of a permuted block-sparse weight matrix at initialization of a fully connected network with increasing width $N$. The matrix is initialized with identically and independently sampled diagonal blocks from a scaled Gaussian. Its rows and columns are then randomly permuted in order to satisfy the \pseudoiid\ conditions. The block size is set to be $\lceil 0.2 * N \rceil$. Entries in yellow are zero and entries in black are nonzero.}
    \label{fig:sparse}
\end{figure}


\end{document}